\DeclareMathAlphabet{\pazocal}{OMS}{zplm}{m}{n}
\newtheorem{theorem}{Theorem}[section]\numberwithin{equation}{section}
\newtheorem{corollary}[theorem]{Corollary}
\newtheorem{definition}[theorem]{Definition}
\newtheorem{fact}[theorem]{Fact}
\newtheorem{lemma}[theorem]{Lemma}
\newtheorem{remark}[theorem]{Remark}
\newtheorem{problem}[theorem]{Problem}
\tikzset{
	>=stealth',
	true/.style={
		rectangle,
		draw=black, very thick,
		text width=6.5em,
		minimum height=2em,
		text centered,
		fill=gray, opacity = 0.5},
	punkt/.style={
		rectangle,
		rounded corners,
		draw=black, very thick,
		text width=6.5em,
		minimum height=2em,
		text centered},
	est/.style={
		circle,
		draw=black, very thick,
		text centered},
	shade/.style={
		circle,
		draw=black, very thick, fill=gray!50,
		text centered},
	weight/.style={
		circle,
		draw=black, very thick,
		text width=6.5em,
		minimum height=2em,
		text centered},
	pil/.style={
		->,
		thick,
		shorten <=2pt,
		shorten >=2pt,},
	double/.style={
		<->,
		thick,
		shorten <=2pt,
		shorten >=2pt,},
	dash/.style={
		dashed,
		thick,
		shorten <=2pt,
		shorten >=2pt,},
	dashdouble/.style={
		<->,
		dashed,
		thick,
		shorten <=2pt,
		shorten >=2pt}}
\newcommand{\eps}{\varepsilon}
\newcommand{\wh}{\widehat}
\newcommand{\dkl}{D_{\mathrm{KL}}}
\newcommand{\R}{\mathbb{R}}
\newcommand{\N}{\mathcal{N}}
\newcommand{\abs}[1]{|#1|}
\newcommand*{\centernot}{%
  \mathpalette\@centernot
}
\def\@centernot#1#2{%
  \mathrel{%
    \rlap{%
      \settowidth\dimen@{$\m@th#1{#2}$}%
      \kern.5\dimen@
      \settowidth\dimen@{$\m@th#1=$}%
      \kern-.5\dimen@
      $\m@th#1\not$%
    }%
    {#2}%
  }%
}
\newcommand{\indep}{\perp\mkern-9.5mu\perp}
\newcommand{\mathdash}{\relbar\mkern-9mu\relbar}
\DeclareMathOperator{\cov}{cov}
\DeclareMathOperator{\poly}{poly}
\DeclareMathOperator*{\E}{\mathbb{E}}
\DeclareMathOperator*{\argmin}{arg\,min}
\newcommand{\unif}{\pazocal{U}} 
\newcommand{\ignore}[1]{}
\DeclarePairedDelimiterX{\infdivx}[2]{(}{)}{%
  #1\;\delimsize\|\;#2%
}
\newcommand{\kl}{D_{\mathrm{KL}}\infdivx}
\DeclareMathOperator{\pa}{{pa}}      
\DeclareMathOperator{\ch}{{ch}}      
\DeclareMathOperator{\an}{{an}}   
\DeclareMathOperator{\de}{{de}}   
\DeclareMathOperator{\nd}{{nd}}   
\DeclareMathOperator{\var}{ var}       
\newcommand{\vertiii}[1]{{\left\vert\kern-0.25ex\left\vert\kern-0.25ex\left\vert #1 
		\right\vert\kern-0.25ex\right\vert\kern-0.25ex\right\vert}}
\newcommand\numberthis{\addtocounter{equation}{1}\tag{\theequation}}
\newcommand{\given}{\,|\,}
\newcommand{\T}{^\top}
\newcommand{\prob}{\mathrm{Pr}}
\DeclareMathOperator{\sk}{sk}
\newcommand{\mec}{\overline{T}}
\newcommand{\drtr}{\mathcal{T}}
\newcommand{\pltr}{\widetilde{\mathcal{T}}}
\newcommand{\defeq}{\vcentcolon=}
\title{Optimal estimation of Gaussian (poly)trees}
\author{
Yuhao Wang\\
National University of Singapore\\
\texttt{yuhaowang@u.nus.edu}
\and
Ming Gao\\
University of Chicago\\
\texttt{minggao@chicagobooth.edu}
\and
Wai Ming Tai\\
Nanyang Technological University\\
\texttt{waiming.tai@ntu.edu.sg}
\and
Bryon Aragam\\
University of Chicago\\
\texttt{bryon@chicagobooth.edu}
\and
Arnab Bhattacharyya\\
National University of Singapore\\
\texttt{arnabb@nus.edu.sg}
}
\date{}
\begin{document}
\maketitle
\begin{abstract}
  We develop optimal algorithms for learning undirected Gaussian trees and directed Gaussian polytrees from data. We consider both problems of distribution learning (i.e. in KL distance) and structure learning (i.e. exact recovery).
  The first approach is based on the Chow-Liu algorithm, and learns an optimal tree-structured distribution efficiently.
  The second approach is a modification of the PC algorithm for polytrees that uses partial correlation as a conditional independence tester for constraint-based structure learning. We derive explicit finite-sample guarantees for both approaches, and show that both approaches are optimal by deriving matching lower bounds. Additionally, we conduct numerical experiments\footnote{\href{https://github.com/YohannaWANG/Polytree}{https://github.com/YohannaWANG/Polytree}} to compare the performance of various algorithms, providing further insights and empirical evidence.
\end{abstract}

\section{Introduction}
Graphical models are a classical statistical tool for efficiently modeling data with rich, combinatorial structure. Directed acyclic graphs (DAGs) are widely used to capture causal relationships among complex systems. Probabilistic graphical models defined on DAGs, known as Bayesian networks \citep{pearl2000models}, have found broad applications in various disciplines, from biology \citep{markowetz2007inferring, zhang2013integrated, altay2010inferring}, social science \citep{gupta2008linking}, knowledge representation \citep{van2008handbook}, data mining \citep{heckerman1997bayesian}, recommendation systems \citep{hsu2012design}, legal decision making \citep{thagard2004causal}, and more.
When this structure is known in advance, it is straightforward to exploit this structure for inference tasks, among other things \citep{wainwright2008graphical}. When this structure is unknown, it is must first be learned from data, which is the difficult problem of \emph{structure learning} in graphical models.
First, observational data only reveal the Markov equivalence class, captured by a completed partially directed acyclic graph (CPDAG) \citep{andersson1997characterization}. Classical approaches to learning a CPDAG from data include the PC algorithm \citep{spirtes1991algorithm,kalisch2007estimating} and GES \citep{chickering2002optimal,nandy2018high}.
Moreover, it is also known that the general problem of learning DAGs from observational data is an NP-complete problem \citep{chickering1996learning, chickering2004large, chen2019causal}, although a few polynomial-time algorithms have been proposed for special cases \citep{ghoshal2017learning, chen2019causal, park2020identifiability, gao2020polynomial}.

An important unresolved problem in this direction is to characterize the sample complexity of structure learning, or the minimum number of samples required to learn the graph from data. The past decade has produced a broad literature on this problem, mainly focused on \emph{undirected} graphical models (i.e. Markov random fields
\citep{wainwright2019high,wang2010information,santhanam2012information}). By comparison, much less is known about DAGs.
In this paper, we study in detail the simplest unresolved DAG model, namely, directed Gaussian trees. Perhaps surprisingly, despite its  simplicity, and unlike in the undirected case, the optimal sample complexity of learning directed Gaussian trees has remained an open problem. Suppose we are given sample access to a Gaussian distribution $P=\N(0,\Sigma)$, where the goal is to learn a DAG $G$ that represents $P$. While we defer formal definitions to \cref{sec:prelim},
we can broadly summarize three different problems to be addressed here at the outset: 
\begin{enumerate}
    \item (Non-realizable setting) When $P$ is an arbitrary Gaussian (i.e. not representable by any tree), how many samples are required to learn a tree-structured distribution $Q$ that is optimally close to $P$?
    \item (Realizable setting) When $P$ itself is tree-structured, how samples are required to learn a tree-structured distribution $Q$ that is optimally close to $P$?
    \item (Faithful setting) When $P$ is faithful to some tree $T$, how samples are required to learn $T$ itself (i.e. the tree structure) up to Markov equivalence?
\end{enumerate}
It is well-known that each of these problems is solvable---in principle---under different assumptions. For example, the celebrated Chow-Liu algorithm solves the first two problems, however, whether or not this can be improved with a more efficient algorithm is unknown. The same goes for the third setting: The famous PC and GES algorithms can find a faithful DAG (even without the tree assumption), however, their optimality remains unresolved.
One of our main contributions is to study all three problems in a single unified setting, allowing for apples-to-apples comparisons of the assumptions required, and the resulting (optimal) sample complexity for each. 

Although faithfulness can be a strong assumption in practice, we emphasize that to the best of our knowledge, no optimality results under this assumption are known. Thus, our analysis presents a possible first foray in this direction. Previous work has shown that faithfulness is notoriously challenging to analyze \citep[e.g.][]{uhler2013,gao2023optimal}.

\subsection{Our Contributions}
We are given $n$ i.i.d. samples $X=(X^{(1)},\ldots,X^{(n)})\in\R^{n\times d}$ from an unknown Gaussian $P$. 
We consider two distinct but canonical problems: \emph{Distribution learning} and \emph{structure learning}. The difference between these two problems lies in the error metric: In distribution learning, we seek to learn $P$ in KL-divergence, with no respect for underlying structure (i.e. there may be no structure at all), whereas in structure learning, we assume \emph{a priori} the existence of a tree $T$ and seek to learn $T$ exactly, with no respect for the distribution $P$. Structure learning is known to require restrictive assumptions, and thus part of our effort is to illustrate how different assumptions lead to different conclusions and sample complexities.
With this in mind, our results consider three progressively stronger assumptions on $P$: Non-realizable, realizable, and faithful.

Below, we outline our main contributions at a high-level, while deferring precise statements and problem formulations to \cref{sec:chow-liu tree} and \cref{sec:faithfultree}.

\paragraph{Non-realizable Setting}
Without making additional assumptions on $P$, we show that\footnote{$\widetilde{\Theta}$ is used to ignore potential log factors.}
\begin{align}
\label{eq:complexity:nrlz}
    n = \widetilde{\Theta}\Big( \frac{d^2}{\eps^2} \Big)
\end{align}
are necessary and sufficient to learn (with probability at least $2/3$) a tree-structured distribution that is $\eps$-close to the closest tree-structured distribution for $P$.

\paragraph{Realizable Setting}
When $P$ itself is Markov to a tree $T$ (i.e. it is \emph{tree-structured}), then 
\begin{align}
\label{eq:complexity:rlz}
    n = \widetilde{\Theta}\Big( \frac{d}{\eps}  \Big)
\end{align}
are necessary and sufficient to learn (with probability at least $2/3$) a tree-structured distribution that is $\eps$-close to $P$ itself.

\paragraph{Faithful Polytrees}
Switching our goal from learning the closest tree-structured distribution to structure learning, we additionally assume that $P$ is faithful to some \emph{polytree} $T$.
We show that the optimal sample complexity of learning $\mec$, the CPDAG of $T$, is 
\begin{align}
\label{eq:complexity:sl}
    n = \Theta\bigg(\frac{\log d}{c^2}\bigg),
\end{align}
where $c$ is a faithfulness parameter defined in \eqref{defn:strongtreefaith}.

Clearly, and unsurprisingly, realizable distribution learning is easier than the non-realizable case. A more interesting question is how to compare these to structure learning. In \cref{sec:discussion}, we conclude with a discussion and comparison of these two cases, with some intriguing directions for future work.

\subsection{Other Related Work}

\paragraph{Learning Bayesian Networks}

Structure learning of Bayesian networks
has been extensively studied, and the reader may consult one of several overviews for more details and background \citep{spirtes2000causation, pearl2000models, koller2009probabilistic, murphy2012machine,peters2017elements,maathuis2018handbook,squires2022causal}. 
Classical approaches assume faithfulness, a condition that permits learning of the Markov equivalence class, such as constraint-based methods \citep{spirtes1991algorithm,friedman2013learning} and score-based approaches \citep{chickering2002optimal,nandy2018high}.
A different strand of research has explored a range of alternative distributional assumptions that allow for effective learning such as non-gaussianity \citep{shimizu2006linear,shimizu2014lingam, wang2020high}, non-linearity \citep{hoyer2008nonlinear,zhang2009} or equal error variances \citep{peters2014identifiability,ghoshal2017learning,ghoshal2018learning,chen2019causal,gao2020polynomial}.

When it comes to the tree-structured graphical model of a distribution, the classical Chow-Liu algorithm \citep{ChowL68} can recover the skeleton of a non-degenerate polytree in the equivalence class. Furthermore, 
\citep{chow1973consistency} demonstrate that as the number of samples approaches infinity, the Chow-Liu algorithm is \emph{consistent}. One of the first papers to consider the problem of learning polytrees was \citep{rebane1987recovery}, after which \citep{dasgupta1999polytree} showed that learning polytrees is NP-hard in general. 
\citep{srebro2003maximum} has shown that the related problem of finding the maximum likelihood graphical model with bounded treewidth is also NP-hard.
Recently Tan et al. \citep{tan2010learning, tan2011learning} investigated the recovery difficulty of trees and forests, while Liu et al. \citep{liu2011forest} adopted a nonparametric approach using kernel density estimates. The Chow-Liu algorithm has also been applied for learning latent locally tree-like graphs~\citep{anandkumar2013learning}.

\paragraph{Sample Complexity of Structure Learning}
Early work to consider the sample complexity problem for Bayesian networks includes \citep{friedman1996, zuk2012}.
More recently, for distribution learning over finite alphabets,
\citep{daskalakis2020tree, daskalakis2021sample} showed that $d$-variable tree-structured Ising models can be learned computationally-efficiently to within total variation distance $\eps$ from an optimal $O(d\log d/\eps^2)$ samples.
Around the same time, \citep{bhattacharyya2021near} derived explicit sample complexity bounds for the Chow-Liu algorithm of $\widetilde{O}(d\eps^{-1})$ for trees on $d$ vertices, and $d^2\eps^{-2}$ samples for a general distribution $P$. 
\citep{choo2023learning} further extend \citep{bhattacharyya2021near} into $d$-polytree when the underlying graph skeleton is known. 

The literature on structure learning is comparatively deeper; however, it has traditionally forgone concerns about optimality and lower bounds. As this is our main focus, we focus here on prior work on optimal algorithms.
\citep{ghoshal2017information} first established lower bounds for a range of DAG models, after which \citep{gao2022optimal} showed that a variant of the algorithm from \citep{chen2019causal} achieves optimal sample complexity of $n \asymp q \log(d/q)$ for equal variance DAGs \citep{peters2014identifiability,loh2014high}, where $q$ is the maximum number of parents and $d$ is the number of nodes.
To the best of our knowledge, optimality results and lower bounds in the faithful setting are missing, one exception is the sub-problem of neighbourhood selection \citep{gao2023optimal}, and one of our main contributions is to partially fill this gap. 
We mention prior work that considers consistency and upper bounds under faithfulness \citep{kalisch2007estimating,nandy2018high,rothenhausler2018causal},
relaxation and improvement on classical methods \citep{chickering2020statistically,marx2021weaker,lam2022greedy},
and recent progress on learning polytrees \citep{gao2021efficient,azadkia2021fast,tramontano2022learning,jakobsen2022structure}.

Learning polytrees is among the easiest tasks in learning DAGs and has received attention in \citep{ChowL68, karger2001learning, rebane2013recovery, nie2014}. 
The crucial advantage of such networks is that they allow for a more efficient solution of the inference task \citep{pearl1988probabilistic, guo2002survey}.
The complexity of polytree learning has been studied in several
works \citep{safaei2013learning, gaspers2015finding, gruttemeier2021parameterized}. 
A recent work in \citep{gao2021efficient} shows that learning polytrees is more manageable than general DAG models, for which they establish clear conditions for the identifiability and learnability of nonparametric polytrees in polynomial time. Some other earlier works such as reconstruction of evolutionary trees can be found in \citep{buneman1971recovery, chang1991reconstruction, chang1996full, daskalakis2011evolutionary}. 
Besides, latent tree model is a class of latent variable models in which the graph may be a forest has received considerable attention \citep{choi2011learning, tan2011learning, song2011kernel, anandkumar2011spectral, parikh2011spectral, mossel2013robust, song2014nonparametric, drton2017marginal}.
Specifically, \citep{anandkumar2011spectral} shows that the structure of multivariate latent tree models can be learned with a sample complexity depends solely on intrinsic spectral properties of the distribution. (also see survey paper \citep{mourad2013survey} for more details).
\citep{anandkumar2012learning} proved that a $\poly(d, r)$ sample and computational requirements serves as a good approximation of a $r$-component mixture of $d$-variate graphical models.

Furthermore, developing a (conditional) independence tester with respect to mutual information with $o(1/\eps^2)$ sample complexity was posed as
an open problem in \citep{canonne2018testing}. In \citep{canonne2018testing}, they have shown that both Ising model \emph{Goodness-of-fit Testing} and Ising model \emph{Independece Tesing} can be solved from $\poly(d, 1/\eps)$ samples in polynomial time. More details related to the \emph{distribution property testing} can be found in \citep{rubinfeld2012taming, canonne2020survey, goldreich2017introduction,bhattacharyya2022property}.

\section{Preliminaries and Tools}\label{sec:prelim}

\paragraph{Preliminary Notions} We employ standard asymptotic notation ${O}(\cdot), \Omega(\cdot), \Theta(\cdot)$; and as usual, $\widetilde{\cdot}$ indicates up to log factors. For example, if $f=\widetilde{\Theta}(g)$ then $f=O(g (\log g)^{c_1})$ and $f=\Omega(g/(\log g)^{c_2})$ for some constants $c_1$ and $c_2$.
We say $f \lesssim g$ and $f\gtrsim g$ if $f\le Cg$ and $f\ge cg$ for some positive constants $C$ and $c$.

\paragraph{Graphical Definitions}
For a directed acyclic graph (DAG) $G=(V,E)$,
for each node $k \in V$, $\pa(k) = \{j:(j,k)\in E\}$ denotes its parent nodes, descendants $\de(k)$ denotes the nodes that can be reached by $k$ and $\nd(k)=V\setminus \de(k)$ denotes the nondescendants.
The skeleton of $G$, $\sk(G)$, is the undirected graph formed by removing directions of all the edges in $G$.
For any $j,\ell,k \in V$, a triple $(j,\ell,k)$ is called unshielded if both $j,k$ are adjacent to $\ell$ but not adjacent to each other, graphically $j-\ell-k$; and is called a $v$-structure if additionally $j,k$ are parents of $\ell$, i.e. $j\rightarrow \ell \leftarrow k$.
The in-degree of $G$ is $\max_k |\pa(k)|$.
A tree is an undirected graph in which any two nodes are connected by exactly one path. 
A directed tree is a directed graph in which, for some root node $u$, and any other node $v$, there is exactly one directed path from $u$ to $v$.
A polytree is a directed graph whose skeleton to be a tree.
Denote the set of directed trees (resp. polytrees) over $d$ nodes to be $\drtr$ (resp. $\pltr$).
Note that a directed tree is a polytree with in-degree equal to one except the root node who has no parent and $\drtr\subseteq \pltr$. 

\paragraph{Gaussian Bayesian Networks}
Given a random vector $X = (X_1, \dots, X_d)$ drawn from a distribution $P$, a DAG $G$ is a Bayesian network for $X$ (or precisely, its joint distribution $P$) if the following factorization holds:
\begin{align}\label{eq:markov}
P(X) = \prod_{k=1}^d P(X_k\given X_{\pa(k)})\,.
\end{align}
Here, we use $X=V=[d]$ interchangeably with some abuse of notation.
From now on, we assume that $P=\mathcal{N}(0,\Sigma)$ throughout.
Since $P$ is Gaussian, we can always express $X$ as the following linear structural equation model (SEM):
\begin{align}\label{eq:sem}
    X_{k} = \beta_k\T X + \eta_{k}\,, \quad \eta_k\sim\N(0,\sigma_k^2),
\end{align}
where $\beta_k\in\mathbb{R}^d$ is supported on $\pa(k)$ and the $\{\eta_k\}_{k=1}^d$ are mutually independent. A Gaussian distribution is said to be $T$-structured for some directed tree $T\in \drtr$ (or simply tree-structured when the specific $T$ is not important in the context) if it satisfies \eqref{eq:markov} with respect to some tree $T$.
For a distribution $P$ and a directed tree $T$, let
\begin{align*}
    P_T \defeq \argmin_{\text{$T$-structured distribution }Q}\kl{P}{Q},
\end{align*}
where $\kl{\cdot}{\cdot}$ denotes the KL-divergence. In this paper, we consider both general Gaussians (non-realizable case) as well as tree-structured distributions (realizable and faithful cases), i.e. \eqref{eq:sem} holds for some directed (poly)tree $T$.

\paragraph{Faithfulness and Markov Equivalence Class}
For the purpose of structure learning, a common assumption is faithfulness, under which the DAG is identified up to its Markov equivalence class (MEC). We assume the reader is familiar with standard graphical concepts such as $d$-separation; 
see \citep{koller2009probabilistic} for more background.
\begin{definition}[Faithfulness]\label{defn:faith}
We say a distribution $P$ is faithful to a DAG $G$ if for any $j,k\in V$ and $S\subseteq V\setminus \{j,k\}$,
\begin{align*}
    X_j\indep X_k\given X_S \Rightarrow j \text{ and } k \text{ are d-separated by } S \,.
\end{align*}
\end{definition}
Equivalently, for any two nodes $j$ and $k$ not $d$-separated by set $S$, faithfulness requires $X_j\not\indep X_k\given X_S$.
The MEC of a DAG $G$ is the set of DAGs that encode the same set of conditional independencies as $G$, which is usually represented by a CPDAG, denoted by $\overline{G}$.
A standard approach to learning a CPDAG under faithfulness is the PC algorithm \citep{spirtes1991algorithm}, which relies on conditional independence testing to recover the skeleton and orient the edges.
While faithfulness can be a strong assumption~\citep{uhler2013}, it is known that weaker assumptions suffice. For example:
\begin{definition}[Restricted faithfulness]\label{defn:resfaith}
We say a distribution $P$ is restricted faithful to a DAG $G$ if
\begin{enumerate}
    \item For any $(j,k)\in E$, $S\subseteq V\setminus \{j,k\}$, $X_j\not\indep X_k\given X_S$;
    \item For any unshielded triple $j-\ell-k$, if this is a v-structure, then $X_j\not\indep X_k \given S$ for any $S\subseteq  V\setminus\{j,k\}$ with $\ell\in S$; if not, then $X_j\not\indep X_k \given X_S$ for any $S\subseteq  V\setminus\{j,k,\ell\}$.
\end{enumerate}
\end{definition}
 
Under general faithfulness, all conditional independence relationships imply $d$-separations in a DAG. In other words, all instances of $d$-connections lead to conditional dependence. On the contrary, restricted faithfulness requires only a subset of $d$-connections to imply conditional dependence.
Conventionally, the first part of \cref{defn:resfaith} is also named \textit{adjacency-faithfulness} and the second part is named \textit{orientation-faithfulness}. With our focus on the setup where the underlying DAG is a polytree, restricted faithfulness can be further relaxed as we will discuss in \cref{sec:faithfultree}.

\section{Learning Tree-structured Gaussians}\label{sec:chow-liu tree}

We begin by studying the sample complexity for learning tree-structured Gaussian distributions. For any $\eps>0$, we would like to devise an algorithm taking samples drawn from a Gaussian $P$ that returns a directed tree $\wh{T}\in\drtr$ and a distribution $P_{\wh{T}}$ that is Markov to $\wh{T}$ such that
\begin{align*}
    \kl{P}{P_{\wh{T}}} \leq \min_{T\in\drtr} \kl{P}{P_{T}} + \eps\,,
\end{align*}
We seek to achieve this goal with a minimal number of samples.
Notably, for any $T\in\drtr$, $\kl{P}{P_{T}}$ can be expressed as 
\begin{align*}
    -\sum_{i=1}^d I(X_i; X_{\pa(i)}) - H(X) + \sum_{i=1}^d H(X_i), \numberthis\label{eq:kl_p_pt}
\end{align*}
where $H$ is the entropy function and $I$ is the mutual information.

\subsection{Distribution Learning Upper Bounds}

The classical Chow-Liu algorithm \citep{ChowL68} builds the maximum weight spanning tree where the weight of the ``potential'' edge between nodes $j$ and $k$ is the estimated mutual information $\wh{I}(X_j, X_k)$ from data. 
Although its return is an undirected graph, we modify the output to be any directed tree whose skeleton matches the undirected graph with light abuse of notation.
This is because any $T\in\drtr$ with the same skeleton will share the same $P_T$, which is the target of distribution learning analyzed in the sequel.

\begin{algorithm}[ht]
\caption{Modified \texttt{Chow-Liu} algorithm}
\label{algo:chow-liu}
\textbf{Input:} $n$ i.i.d. samples $(X^{(i)}_1,\ldots,X^{(i)}_d)$\\
\begin{enumerate}
    \item For each $j=1,\dots,d$:
    \begin{enumerate}
        \item $\wh{\sigma}_j^2 \gets \frac{1}{n}\sum_{i=1}^n (X_j^{(i)})^2$
    \end{enumerate}
    \item For each pair $(j,k), 1\leq j<k\leq d$:
    \begin{enumerate}
        \item $\wh{\rho}_{jk} \gets \frac{1}{n}\sum_{i=1}^n X_j^{(i)}X_k^{(i)}$
    \end{enumerate}
    \item For each pair $(j,k), 1\leq j<k\leq d$:
    \begin{enumerate}
        \item $\wh{I}(X_j;X_k)\gets -\frac{1}{2}\log\big({1-\frac{\wh{\rho}_{jk}^2}{\wh{\sigma}_j^2\wh{\sigma}_k^2}}\big)$ which is same as $\frac{1}{2}\log(1+\frac{\wh{\beta}_{jk}^2\wh{\sigma}_j^2}{\wh{\sigma}_{k\mid j}})$ defined in Section \ref{sec:cmit}
    \end{enumerate}
    \item $G \gets$ the weighted complete undirected graph on $[d]$ whose edge weight for $(j,k)$ is $\wh{I}(X_j;X_k)$
    \item $\wh{S} \gets$ the maximum weighted spanning tree of $G$
    \item $\wh{T} \gets$ any directed tree with skeleton to be $\wh{S}$ 
\end{enumerate}
\textbf{Output:} A directed tree $\wh{T}$
\end{algorithm}

Our first result gives an upper bound on the sample complexity for distribution learning in the non-realizable setting:
\begin{restatable}{theorem}{thmupperboundnonrealizable}
\label{tm:nonrealizable}

Let $P$ be a Gaussian distribution. Given $n$ i.i.d. samples from $P$, for any $\eps, \delta>0$, if $n \gtrsim\frac{d^2}{\eps^2}\log \frac{d}{\delta}$, then $\wh{T}$ returned by \cref{algo:chow-liu} satisfies
\begin{align*}
    \kl{P}{P_{\wh{T}}} \leq \min_{T\in\drtr} \kl{P}{P_{T}} + \eps,
\end{align*}
with probability at least $1-\delta$.   
\end{restatable}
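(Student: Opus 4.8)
The plan is to reduce the KL-approximation guarantee to a statement about how accurately \cref{algo:chow-liu} estimates the pairwise mutual informations, and then to prove a \emph{per-pair} concentration bound that is uniform in the underlying correlation. First I would rewrite the objective: by \eqref{eq:kl_p_pt}, for any $T\in\drtr$ we have $\kl{P}{P_T} = C - \sum_{(j,k)\in\sk(T)} I(X_j;X_k)$, where $C = -H(X)+\sum_i H(X_i)$ does not depend on $T$ (each non-root node contributes the mutual information with its unique parent, and $I(X_i;X_{\pa(i)})$ equals the symmetric pairwise quantity on the corresponding skeleton edge). Writing $w(T):=\sum_{(j,k)\in\sk(T)} I(X_j;X_k)$ for the true weight and $\wh w(T)$ for the estimated weight built from the $\wh I(X_j;X_k)$, minimizing $\kl{P}{P_T}$ is exactly maximizing $w(T)$, and the algorithm returns the maximum-weight spanning tree $\wh T$ under $\wh w$. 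A standard exchange argument (using $\wh w(\wh T)\ge \wh w(T^\star)$ and the fact that every spanning tree has $d-1$ edges) then gives, for $T^\star := \argmax_T w(T)$,
\[
\kl{P}{P_{\wh T}} - \min_{T\in\drtr}\kl{P}{P_T} = w(T^\star) - w(\wh T) \;\le\; 2(d-1)\,\eta ,
\qquad \eta := \max_{j<k}\big|\wh I(X_j;X_k)-I(X_j;X_k)\big| .
\]
Thus it suffices to show $\eta \le \eps/(2(d-1))$ with probability $1-\delta$.

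The heart of the proof is bounding $|\wh I(X_j;X_k)-I(X_j;X_k)|$ for a single pair. Writing $r$ for the true correlation, the error equals $\tfrac12|\log(\wh z/z)|$ with $z=1-r^2$ and $\wh z = 1-\wh\rho_{jk}^2/(\wh\sigma_j^2\wh\sigma_k^2)$. Naively bounding the derivative of $-\tfrac12\log(1-x)$ fails, since it blows up as $r\to\pm1$, and in the non-realizable setting $r$ can be arbitrarily close to $\pm1$. The key observation is that the estimator is invariant under rescaling each coordinate, so I may standardize to $\Sigma_{jj}=\Sigma_{kk}=1$, after which the fluctuations of $\wh z$ scale \emph{proportionally} to $z$. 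Concretely, conditioning on the samples of $X_j$ and regressing $X_k$ on $X_j$ through the origin, the empirical residual second moment $\wh\sigma_{k\mid j}^2 = \wh\sigma_k^2-\wh\rho_{jk}^2/\wh\sigma_j^2 = \det(\wh\Sigma_{\{j,k\}})/\wh\sigma_j^2$ is distributed exactly as $\tfrac{1-r^2}{n}\chi^2_{n-1}$, with law independent of the realized $X_j$ values. Hence $\wh z/z = \tfrac1n\chi^2_{n-1}$ divided by $\wh\sigma_k^2$ — a ratio of quantities each concentrating at $1$ with \emph{relative} error $O(\sqrt{\log(1/\delta')/n})$, and crucially with no dependence on $r$. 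Two-sided chi-square tail bounds then give $|\wh I(X_j;X_k)-I(X_j;X_k)| = \tfrac12|\log(\wh z/z)| \lesssim \sqrt{\log(1/\delta')/n}$ with probability $1-\delta'$, provided $n\gtrsim\log(1/\delta')$.

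Finally I would union bound over all $\binom d2$ pairs with $\delta'=\delta/\binom d2$, yielding $\eta \lesssim \sqrt{\log(d/\delta)/n}$ with probability $1-\delta$. Combined with the reduction above, $\kl{P}{P_{\wh T}}-\min_T\kl{P}{P_T}\le 2(d-1)\eta \lesssim d\sqrt{\log(d/\delta)/n}$, which is at most $\eps$ once $n\gtrsim d^2\eps^{-2}\log(d/\delta)$, as claimed. The main obstacle, and the step requiring the most care, is exactly the per-pair bound: showing that the additive error in mutual information stays $O(1/\sqrt n)$ \emph{uniformly in} $r$, even as $I(X_j;X_k)$ diverges. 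This is where the Gaussian structure is essential — the sample correlation has fluctuations shrinking like $(1-r^2)$ — and where I would need to verify the distributional identity for $\wh\sigma_{k\mid j}^2$ (a Bartlett-type decomposition of the $2\times2$ Wishart) and apply chi-square concentration sharply enough to control $\log(\wh z/z)$ near the origin.
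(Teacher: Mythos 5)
Your proposal is correct, and its outer structure---reducing the KL gap to mutual-information estimation errors via optimality of the max-weight spanning tree under $\wh w$, then union bounding over all $\binom{d}{2}$ pairs---is exactly the paper's argument for \cref{tm:nonrealizable}. Where you genuinely depart from the paper is the per-pair bound, and your version is the more careful one. The paper asserts that $\abs{\wh{I}(X,Y)-I(X,Y)} < \eps/d$ follows from \cref{cor:rhosigma} (additive concentration of $\wh{\rho}_{jk}$ and $\wh{\sigma}_j^2$) together with the definition of $\wh{I}$, leaving the calculation implicit. As your own discussion points out, that implication is not straightforward: additive covariance errors of size $t$ are compatible with $\wh{r}=1$ (hence $\wh{I}=\infty$) whenever $1-r^2 \ll t$, so the naive Lipschitz argument breaks down precisely in the near-singular regime that the non-realizable setting permits. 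Your fix---standardizing, writing $1-\wh{r}^2 = \wh{\sigma}^2_{k\mid j}/\wh{\sigma}_k^2$, and invoking the exact Bartlett-type law $\wh{\sigma}^2_{k\mid j} \sim \frac{1-r^2}{n}\chi^2_{n-1}$ so that all fluctuations are \emph{relative} to $1-r^2$---is the right mechanism, and it yields the uniform-in-$r$ additive bound $\abs{\wh{I}-I}\lesssim \sqrt{\log(1/\delta')/n}$ that the theorem actually requires. The paper does have essentially this fact in its toolkit: \cref{lem:betasigmabound} asserts relative-error control of the conditional variances $\wh{\sigma}^2_{i\mid j}$, which is equivalent to your chi-square identity, but it too is justified only by reference to \cref{cor:rhosigma}, so your route is the self-contained and fully rigorous treatment of this step. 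In short: the paper's proof buys brevity by deferring to its lemma infrastructure; yours buys an explicit proof of the one step where the Gaussian structure (fluctuations of the residual variance scaling with $1-r^2$) is truly essential.
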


When $P$ is Markov to a tree (i.e. it is tree-structured), then the sample complexity improves:
\begin{restatable}{theorem}{thmupperrealizabe}
\label{tm:realizable}
Let $T^*$ be a directed tree and $P_{T^*}$ be a $T^*$-structured Gaussian.
Given $n$ i.i.d. samples from $P_{T^*}$,
for any $\eps, \delta>0$, if $n \gtrsim \frac{d}{\eps}\log\frac{d}{\delta}$, then $\wh{T}$ returned by \cref{algo:chow-liu}
satisfies
\begin{align*}
    \kl{P_{T^*}}{P_{\wh{T}}} \leq\eps,
\end{align*}
with probability at least $1-\delta$.
\end{restatable}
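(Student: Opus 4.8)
The plan is to turn the distribution-learning error into a weight gap between spanning trees, and then to isolate the one place where realizability buys a faster rate. First I would apply the decomposition \eqref{eq:kl_p_pt}: since every node of a directed tree has at most one parent, for any $T\in\drtr$,
\[ \kl{P_{T^*}}{P_T} = C - \weight(T), \qquad \weight(T) := \sum_{(j,k)\in T} I(X_j;X_k), \]
where the sum runs over the (undirected) edges of $T$ and $C = \sum_i H(X_i) - H(X)$ is independent of $T$. Because $P_{T^*}$ is $T^*$-structured we have $\min_{T}\kl{P_{T^*}}{P_T}=0$, so $C=\weight(T^*)$ and $T^*$ is a maximum-weight spanning tree for the true pairwise informations. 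As \cref{algo:chow-liu} returns a maximum-weight spanning tree $\wh T$ for the estimated informations $\wh I$, the quantity to control is exactly $\kl{P_{T^*}}{P_{\wh T}} = \weight(T^*)-\weight(\wh T)$.

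Next I would reduce this weight gap to a sum over a handful of confusable edge pairs via the exchange property of the graphic matroid. The strong exchange property gives a bijection $\phi: T^*\setminus \wh T \to \wh T\setminus T^*$ such that both $(T^*\setminus e)\cup\phi(e)$ and $(\wh T\setminus\phi(e))\cup e$ are spanning trees; optimality of $\wh T$ under $\wh I$ then forces $\wh I_{\phi(e)} \ge \wh I_e$, and optimality of $T^*$ under the true informations forces $I_e \ge I_{\phi(e)}$. Cancelling shared edges yields
\[ \kl{P_{T^*}}{P_{\wh T}} = \sum_{e\in T^*\setminus \wh T} \big(I_e - I_{\phi(e)}\big) = \sum_{e\in T^*\setminus \wh T} \gamma_e, \]
where $\gamma_e := I_e - I_{\phi(e)}\ge 0$, the sum has at most $d-1$ terms, and each appearing pair satisfies the reversal $\wh I_{\phi(e)} \ge \wh I_e$. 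Hence it suffices to guarantee that no selected swap has a large true gap: if every appearing pair has $\gamma_e \lesssim \log(d/\delta)/n$, the total is $\lesssim d\log(d/\delta)/n$, which is at most $\eps$ once $n \gtrsim (d/\eps)\log(d/\delta)$.

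The crux, and the step I expect to be the main obstacle, is a tail bound for the reversal event that is \emph{linear} in the gap: for any true edge $e$ and non-edge $f$ with $\gamma = I_e - I_f \ge 0$,
\[ \pr\!\big[\,\wh I_f \ge \wh I_e\,\big] \le \exp\!\big(-\Omega(n\gamma)\big). \]
Given this, a union bound over the fewer than $d^2$ candidate pairs shows that, except with probability $\delta$, every pair with $\gamma \gtrsim \log(d/\delta)/n$ is ordered correctly by the estimates, which is precisely what the reduction above requires. The reason such a bound should hold, and why the realizable rate $d/\eps$ improves on the non-realizable $d^2/\eps^2$, is that $\wh I_e - \wh I_f$ is a sample log-likelihood-ratio-type statistic between nested Gaussian tree models whose fluctuations are self-bounding: its variance scales with its mean $\gamma$ (a $\chi^2$/local-Fisher-information effect), so a Bernstein inequality yields an exponent of order $n\gamma^2/\Var \asymp n\gamma$ instead of $n\gamma^2$. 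A naive uniform bound on $|\wh I - I|$ ignores this and only delivers the slower non-realizable rate of \cref{tm:nonrealizable}.

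To establish the tail bound I would pass from informations to sample covariances, writing $\wh I_e - \wh I_f$ as a smooth function of the scalar moments attached to $e$ and $f$ and using the Gaussian-tree identity that $\rho_{jk}$ equals the product of the edge correlations along the path from $j$ to $k$ to express both the mean gap $\gamma$ and the variance in terms of those path correlations. A second-order (delta-method) expansion around the population moments, controlled by standard concentration of sample correlations, should certify simultaneously that the statistic has mean $\asymp \gamma$ and variance $\asymp \gamma$, from which the sub-exponential tail follows. The remaining care is uniformity: the bound must hold across all $O(d^2)$ pairs and across the full range of admissible correlations, including near-degenerate edges; but since each tail depends only on a constant number of sample moments, Gaussian concentration handles each pair and a union bound absorbs the extra $\log d$, matching the sample complexity \eqref{eq:complexity:rlz}. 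This shares machinery with the partial-correlation tester of \cref{sec:cmit}, which I would reuse where possible.
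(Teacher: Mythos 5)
Your diagnosis of \emph{why} realizability buys the faster rate is exactly the paper's mechanism: when a population (conditional) mutual information is zero, its empirical version concentrates at scale $1/n$ rather than $1/\sqrt{n}$, which is what the paper's conditional mutual information tester (\cref{thm:conditionalindependence}) encodes. But the per-pair reduction you wrap around it has a genuine gap---in fact two gaps that interact. The ``strong exchange property'' you invoke, namely a single bijection $\phi:T^*\setminus\wh T\to \wh T\setminus T^*$ such that \emph{both} $(T^*\setminus e)\cup\{\phi(e)\}$ and $(\wh T\setminus\{\phi(e)\})\cup\{e\}$ are spanning trees for every $e$, does not exist in general. On $K_4$ take $T^*=\{12,23,34\}$ and $\wh T=\{13,24,14\}$: for $e=12$ the only $f$ for which both swaps are trees is $f=14$ (the swaps with $f=24$ and $f=13$ produce $\{23,34,24\}$ and $\{12,24,14\}$, each a triangle plus an isolated vertex), and for $e=34$ the only such $f$ is again $f=14$ (the candidates $\{12,23,13\}$ and $\{13,14,34\}$ fail the same way); two distinct edges would have to map to the same partner, so no such bijection exists. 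The paper's \cref{fact:spanningtree} (Brualdi's theorem) is deliberately one-sided: taking $T_1=T^*$, it guarantees that $T^*\cup\{\phi(e)\}\setminus\{e\}$ is a tree---hence $I_e\ge I_{\phi(e)}$ and, crucially, that the endpoints of $\phi(e)$ are joined in $T^*$ by a path through $e$---but it gives no per-pair inequality for $\wh I$. The paper only ever uses the aggregate inequality $\sum_e\wh I_e\le\sum_e \wh I_{\phi(e)}$, which follows from $\wh T$ being maximum weight with no exchange argument at all.

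Second, your crux bound $\pr[\wh I_f\ge\wh I_e]\le\exp(-\Omega(n\gamma))$ is false for arbitrary (edge, non-edge) pairs. The ``variance proportional to the mean'' effect occurs only when $e=(X,Y)$ lies on the $T^*$-path joining the endpoints of $f=(W,Z)$, because only then does the chain rule cancel the common bulk: $I_e-I_f=I(X;Y\mid Z)+I(X;Z\mid W)$, and the matching empirical identity subtracts two empirical CMIs whose population values are zero. Without that structure, $I_e$ and $I_f$ can both be order one with a tiny gap, and $\wh I_e-\wh I_f$ has standard deviation of order $1/\sqrt{n}$: in the chain $A-B-C-D$ with $\rho_{AB}=0.9$, $\rho_{BC}=0.8$, $\rho_{CD}=0.72+\theta$, the pair $e=(C,D)$, $f=(A,C)$ has $\gamma\asymp\theta$, yet the independent innovation of $D$ keeps $\mathrm{Var}(\wh I_e-\wh I_f)\gtrsim 1/n$, so the reversal probability behaves like $\Phi(-c\sqrt{n}\,\gamma)$, an $\exp(-\Omega(n\gamma^2))$ tail that is near $1/2$ at your target scale $\gamma\asymp(\log d)/n$. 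These two gaps cannot be repaired separately: the bijection that does yield per-pair reversal inequalities (Brualdi with the roles of $T^*$ and $\wh T$ swapped) does not produce path-structured pairs, while the bijection that produces path-structured pairs yields only the aggregate inequality. The paper's proof is precisely the aggregate version of your plan: expand $\sum_i[\wh I(X_i;Y_i)-\wh I(W_i;Z_i)]\le 0$ by the chain rule; the empirical CMIs with zero population value are each at most $\eps'/100$ (tester, part 1); the remaining empirical CMIs are at least $\tfrac{1}{20}$ of their population values minus $\eps'/40$ (tester, part 2); and those population values sum exactly to $\kl{P_{T^*}}{P_{\wh T}}$. Choosing $\eps'\asymp\eps/d$ gives $n\gtrsim(d/\eps)\log(d/\delta)$. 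Rewriting your argument in this aggregate form recovers the theorem.
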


\textbf{Remark}: We can also obtain a sample-efficient algorithm for bounded-degree Gaussian {\em polytrees}, using the guarantees of the estimator $\wh{I}$, assuming that the skeleton is known. We defer the description of this result to \cref{sec:polytree_distlearning}.

\subsection{Distribution Learning Lower Bounds}

The main idea of our proof is to reduce a distribution testing problem to our problem. Intuitively, the distribution testing problem is defined as follows. Suppose $R^{(1)}$ and $R^{(2)}$ are two distributions whose $\kl{R^{(1)}}{R^{(2)}}$ is small.
We are given $n$ i.i.d. samples drawn from a distribution $P$ where $P$ is a $m$-variate distribution and each coordinate is distributed as either $R^{(1)}$ or $R^{(2)}$ uniformly and independently.
Our task is to determine which of $R^{(1)}$ or $R^{(2)}$ the samples are drawn from correctly for at least $m/2$ coordinates.
The formal definition will be presented in \cref{prob:reduction}.
When $\kl{R^{(1)}}{R^{(2)}}$ is sufficiently small, one should expect that $n$ needs to be large enough to solve this problem with probability $2/3$.
Hence, we construct the $(R^{(1)},R^{(2)})$ pairs for the non-realizable and realizable case accordingly.

\begin{restatable}{theorem}{thmnonrealizablelowerbound}
Suppose $P$ is an unknown Gaussian distribution.
Given $n$ i.i.d. samples drawn from $P$.
For any small $\eps>0$, if $n=o(d^2/\eps^2)$, no algorithm returns a directed tree $\wh{T}$ such that 
\begin{align*}
   \kl{P}{P_{\wh{T}}} \leq \min_{T\in\drtr} \kl{P}{P_{T}} + \eps
\end{align*}
with probability at least $2/3$.
\end{restatable}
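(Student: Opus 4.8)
The plan is to carry out exactly the reduction sketched before the statement: I will exhibit a family of non-tree Gaussians on which any algorithm achieving $\eps$-optimality in KL must implicitly solve the testing task of \cref{prob:reduction}, and then lower-bound the sample complexity of that testing task information-theoretically. For the hard family I would partition the $d$ coordinates into $m=\Theta(d)$ disjoint triples, stitched into one connected graph by a few ``anchor'' edges of large fixed mutual information that lie in every near-optimal tree and hence do not interact with the intra-triple choices. Within each triple I set the three pairwise correlations so that two of the three candidate edges are nearly tied, and one independent uniform label $Y_i\in\{1,2\}$ decides which of the two is slightly larger, by a mutual-information margin $\tau=\Theta(\eps/d)$. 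Crucially these correlations violate the multiplicative constraint that a tree-structured Gaussian on three nodes must satisfy (this is the only place non-realizability enters), so the maximum-weight spanning tree is forced to drop the smaller of the two tied edges, and dropping the wrong one costs exactly $\tau$ in the objective \eqref{eq:kl_p_pt}. Reading off, for each triple, which edge $\wh T$ omits yields a guess for $Y_i$; since the per-triple cost is $\tau=\Theta(\eps/d)$ and there are $\Theta(d)$ triples, any $\wh T$ within $\eps$ of optimal must guess correctly on all but a $(1/2-\Omega(1))$ fraction of the triples.

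For the information lower bound, the two configurations of a single triple are $3$-dimensional Gaussians whose correlations differ by $\Theta(\eps/d)$, so their KL divergence is $\delta=\Theta(\eps^2/d^2)$. Writing $Y=(Y_1,\dots,Y_m)$ for the labels and $Z$ for the $n$ samples, the triples are mutually independent given $Y$ and each block depends only on its own label, so $I(Y;Z)\le\sum_{i=1}^{m} I(Y_i;Z_i)\le m\cdot O(n\delta)$. On the other side, a Fano-/rate-distortion-type bound shows that any estimator agreeing with the uniform $Y$ on a $(1/2+\Omega(1))$ fraction of coordinates with probability $2/3$ must satisfy $I(Y;\wh Y)=\Omega(m)$; combined with the data-processing inequality $I(Y;\wh Y)\le I(Y;Z)$ this forces $n=\Omega(1/\delta)=\Omega(d^2/\eps^2)$. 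Note that $m$ cancels, so only the per-coordinate divergence $\delta$ and the required advantage enter the final bound. A tree-learner using $o(d^2/\eps^2)$ samples would therefore yield a tester beating this bound, a contradiction.

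The delicate point, and the one separating this $\eps^{-2}$ rate from the $\eps^{-1}$ realizable rate of \cref{tm:realizable}, is engineering the per-triple cost to be \emph{linear} in the correlation gap while the distinguishing divergence is \emph{quadratic}: a margin $\tau$ in the objective needs a correlation gap of order $\tau$, whose detection costs order $1/\tau^2$ samples. In the realizable (tree) world the objective is locally quadratic in the correlation error, which collapses this to order $1/\tau$; it is precisely the freedom to place non-tree correlations within a triple that restores the quadratic gap between cost and detectability. The remaining care is routine but must be checked: (i) that the assembled covariance is a single valid PSD matrix and that the anchor edges make the maximum-weight-spanning-tree decision factorize cleanly across triples, so that per-triple costs simply add; and (ii) that the ``at least $m/2$''-style recovery requirement of \cref{prob:reduction} is made quantitatively bounded away from chance, so the Fano step produces $I(Y;\wh Y)=\Omega(m)$ rather than a vacuous bound.
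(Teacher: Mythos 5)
Your proposal follows essentially the same route as the paper: partition the $d$ coordinates into $\Theta(d)$ independent triples, use a binary label per triple to decide which of two nearly-tied edges has larger mutual information (margin $\Theta(\eps/d)$, achieved by a non-tree-realizable triple, so per-triple KL between the two configurations is only $\Theta(\eps^2/d^2)$), argue that an $\eps$-optimal tree identifies the label on most triples, and finish with a Fano/data-processing bound giving $n=\Omega(d^2/\eps^2)$. The paper instantiates the triple explicitly via a latent hub $H$ with children $X,Y,Z$ and one inflated coefficient (\cref{fig:nonrealizable}), and it outsources your Fano step to \cref{fact:standard_lb}, which you instead prove via partial-recovery Fano plus tensorization---that part of your write-up is actually more complete than the paper's.

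One flaw worth fixing: the anchor edges. The paper's triples are simply mutually independent, with no connecting edges; this costs nothing because cross-triple edges have zero mutual information, so the KL objective \eqref{eq:kl_p_pt} is unaffected by how a spanning tree routes between triples, and the objective decomposes exactly across triples. Your anchor edges of ``large fixed mutual information'' introduce genuine dependence across triples, which directly contradicts the assumption you invoke two sentences later that ``the triples are mutually independent given $Y$ and each block depends only on its own label''; both the identity $I(Y;Z)\le\sum_i I(Y_i;Z_i)$ and the clean additivity of per-triple costs would need a substantially more careful (and non-routine) argument with anchors present. The fix is to delete them: connectivity of the \emph{model} is never required in the non-realizable setting, only the \emph{output} must be a tree, and it may freely use zero-weight cross-triple edges.
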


\begin{restatable}{theorem}{thmrealizablelowerbound}
Suppose $P$ is an unknown Gaussian distribution such that there exists a directed tree $T^*$ that $P$ is $T^*$-structured, i.e. $P=P_{T^*}$.
Given $n$ i.i.d. samples drawn from $P$.
For any small $\eps>0$, if $n=o(d/\eps)$, no algorithm returns a directed tree $\wh{T}$ such that 
\begin{align*}
   \kl{P}{P_{\wh{T}}} \leq  \eps
\end{align*}
with probability at least $2/3$.
\end{restatable}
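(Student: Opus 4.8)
The plan is to prove the $n=\Omega(d/\eps)$ lower bound via the reduction to the multi-coordinate testing problem outlined just above, by instantiating the hard pair $(R^{(1)},R^{(2)})$ with \emph{tree-structured} (hence realizable) gadgets and planting $m=\Theta(d)$ independent copies. Concretely, each gadget is a constant-size Gaussian on three nodes $\{x,y,z\}$ in which $y$ and $z$ are very strongly correlated, $\rho_{yz}^2 = 1-\Theta(\eps/d)$, while $x$ is attached to exactly one of them with a constant correlation $a=\Theta(1)$. The two states $R^{(1)},R^{(2)}$ (attach $x$ to $y$ versus to $z$) are both exactly tree-structured, but as $\rho_{yz}\to 1$ they become statistically indistinguishable: I would verify $\kl{R^{(1)}}{R^{(2)}},\kl{R^{(2)}}{R^{(1)}}=\Theta(\eps/d)$. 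The planted instance $P$ is the product of $m$ such gadgets in states $b=(b_1,\dots,b_m)\in\{1,2\}^m$ chosen uniformly and independently, joined into a single tree $T^\star$ by edges carrying zero correlation. Since any distribution Markov to the planted forest is also Markov to $T^\star$, $P$ is genuinely $T^\star$-structured, so the realizability hypothesis of the theorem holds.

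\textbf{The reduction.} Suppose, for contradiction, that some algorithm run on $n=o(d/\eps)$ samples outputs $\wh{T}$ with $\kl{P}{P_{\wh{T}}}\le\eps$ with probability $2/3$. Using \eqref{eq:kl_p_pt}, in the realizable case $\kl{P}{P_{\wh{T}}} = W^{\star} - W(\wh{T})$, where $W(\wh{T})=\sum_{(j,k)\in\wh{T}} I(X_j;X_k)$ is the total edge mutual information of $\wh{T}$ and $W^{\star}$ is the max-weight spanning tree value. Because cross-gadget edges carry no mutual information, this difference decomposes additively over gadgets. Within each gadget the strong edge $yz$ has mutual information $\Theta(\log(d/\eps))\gg\eps$, so it must be included, and $x$ must attach internally; the only remaining choice is whether $\wh{T}$ uses $xy$ or $xz$, and choosing the suboptimal attachment forfeits exactly the structural gap $\Theta(\eps/d)$. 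Hence at most $O\!\big(\eps/(\eps/d)\big)=O(d)$ gadgets can be misresolved, which we drive below a small constant fraction $\gamma m$ by enlarging the gap constant. Reading off from $\wh{T}$, in each gadget, whether $x$ is attached to $y$ or $z$ therefore recovers all but a $\gamma$-fraction of the bits $b_i$; i.e. an $\eps$-accurate learner solves the testing problem in the strong-recovery sense.

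\textbf{Information-theoretic lower bound for testing.} With $b$ uniform and the per-gadget data independent across gadgets, the mutual information tensorizes, $I(b;\text{data}) = \sum_i I(b_i;\text{data}_i) \le \sum_i O\!\big(n\,\kl{R^{(1)}}{R^{(2)}}\big) = O(m\cdot n\eps/d)=O(n\eps)$, using that the information a uniform bit conveys through $n$ i.i.d.\ samples is at most a constant times $n$ times the symmetrized KL. On the other hand, strong recovery of a $(1-\gamma)$-fraction of the $m=\Theta(d)$ uniform bits with probability $2/3$ forces $I(b;\text{data})=\Omega(m)=\Omega(d)$ by Fano's inequality. Combining the two estimates gives $n\eps=\Omega(d)$, i.e. $n=\Omega(d/\eps)$, contradicting $n=o(d/\eps)$ and establishing the theorem.

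\textbf{Main obstacle.} The delicate step is the gadget design: I need a genuinely tree-structured pair whose KL-gap and whose max-weight-spanning-tree gap are \emph{simultaneously} $\Theta(\eps/d)$. These are in tension—naively making the two states close in KL shrinks the structural gap even faster, which would only yield the weaker $\Omega(1/\eps)$ bound—and it is precisely the strongly-coupled-pair trick ($\rho_{yz}\to1$) that makes both scale as $\eps/d$. Carefully quantifying both gaps, and confirming that the gadget-wise additive decomposition of $W^{\star}-W(\wh{T})$ survives for an arbitrary learner tree $\wh{T}$ that may cross gadget boundaries, is the crux; the Fano step and the mutual-information tensorization mirror the non-realizable lower bound and should be routine by comparison.
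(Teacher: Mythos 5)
Your proposal is correct and follows essentially the same route as the paper: the paper's gadget couples $X$ and $Y$ via $Y=(1-\sqrt{\eps'})X+\sqrt{\eps'}\,\N(0,1)$ with $\eps'=\eps/d$ (so $1-\rho_{XY}^2=\Theta(\eps/d)$) and attaches $Z$ to either $X$ or $Y$ with constant coefficient $1/2$ --- precisely your strongly-coupled-pair trick --- then plants $m=d/3$ independent copies and decodes the learner's tree gadget-by-gadget, using the additive decomposition of the KL over groups and a per-gadget structural gap of $\Theta(\eps/d)$. The only difference is cosmetic: the paper black-boxes your Fano-plus-tensorization step as a standard information-theoretic fact (its \cref{fact:standard_lb}), so your sketch merely makes that step explicit.
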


\section{Optimal Faithful Tree Learning}
\label{sec:faithfultree}
In the preceding section, we learned a tree-structured distribution under the KL distance, without concern for the learned tree structure. This viewpoint primarily pertains to \emph{distribution learning}. This section adopts an different approach, emphasizing the aspect of \emph{structure learning}.
Specifically, we assume the underlying graph structure is indeed a tree, more generally, a polytree. We introduce an estimator based on the classic PC algorithm  \citep{spirtes1991algorithm} and analyze its sample complexity under faithfulness.
Crucially, we provide a matching lower bound to conclude the minimax optimality of the algorithm, which offers insights into the difficulty of structure learning under faithfulness.

\subsection{Tree-Faithfulness}\label{sec:faithfultree:prelim}
As alluded to in \cref{sec:prelim}, the tree structure allows us to relax the usual notion of faithfulness:
\begin{definition}[Tree-faithfulness]\label{defn:treefaith}
We say distribution $P$ is tree-faithful to a polytree $T$ if 
\begin{enumerate}
    \item For any two nodes connected $X_j-X_k$, we have $X_k\not\indep X_j\given X_\ell$ for all $\ell \in V\cup \{\emptyset\}\setminus \{k,j\}$;
    \item For any $v$-structure $X_k\rightarrow X_\ell\leftarrow X_j$, we have $X_k\not\indep X_j\given X_\ell$.
\end{enumerate}
\end{definition}
\noindent
Tree-faithfulness comprises two components, each corresponding to adjacency-faithfulness and orientation-faithfulness respectively in restricted faithfulness (cf. \cref{defn:resfaith}). 
In comparison to adjacency-faithfulness, tree-faithfulness solely requires conditional dependence for neighbouring nodes with conditioning sets of size at most one. Likewise, compared to orientation faithfulness, tree-faithfulness only needs conditional dependence for $v$-structures given the the collider.
Let $\rho(X_j,X_k\given X_\ell)$ be the conditional correlation coefficient between $X_k$ and $X_j$ given $X_\ell$.
As usual, in order to establish uniform, finite-sample results, we need the following concept of $c$-strong tree-faithfulness:
\begin{definition}[$c$-strong tree-faithfulness]\label{defn:strongtreefaith}
We say that $P$ is $c$-strong tree-faithful to a polytree $T$ if 
\begin{enumerate}
    \item For any two nodes connected $X_j-X_k$, we have $\rho(X_k, X_j\given X_\ell)\ge c$ for $\ell \in V\cup \{\emptyset\}\setminus \{k,j\}$;
    \item For any $v$-structure $X_k\rightarrow X_\ell\leftarrow X_j$, we have $\rho(X_k, X_j\given X_\ell)\ge c$.
\end{enumerate}
\end{definition}
Under strong tree-faithfulness, we can now establish how the sample complexity depends on both the dimension $d$ and the signal strength $c$. 

\subsection{Structure Learning Upper Bounds}\label{sec:faithfultree:ub}
We develop the \texttt{PC-Tree} algorithm for learning polytrees as a modification to the classic PC algorithm, outlined in \cref{alg:tree}, effectively identifying the polytree's skeleton. 
An important by-product is the separation set resulted from the CI testing, which is used to obtain the CPDAG by applying an \texttt{ORIENT} step (\cref{alg:orient}) as in the original PC algorithm.

\begin{algorithm}[t]
\caption{\texttt{PC-Tree} algorithm}\label{alg:tree}
\textbf{Input:} $n$ i.i.d. samples $(X^{(i)}_1,\ldots,X^{(i)}_d)$\\
\begin{enumerate}
    \item Let $\widehat{E} = \emptyset$.
    \item For each pair $(j,k)$, $0\le j < k \le d$:
    \begin{enumerate}
        \item For all $\ell \in [d]\cup\{\emptyset\}\setminus \{j,k\}$:
        \begin{enumerate}
            \item Test $H_0:X_j\indep X_k\given X_\ell$ vs. $H_1:X_j\not\indep X_k\given X_\ell$, store the results.
        \end{enumerate}
        \item If all tests reject, then $\widehat{E}\leftarrow \widehat{E}\cup \{j-k\}$.
        \item Else (if some test accepts), let $S(j,k) = \{\ell\in [d]\cup\{\emptyset\}\setminus \{j,k\}: X_j \indep X_k\given X_\ell\}$.
    \end{enumerate}
\end{enumerate}
\textbf{Output:} $\widehat{T}= ([d],\widehat{E})$, separation set $S$.
\end{algorithm}

In contrast to the original PC algorithm, \texttt{PC-Tree} distinguishes itself in two key aspects. Firstly, when assessing the presence of an edge between any two nodes, instead of exploring all potential conditioning sets,
\texttt{PC-Tree} simplifies the process by exclusively testing marginal independence and conditional independence given only one other node.
Furthermore, a notable departure from the original PC algorithm is that \texttt{PC-Tree} combines marginal independence tests and conditional independence tests, as opposed to ignoring the latter once marginal independence is established.
\texttt{PC-Tree} will rely on sample (conditional) correlation coefficient for all the (conditional) independence tests when running the algorithm, see more details in \cref{app:tree:samplecorr}.

Now we are ready to provide the sample complexity of \texttt{PC-Tree} in the following theorem, whose proof is postponed to \cref{app:tree:cit} and~\ref{app:tree:ub}. 
\begin{restatable}{theorem}{thmtreeub}
\label{thm:tree:ub}
    For any $T\in\pltr$, assuming $P$ is $c$-strong tree-faithful to $T$, applying \cref{alg:tree} with 
    sample correlation for CI testing,
    if the sample size 
    \begin{align*}
        n \gtrsim \frac{1}{c^2}\bigg(\log d+ \log(1/\delta)\bigg)\,,
    \end{align*}
    then $\prob(\widehat{T} = \sk(T)) \ge 1-\delta$, and $\prob(\textsc{Orient}(\widehat{T},S) = \mec) \ge 1-\delta$
\end{restatable}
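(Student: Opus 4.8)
The plan is to condition on a single ``good event'' under which every (conditional) independence test performed by \cref{alg:tree} returns the correct answer, and then argue deterministically that the skeleton is recovered and that the orientation phase outputs the correct CPDAG. The first ingredient is a purely structural fact: in a polytree the tests probed by \texttt{PC-Tree} (marginal independence, and conditioning on a single node) are rich enough to separate every non-adjacent pair. Concretely, if $j,k$ are non-adjacent in $T$ their unique undirected path either contains a collider---in which case $X_j\indep X_k$ marginally, since the collider is closed under the empty conditioning set and the path is the only one between $j$ and $k$---or contains no collider, in which case conditioning on any single interior node $\ell$ blocks the unique path, giving $X_j\indep X_k\given X_\ell$. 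Hence a separating set of size at most one always exists in $V\cup\{\emptyset\}\setminus\{j,k\}$, and it is exactly one of the sets the algorithm tests. This is where the polytree (unique-path) hypothesis is essential and is what licenses testing only singletons, matching the relaxation in \cref{defn:treefaith}.

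The probabilistic heart of the argument is uniform concentration of the sample (conditional) correlation. Fixing the rejection threshold at $\tau=c/2$, I would invoke a concentration bound for the sample partial correlation of the form $\prob\big(\abs{\wh{\rho}(X_j,X_k\given X_\ell)-\rho(X_j,X_k\given X_\ell)}>c/2\big)\le C_1\exp(-C_2\,n\,c^2)$, which is valid because $c$-strong tree-faithfulness keeps the relevant covariance submatrices non-degenerate (so the denominators defining the partial correlations stay bounded away from zero). Since there are only $O(d^3)$ distinct tests---over all pairs $(j,k)$ and all singleton-or-empty conditioning sets---a union bound shows that whenever $n\gtrsim c^{-2}(\log d+\log(1/\delta))$, with probability at least $1-\delta$ every sample (conditional) correlation lies within $c/2$ of its population value. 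Call this event $\mathcal{E}$.

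On $\mathcal{E}$ all tests behave correctly. For a true edge $(j,k)\in E$, part~1 of \cref{defn:strongtreefaith} gives $\abs{\rho(X_j,X_k\given X_\ell)}\ge c$ for every admissible $\ell$, so $\abs{\wh{\rho}}>c-c/2=\tau$ and every test rejects, adding the edge; for a non-adjacent pair, the separating set supplied by the structural step has population correlation $0$, so $\abs{\wh{\rho}}<c/2=\tau$ there and the edge is not added. This yields $\wh{T}=\sk(T)$. For orientation, the recorded separation sets classify unshielded triples correctly: for a $v$-structure $j\rightarrow\ell\leftarrow k$, part~2 of \cref{defn:strongtreefaith} gives $\abs{\rho(X_j,X_k\given X_\ell)}\ge c$, so the test at $\ell$ rejects and $\ell\notin S(j,k)$, while for a non-collider triple $\ell$ separates $j,k$ and hence $\ell\in S(j,k)$---precisely the criterion \textsc{Orient} uses to place $v$-structures. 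With the correct skeleton and the correct set of $v$-structures, the soundness and completeness of the Meek orientation rules used by \textsc{Orient} force the output to equal the CPDAG $\mec$. Choosing the union-bound constants so that both conclusions hold on $\mathcal{E}$ gives $\prob(\wh{T}=\sk(T))\ge 1-\delta$ and $\prob(\textsc{Orient}(\wh{T},S)=\mec)\ge 1-\delta$.

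The step I expect to be the main obstacle is the uniform concentration of the sample partial correlation with the sharp $c^{-2}$ dependence: one must control \emph{ratios} of sample covariance entries (not merely individual entries) simultaneously over all $O(d^3)$ triples, and keep the denominators bounded away from zero using only the faithfulness lower bound $c$ rather than an a priori spectral assumption. By contrast, the single-node-separation lemma is clean and uses nothing beyond uniqueness of paths in a polytree, and the final orientation step reduces to the standard correctness of Meek's rules once the skeleton and $v$-structures are pinned down.
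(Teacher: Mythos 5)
Your proposal is correct and takes essentially the same route as the paper: the same polytree separation fact (every non-adjacent pair is $d$-separated by $\emptyset$ or by a single node on its unique path), the same $c/2$ cutoff with a union bound over all $O(d^3)$ tests yielding $n \gtrsim c^{-2}(\log d + \log(1/\delta))$, and the same deterministic argument that correct tests give the skeleton and correct separation sets make \textsc{Orient} output $\mec$. The concentration step you defer is precisely the paper's \cref{lem:cit}, which is proved by reducing the conditional sample covariance to a bivariate sample covariance of $n-1$ i.i.d. draws from the conditional Gaussian (Anderson's theorem) plus $\chi^2$ concentration---note that this bound is scale-free in the correlations and requires no faithfulness-based non-degeneracy of the denominators, contrary to the justification you sketched.
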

We may compare this upper bound $(\log d)/c^2$ with some of existing results on structure learning. 
Compared to learning equal variance general DAGs \citep{gao2022optimal} with optimal rates being $q\log (d/q)$, tree structure simplifies the problem by removing the factor of in-degree $q$.
As against recovering undirected graph in MRF \citep{misra2020information}, whose optimal sample complexity is $(s\log d)/\kappa^2$, we are able to improve the rate by the maximum degree $s$. 
Moreover, considering directed trees $T\in\drtr\subset \pltr$, \cref{lem:cconstant} shows $c$ to be a constant under mild assumption on the parametrization of~\eqref{eq:sem}, which assures possible concern of dependence on $c$.

\subsection{Structure Learning Lower Bounds}\label{sec:faithfultree:lb}
Having provided the sample complexity upper bound, we continue to derive a matching lower bound:
\begin{restatable}{theorem}{thmtreelb}\label{thm:tree:lb}
    Assuming $c$-strong tree-faithfulness, and $c^2\le 1/5$, $d\ge 4$, if the sample size is bounded as
    \begin{align*}
        n \le 
        \frac{1-2\delta}{8}\times \frac{\log d}{c^2}\,,
    \end{align*}
    then for any estimator $\widehat{T}$ for $\mec$,
    \begin{align*}
        \inf_{\widehat{T}}\sup_{\substack{T\in \pltr \\ P \text{ is } c\text{-strong} \\ \text{tree-faithful to }T}}\prob(\widehat{T}\ne \mec) \ge \delta - \frac{\log 2}{\log d}\,.
    \end{align*}
\end{restatable}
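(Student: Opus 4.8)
The plan is to prove this minimax lower bound by Fano's inequality, exhibiting a family of $\Theta(d)$ polytrees whose CPDAGs are pairwise distinct (so that recovering $\mec$ forces one to identify the member) yet whose induced Gaussian laws are pairwise $O(c^2)$-close in KL-divergence. Concretely, I would fix a ``backbone'' directed path on the vertices $\{2,3,\dots,d\}$ and, for each $i\in\{2,\dots,d\}$, let $T_i\in\pltr$ be obtained by attaching vertex $1$ as a leaf (a child) of vertex $i$. No $v$-structures arise, so each $T_i$ is a valid polytree whose CPDAG has skeleton $\sk(T_i)$; since the skeletons differ in the single edge incident to vertex $1$, the CPDAGs $\overline{T_1},\dots,\overline{T_d}$ are pairwise distinct. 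Hence any estimator $\widehat T$ equals at most one of them, and the structure-recovery problem is at least as hard as identifying the index $i$, which is exactly the setting of Fano's inequality with a uniform prior over the family.

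For the distributions I would take the centered Gaussian $P_i$ in which every backbone edge has correlation $\tfrac12$ while the pendant edge $\{1,i\}$ has correlation $\rho$, with $X_1$ conditionally independent of the remaining coordinates given $X_i$. The key structural fact is that the marginal law of $(X_2,\dots,X_d)$ is \emph{identical} across all $P_i$ — only the conditional law of $X_1$ moves — so by the chain rule $\kl{P_i}{P_j}=\E_{X_{2:d}}\big[\kl{\N(\rho X_i,1-\rho^2)}{\N(\rho X_j,1-\rho^2)}\big]=\tfrac{\rho^2\,\E[(X_i-X_j)^2]}{2(1-\rho^2)}$. Because the backbone correlations are $(1/2)^{|i-j|}\ge 0$ we have $\E[(X_i-X_j)^2]\le 2$, giving $\kl{P_i}{P_j}\le \tfrac{\rho^2}{1-\rho^2}$.

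Separately I must verify that each $P_i$ is genuinely $c$-strong tree-faithful, and this is where $c^2\le 1/5$ and $d\ge 4$ are consumed. Tree-faithfulness (Definition~\ref{defn:strongtreefaith}) only requires checking the empty conditioning set and single-vertex conditioning sets, and since there are no $v$-structures only the adjacency bounds matter. A direct partial-correlation computation yields three cases: (i) a backbone edge conditioned on an adjacent backbone vertex has correlation $\tfrac{1/2}{\sqrt{1+1/4}}=1/\sqrt5$; (ii) a backbone edge conditioned on the pendant vertex $1$ has correlation $\tfrac{\sqrt{1-\rho^2}}{2\sqrt{1-\rho^2/4}}$; and (iii) the pendant edge conditioned on a neighbor of $i$ has correlation $\tfrac{\rho\sqrt3/2}{\sqrt{1-\rho^2/4}}$, with all other conditionings giving strictly larger values. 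I would choose $\rho$ as the smallest value making (iii) equal $c$, namely $\rho^2=\tfrac{4c^2}{3+c^2}\le \tfrac43 c^2$ (so $\rho=\tfrac12$ exactly at $c^2=1/5$). One then checks that (i) and (ii) both equal $1/\sqrt5$ at the boundary $c^2=1/5$ and exceed it otherwise, so all three are $\ge c$ precisely when $c^2\le 1/5$. Feeding $\rho^2\le\tfrac43 c^2$ into the KL bound gives $\max_{i,j}\kl{P_i}{P_j}\le \tfrac{4c^2}{3(1-c^2)}\le \tfrac53 c^2\le 8c^2$.

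Finally I apply the generalized Fano bound: with $M$ hypotheses, $\inf_{\widehat T}\sup \prob(\widehat T\ne\mec)\ge 1-\tfrac{n\max_{i,j}\kl{P_i}{P_j}+\log 2}{\log M}\ge 1-\tfrac{8nc^2+\log 2}{\log d}$, and substituting $n\le\tfrac{1-2\delta}{8}\cdot\tfrac{\log d}{c^2}$ gives $8nc^2\le(1-2\delta)\log d$ and hence the bound $\ge \delta-\tfrac{\log 2}{\log d}$. The step deserving the most care — and the main obstacle — is the faithfulness verification of case (ii): one must confirm that conditioning a backbone edge on the weakly-correlated pendant vertex does not drag its conditional correlation below $c$, as this (together with case (i)) is exactly what fixes the constant $1/5$. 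A secondary bookkeeping point is that the construction produces $d-1$ rather than $d$ hypotheses; since the true KL bound $\tfrac53 c^2$ sits far below the $8c^2$ budget, the resulting slack comfortably absorbs the gap between $\log(d-1)$ and $\log d$ (alternatively one pads the family to $d$ hypotheses).
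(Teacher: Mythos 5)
Your proposal is correct, but it proves the theorem by a genuinely different construction than the paper. The paper's proof applies Fano's inequality to an \emph{exponentially large} ensemble: all $d^{d-2}$ directed trees rooted at node $1$ (Cayley's formula), each parameterized with the \emph{same} weak edge weight $\beta=\sqrt{2}c$ on every edge; it then shows (via a separate lemma controlling all marginal variances and signs of covariances along the tree) that the KL divergence between any two ensemble members is at most $4dc^2$, and closes with Fano using $\log M \ge \tfrac12 d\log d$. Your construction is instead a \emph{local} packing of only $d-1$ hypotheses: a fixed strongly-correlated backbone (correlation $1/2$) shared by all hypotheses, with a single weak pendant edge (correlation $\rho\asymp c$) whose attachment point varies. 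This buys a much simpler KL computation — since the marginal of $(X_2,\dots,X_d)$ is common to all hypotheses, the chain rule reduces the KL to a one-dimensional Gaussian mean-shift calculation, giving diameter $\tfrac{5}{3}c^2$ with no need for tree-wide variance propagation or determinant bookkeeping — at the price of a faithfulness verification in a mixed strong/weak parameterization (your three cases, which indeed pin down the constant $1/5$ exactly, just as the paper's uniform-$\beta$ lemma does) and the $\log(d-1)$-versus-$\log d$ slack, which you correctly observe is absorbed because $\tfrac53 c^2$ sits well inside the $8c^2$ budget for all $d\ge 4$ and $\delta\in[0,1]$. Conceptually, the two proofs locate the hardness differently: the paper shows the $\log d/c^2$ barrier when \emph{every} edge is at the faithfulness floor, while yours shows it already arises from deciding where a \emph{single} marginally-faithful edge attaches; both yield the same minimax rate.
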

The lower bound in \cref{thm:tree:lb} implies the optimal sample complexity is $\Theta(\log d/c^2)$, where the dependence on $1/c^2$ term characterizes the hardness from ``how (Tree-)faithful'' the distribution is; and $\log d$ term comes from the cardinality of all polytrees, which is much smaller compared to number of all DAGs.

To prove this lower bound, we employee Fano's inequality \citep{yu1997assouad} and consider a subclass of $\drtr$ to exploit the property that any node in directed tree has at most one parent. 
This subclass of directed trees has large enough cardinality by Cayley's formula of undirected trees. 
With the parametrization of edge weights appropriately calibrated, we show the KL divergence between the distributions consistent with any two instances from the subclass is well controlled, which leads to the final lower bound.
The detailed proof can be found in \cref{app:tree:lb}.

\begin{remark}
    The optimality results in this section also extend to directed tree, polyforest and Markov chain. 
    Since the lower bound is constructed using directed trees, the optimality applies.
    For polyforest, which is essentially polytree but allows for disconnected component, \texttt{PC-Tree} algorithm is able to identify the correct skeleton. On the other hand, polytree is a subclass of polyforest, thus the lower bound in \cref{thm:tree:lb} applies. 
    For Markov chain, the algorithm is modified to dismiss marginal independence test, and the lower bound construction considers all Markov chains with the same way of parametrization as in \cref{thm:tree:lb}.
    All these graphical models share the optimal sample complexity $\Theta(\log d/c^2)$.
\end{remark}

\section{Experiments}
\label{sec:experiments}

\begin{figure*}[hbt!]
\centering 
\subfigure[SHD comparison]{\label{fig:gauss_100_shd}\includegraphics[width=0.49\linewidth]{\detokenize{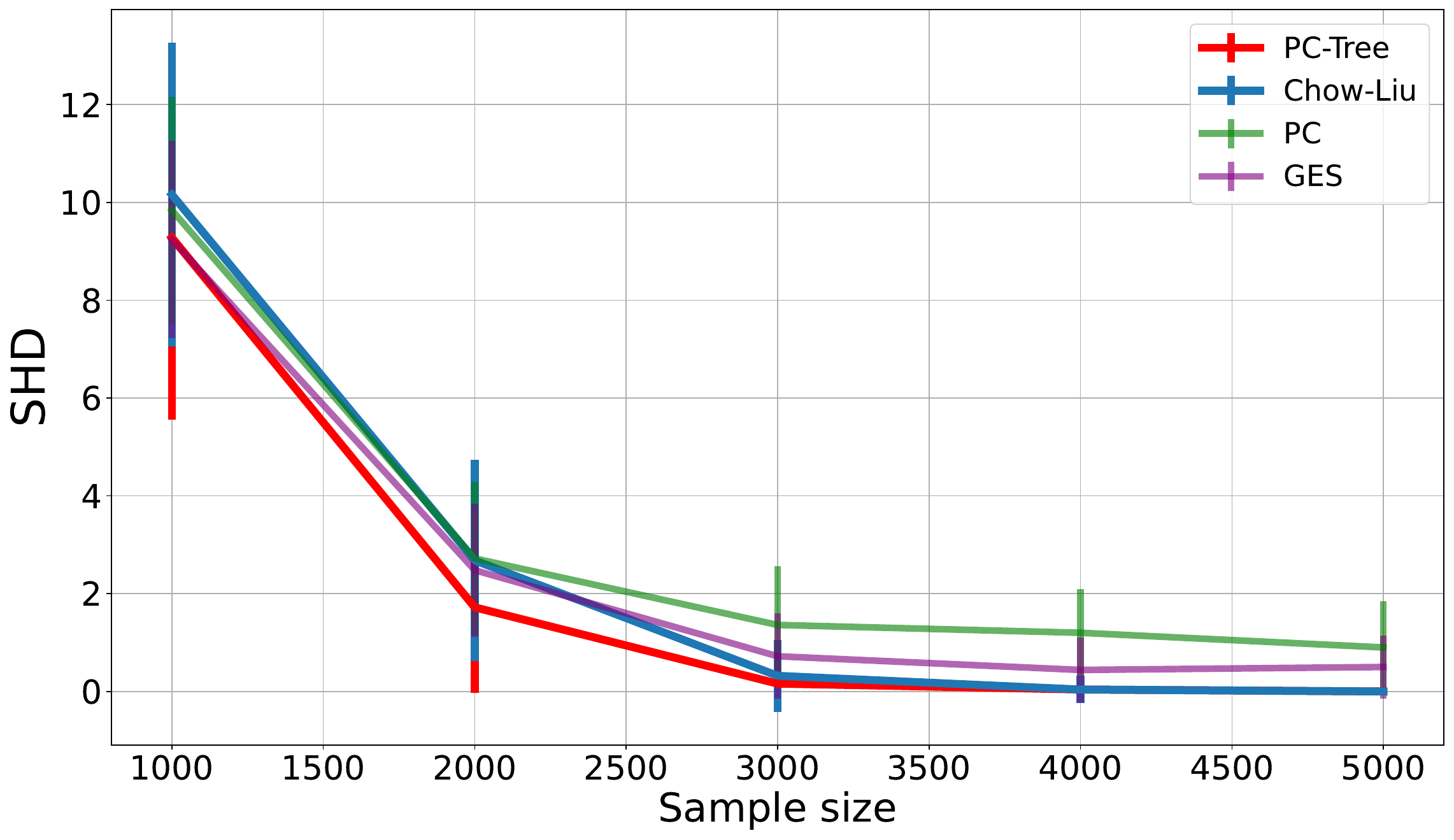}}}
\hspace{-0.1cm}
\subfigure[PRR comparison]{\label{fig:gauss_100_prr}\includegraphics[width=0.49\linewidth]{\detokenize{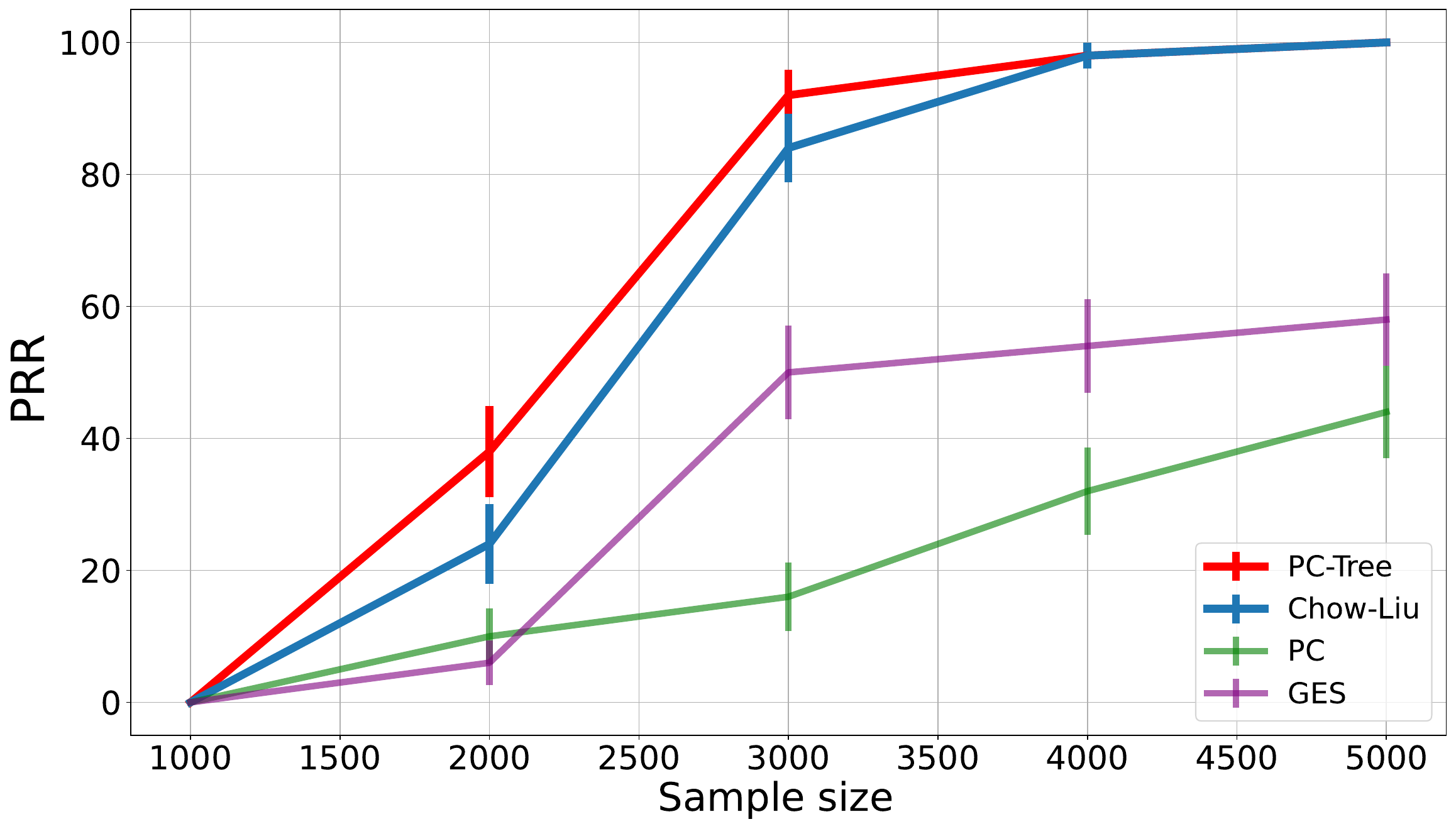}}}
\caption{Performance comparison for PC-Tree, Chow-Liu, PC and GES algorithm evaluated on SHD and PRR. The red, blue, green, purple lines are for PC-Tree, Chow-Liu, PC and GES respectively.}
\label{fig:gauss_shd_prr}
\end{figure*}

We conduct experiments to verify our findings in structure learning. For brevity, we report here only the most difficult setting with $d=100$ nodes; full details on the experiments and additional setups, e.g. when noise $\eta_k$ is not Gaussian, can be found in \cref{app:expt}.
We simulated random directed trees and synthetic data via~\eqref{eq:sem}. 
We compare the performance of PC-Tree, Chow-Liu to PC and GES as classical baselines when only faithfulness assumed.
Though Chow-Liu algorithm aims for distribution learning, it also estimates the skeleton as a byproduct. Therefore, to make fair comparison, we evaluate them by the accuracy of skeleton of the outputs (of PC-Tree, PC and GES).
The results on average Structural Hamming Distance (SHD) and the Precise Recovery Rate (PRR) are reported in \cref{fig:gauss_shd_prr}, where PRR measures the relative frequency of exact recovery.
From the figure, we can see PC-Tree algorithm does perform the best, especially the significantly better result on PRR over the baselines, which is the main metric we are concerned with and have established optimality for.
The competitive performance of Chow-Liu is also noticeable, for which we have not analyzed under the goal of structure learning, and we conjecture a similar sample complexity is shared with PC-Tree.

\section{Comparison and Discussion}
\label{sec:discussion}
The literature on distribution learning and structure learning have largely evolved separate from one another. An interesting aspect of our results is that they consider both problems in a unified setting, allowing for an explicit comparison of these problems.

First, it is clear that the non-realizable setting should not be compared to structure learning, since in the former setting there is no structure to speak of. In the realizable setting, however, it is reasonable to ask for a comparison. Comparing \eqref{eq:complexity:rlz} and \eqref{eq:complexity:sl}, it is easy to see that there is a phase transition when $\eps \asymp dc^2$.
Focusing on directed trees for an apple-to-apple comparison, if the SEM parameters, e.g. $\beta_k,\sigma^2_k$ in~\eqref{eq:sem} are bounded, then strong tree-faithfulness holds with $c\asymp 1$, see \cref{lem:cconstant}. 
In this case, the optimal sample complexity for structure learning is $\log d$ and $(d\log d)/\eps$ for distribution learning, which has an additional factor of $d/\eps$. Thus, as long as $\eps = o(d)$, which is typical, structure learning is easier than distribution learning.

Another interesting scenario arises when $\eps\ll dc^2$: Here, distribution learning is harder, however, we might hope to learn the structure of $T$ ``for free'' by first learning the distribution to within KL accuracy $\eps$. 
This is because, as $\eps$ goes to zero, $\widehat{P}$ converges to $P$, which implies we can use $\widehat{P}$ directly to estimate partial correlations for structure learning. Then the question boils down to whether there exists a good estimator of the structure that exploits $\widehat{P}$ when $\eps \ll dc^2$.
\cref{lem:distlnvsstrucln} shows that as long as the estimator is agnostic to $\widehat{P}$ (in the sense that it treats $\widehat{P}$ as a black-box), then we must have at least $\eps\ll c^2$. Thus, there is a regime $c^2\ll\eps\ll dc^2$ where distribution learning does not automatically imply structure learning, at least in general. 
It remains as an interesting open question how small $\eps$ must be for $\widehat{P}$ to be efficiently used for structure learning, or whether or not there exist \emph{specific} estimators $\widehat{P}$ that can be used for structure learning when $c^2\ll\eps\ll dc^2$.

Extending these results beyond the Gaussians we consider here (and finite alphabets as in previous work) is a promising direction for future research. Especially interesting would be bounds in a non-parametric setting.

\bibliographystyle{alpha}
\bibliography{references}  

\newpage 
\appendix
\section{Comparing Structure Learning and Distribution Learning}

\begin{lemma}\label{lem:cconstant}
    Suppose $T\in\drtr$ and $P$ is parameterized using $\{\beta_k,\sigma^2_k\}_{k=1}^d$ as~\eqref{eq:sem} according to $T$. 
    If there exists a constant $M>1$ such that for any $k\in[d]$,
    \begin{align*}
        |\beta_{kj}| & \in [M^{-1},M],\qquad \forall \beta_{kj}\ne 0\\
        \sigma_k^2 & \in [M^{-1},M] \,,
    \end{align*}
    then $P$ is $c$-strong Tree-faithful to $T$ for some $c\asymp 1$.
\end{lemma}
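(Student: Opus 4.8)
The plan is to verify the two requirements of \cref{defn:strongtreefaith} directly from the linear SEM \eqref{eq:sem}, exploiting the special structure of directed trees. The first, liberating observation is that every $T\in\drtr$ has in-degree at most one, so $T$ contains no $v$-structures; consequently the second (orientation) condition of $c$-strong tree-faithfulness is vacuous, and the entire task collapses to lower bounding $|\rho(X_j,X_k\mid X_\ell)|$ over every edge $j-k$ and every $\ell\in V\cup\{\emptyset\}\setminus\{j,k\}$ by a constant depending only on $M$.

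The engine of the argument is a conditional-variance bound: for every node $j$ and every $\ell\neq j$, $\var(X_j\mid X_\ell)\ge (2M^3)^{-1}$. I would prove this by letting $v$ be the neighbour of $j$ on the unique tree path to $\ell$; since $v$ separates $j$ from $\ell$ in the tree, $X_j\indep X_\ell\mid X_v$, whence $\var(X_j\mid X_\ell)\ge\var(X_j\mid X_\ell,X_v)=\var(X_j\mid X_v)$, reducing everything to an adjacent $v$. If $v$ is the parent of $j$ then $\var(X_j\mid X_v)=\sigma_j^2\ge M^{-1}$; if $v$ is a child of $j$, a one-line regression gives $\var(X_j\mid X_v)=\var(X_j)\sigma_v^2/(\beta_{vj}^2\var(X_j)+\sigma_v^2)\ge (2M^3)^{-1}$, using only the a priori bounds on the parameters together with $\var(X_j)\ge\sigma_j^2\ge M^{-1}$ (itself immediate from the SEM recursion).

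With this in hand I would bound the conditional correlation of an edge $j\to k$, i.e.\ $X_k=\beta_{kj}X_j+\eta_k$, by splitting on which side of the edge $\ell$ lies. If $\ell$ is not a descendant of $k$ (in particular if $\ell=\emptyset$), then $\eta_k\indep X_\ell$, so conditioning preserves the structural equation and yields $\rho(X_j,X_k\mid X_\ell)^2=\beta_{kj}^2\var(X_j\mid X_\ell)/(\beta_{kj}^2\var(X_j\mid X_\ell)+\sigma_k^2)\ge(1+2M^6)^{-1}$ after inserting the conditional-variance bound. If instead $\ell$ is a descendant of $k$, the tree factorization of correlations gives $\rho_{j\ell}=\rho_{jk}\rho_{k\ell}$; substituting into the standard partial-correlation identity collapses it to $\rho(X_j,X_k\mid X_\ell)^2=\rho_{jk}^2(1-\rho_{k\ell}^2)/(1-\rho_{jk}^2\rho_{k\ell}^2)$, and rewriting $1-\rho_{k\ell}^2=\var(X_k\mid X_\ell)/\var(X_k)$ before applying the conditional-variance bound to $X_k$ produces a constant lower bound of the form $(1+O(M^8))^{-1}$. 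Taking $c$ as the square root of the smallest of these constants gives $c\asymp1$.

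The main obstacle is precisely this last case, and it is instructive. One cannot treat the marginal correlations $\rho_{jk},\rho_{j\ell},\rho_{k\ell}$ as free quantities in the partial-correlation formula, because each can approach $1$ as tree variances grow (edge coefficients may exceed $1$ and variances compound along a path), making the formula appear to vanish. The resolution, and the crux of the proof, is that these correlations are rigidly linked through the SEM, so the apparent blow-up cancels once one reparametrizes in terms of conditional variances; the Markov reduction to an adjacent neighbour is exactly what keeps $\var(X_j\mid X_\ell)$ bounded below by a constant independent of the depth of $\ell$, rather than by a quantity that decays with path length or with the dimension $d$.
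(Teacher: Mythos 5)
Your proposal is correct, and it takes a genuinely different route from the paper's own proof. Both arguments begin the same way (a directed tree has no $v$-structures, so only the adjacency condition of \cref{defn:strongtreefaith} matters), but the paper then splits on four configurations of $\ell$ ($\ell=\emptyset$, $\ell\in\an(j)$, $\ell\in\de(j)$, $\ell\in\de(k)$) and, in the descendant cases, writes $X_\ell$ as an explicit linear function of $X_j$ (resp.\ $X_k$) with path-product coefficients, computes the full $3\times3$ covariance of $(X_j,X_k,X_\ell)$ and then the conditional $2\times2$ matrix, the key quantity being the coefficient-to-noise bound $b^2/\nu^2\lesssim 1$. You replace all of this bookkeeping with a single reusable lemma --- $\var(X_j\mid X_\ell)\ge(2M^3)^{-1}$ uniformly in $\ell$, proved by monotonicity of Gaussian conditional variance plus the Markov reduction $X_j\indep X_\ell\mid X_v$ to the adjacent node $v$ on the path --- and a two-case split on whether $\ell\in\de(k)$. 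Your Case A subsumes the paper's first three cases, and also the common-ancestor (sibling-branch) configuration, which the paper's four figure-driven cases do not literally list (though the paper's $\an(j)$ computation covers it verbatim, since it only uses that $\ell$ is a non-descendant of $j$ and $k$); so your case analysis is exhaustive on its face. Your Case B is mathematically equivalent to the paper's fourth case: since $1/\var(X_k\mid X_\ell)=b_2^2/\nu_2^2+1/\var(X_k)$, your conditional-variance bound and the paper's bound $b_2^2/\nu_2^2\lesssim1$ carry the same information. The cancellation you flag as the crux does go through: writing $1-\rho_{jk}^2=\sigma_k^2/\var(X_k)$ and $1-\rho_{k\ell}^2=\var(X_k\mid X_\ell)/\var(X_k)$, and using $1-\rho_{jk}^2\rho_{k\ell}^2\le(1-\rho_{jk}^2)+(1-\rho_{k\ell}^2)$, gives
\begin{align*}
\frac{1-\rho_{k\ell}^2}{1-\rho_{jk}^2\rho_{k\ell}^2}\ \ge\ \frac{1}{1+\sigma_k^2/\var(X_k\mid X_\ell)}\ \ge\ \frac{1}{1+2M^4}\,,
\end{align*}
so that together with $\rho_{jk}^2\ge(1+M^4)^{-1}$ you indeed land on a bound of the form $(1+O(M^8))^{-1}$, matching your claim. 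What your route buys is modularity, explicit constants in $M$, and no path-length-dependent matrix computations; what the paper's route buys is closed-form expressions for each conditional correlation, which make the exact dependence on the SEM parameters visible.
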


\begin{proof}[Proof of \cref{lem:cconstant}]

\begin{figure}[!hb]
    \centering
    \begin{tikzpicture}[->,>=stealth',node distance=1cm, auto]
    \node[est] (Xj) {$X_j$};
    \node[est, above = of Xj] (d1) {$\cdots$};
    \node[est, above = of d1] (Xl2) {$X_\ell^{(2)}$};
    \node[est, below left = of Xj] (Xk) {$X_k$};
    \node[est, below left = of Xk] (d2) {$\cdots$};
    \node[est, below left = of d2] (Xl4) {$X_\ell^{(4)}$};
    \node[est, below right = of Xj] (d3) {$\cdots$};
    \node[est, below right = of d3] (Xl3) {$X_\ell^{(3)}$};
    \path[pil] (Xl2) edgenode {} (d1);
    \path[pil] (d1) edgenode {} (Xj);
    \path[pil] (Xj) edgenode {} (Xk);
    \path[pil] (Xk) edgenode {} (d2);
    \path[pil] (d2) edgenode {} (Xl4);
    \path[pil] (Xj) edgenode {} (d3);
    \path[pil] (d3) edgenode {} (Xl3);
    \end{tikzpicture}  
    \caption{Four cases of $\ell$ to verify for $c$-strong Tree-faithfulness, indicated by the superscript of $X_\ell$. The first case is when $\ell=\emptyset$. The second, third and fourth are when $\ell$ is the ancestor of $j$, descendant of $j$ and descendant of $k$.}
    \label{fig:cconstant}
\end{figure}
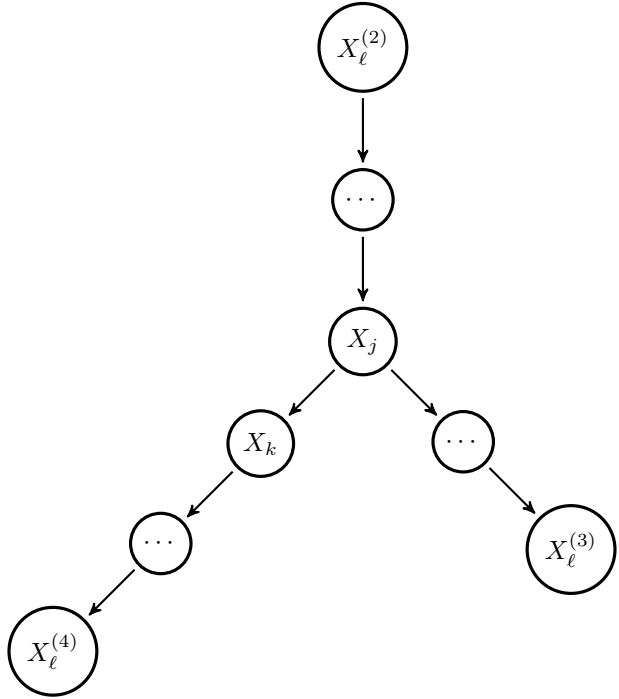

Since a directed tree $T$ does not have any $v$-structures, we only need to verify adjacency faithfulness in \cref{defn:strongtreefaith}. 
For any two nodes connected as $j\to k$, we want to check whether $\rho(X_j,X_k\given X_\ell)$ is lower bounded by some constant for $\ell\in V\cup \{\emptyset\}\setminus\{j,k\}$. There are four cases of $\ell$ to consider, see \cref{fig:cconstant}:
\begin{itemize}
    \item $\ell=\emptyset$: To simplify the notation, we write 
    \begin{align*}
        X_k = \beta_k \times X_j + \eta_k
    \end{align*}
    with $\beta_k\in \mathbb{R}$ and $|\beta_k|\in [M^{-1},M]$, $\var(\eta_k) = \sigma^2_k$. We also write $V_j^2:=\var(X_j)\ge \sigma_j^2$. Hence,
    \begin{align*}
        \rho(X_j,X_k) = \frac{\beta_kV_j^2}{\sqrt{V_j^2}\sqrt{\beta_k^2V_j^2 + \sigma_k^2}} = \frac{1}{\sqrt{1 + \sigma^2_k/\beta_k^2V_j^2}} \ge \frac{1}{\sqrt{1 + \sigma^2_k/\beta_k^2\sigma_j^2}} \gtrsim 1\,.
    \end{align*}
    \item $\ell\in \an(j)$: Write $V^2_{j\given \ell}=\var(X_j\given X_\ell) \ge \sigma_j^2$, hence
    \begin{align*}
        \rho(X_j,X_k\given X_\ell) = \frac{\beta_kV_{j\given \ell}^2}{\sqrt{V_{j\given \ell}^2}\sqrt{\beta_k^2V_{j\given \ell}^2 + \sigma_k^2}} \ge \frac{1}{\sqrt{1 + \sigma^2_k/\beta_k^2\sigma_j^2}} \gtrsim 1\,.
    \end{align*}
    \item $\ell\in \de(j)$: Suppose the directed path from $j$ to $\ell$ is
    $j\to h_1\to h_2\to \ldots \to h_q \to \ell$, $q$ can be $0$, then we can write
    \begin{align*}
        X_\ell = b_1 X_j + u_1\,,
    \end{align*}
    with 
    \begin{align*}
        b_1 = \beta_\ell\prod_{i=1}^q\beta_{h_i} , \qquad u_1 = \eta_\ell + \beta_\ell\sum_{i=1}^q \eta_{h_i}\prod_{t=i+1}^q\beta_{h_t} \,,
    \end{align*}
    and
    \begin{align*}
        \nu_1^2:=\var(u_1) = \sigma^2_\ell + \beta_\ell^2\sum_{i=1}^q\sigma_{h_i}^2\prod_{t=i+1}^q\beta^2_{h_t} \ge \beta_\ell^2 \sigma^2_{h_1}\prod_{t=2}^q\beta^2_{h_t}\,.
    \end{align*}
    So we have $b_1^2 / \nu_1^2 \le \beta_{h_1}^2 / \sigma^2_{h_1} \asymp 1$. The covariance among $X_{j}, X_k,X_\ell$ is
    \begin{align*}
        \cov(X_j,X_k,X_\ell) = \begin{pmatrix}
            V_j^2 & \beta_kV_j^2 & b_1V_j^2 \\
            * & \beta_k^2V_j^2+\sigma^2_k & b_1\beta_kV_j^2 \\
            * & * & b^2_1V_j^2 + \nu_1^2
        \end{pmatrix}\,.
    \end{align*}
    Then the conditional covariance is 
    \begin{align*}
        \cov(X_j,X_k\given X_\ell) \propto \begin{pmatrix}
            \nu_1^2 & \beta_k\nu^2_1 \\
            *& \beta_k^2\nu_1^2 + \sigma_k^2b_1^2+\sigma^2_k\nu_1^2 / V_j^2
        \end{pmatrix} \,.
    \end{align*}
    Therefore,
    \begin{align*}
        \rho(X_j,X_k\given X_\ell) = \frac{1}{\sqrt{1 + \frac{\sigma_k^2}{\beta_k^2}\times \frac{b_1^2}{\nu_1^2} + \frac{\sigma_k^2}{V_j^2\beta_k^2}}} \gtrsim 1\,.
    \end{align*}
    \item $\ell\in \de(k)$: Similarly, we can write 
    \begin{align*}
        X_\ell = b_2 X_k + u_2, \qquad \var(u_2) = \nu^2_2\,,
    \end{align*}
    with $b_2^2/\nu_2^2\lesssim 1$.
    The covariance among $X_{j}, X_k,X_\ell$ is
    \begin{align*}
        \cov(X_j,X_k,X_\ell) = \begin{pmatrix}
            V_j^2 & \beta_kV_j^2 & b_2\beta_kV_j^2 \\
            * & V_k^2 & b_2V_k^2 \\
            * & * & b^2_2V_k^2 + \nu_2^2
          \end{pmatrix}\,.
    \end{align*}
    Then the conditional covariance is 
    \begin{align*}
        \cov(X_j,X_k\given X_\ell) \propto \begin{pmatrix}
            b_2^2\sigma_k^2V_j^2 + \nu_2^2V_j^2 & \beta_kV_j^2\nu_2^2 \\
            *& \nu^2_2V_k^2
        \end{pmatrix} \,.
    \end{align*}
    Therefore,
    \begin{align*}
        \rho(X_j,X_k\given X_\ell) = \frac{1}{\sqrt{(1 + \frac{\sigma^2_k}{\beta_k^2V_j^2})(1 + \frac{b_2^2}{\nu_2^2}\sigma_k^2)}}\gtrsim 1\,.
    \end{align*}
\end{itemize}
In all four cases, $\rho(X_j,X_k\given X_\ell) \gtrsim 1$, thus $c$-strong Tree-faithfulness is satisfied with some $c\asymp 1$.
\end{proof}

\begin{lemma}\label{lem:distlnvsstrucln}
    Let $\mathcal{A}$ denote some distribution learning algorithm such that given a tree-structured distribution $P$,
    $\mathcal{A}$ takes data from $P$ and outputs $\widehat{P}$ with $\dkl(P\|\widehat{P})\le \eps$. 
    If $\eps\gtrsim c^2$, then for any estimator $\widehat{T}(\widehat{P})$ for $\mec$ using solely $\widehat{P}$,
    \begin{align*}
        \inf_{\widehat{T}(\widehat{P})}
        \sup_{\substack{T\in \drtr \\ P\text{ is }c\text{-strong} \\ \text{Tree-faithful to }T}}
        \sup_{\mathcal{A}}
        \prob(\widehat{T}(\widehat{P})\ne \mec) = 1\,.
    \end{align*}
\end{lemma}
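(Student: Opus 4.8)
The plan is to exploit the fact that the estimator $\widehat{T}$ is a deterministic function of the learned distribution $\widehat{P}$ alone---it never sees the true $P$ or the raw samples. Consequently, it suffices to exhibit two directed trees $T_1,T_2\in\drtr$ whose Markov equivalence classes $\overline{T}_1\ne\overline{T}_2$ differ, together with $c$-strong tree-faithful distributions $P_1,P_2$ (to $T_1,T_2$ respectively) and a \emph{single} distribution $\widehat{P}$ with $\dkl(P_1\|\widehat{P})\le\eps$ and $\dkl(P_2\|\widehat{P})\le\eps$. Given such a gadget, fix any estimator $\widehat{T}(\cdot)$. The adversary $\mathcal{A}$ that ignores its data and always returns $\widehat{P}$ meets the $\eps$-guarantee for each $P_i$, hence lies in the scope of $\sup_{\mathcal{A}}$. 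Since $\widehat{T}(\widehat{P})$ is one fixed CPDAG, it can coincide with at most one of the two distinct targets $\overline{T}_1,\overline{T}_2$; so there is an index $i$ with $\widehat{T}(\widehat{P})\ne\overline{T}_i$, and choosing $T=T_i$, $P=P_i$ gives $\prob(\widehat{T}(\widehat{P})\ne\mec)=1$. As this holds for every estimator, the infimum over estimators of the supremum equals $1$.

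It remains to build the gadget, which I would place on three nodes and embed into a spanning path on $[d]$. Take $T_1$ to be the path $1-2-3-\cdots-d$ and $T_2$ the path $2-1-3-\cdots-d$; both are directed trees (orient from one end), they share the sub-path $3-\cdots-d$, and their skeletons differ only in whether the edge $\{2,3\}$ or $\{1,3\}$ is present. Since a directed tree has no $v$-structures, its MEC is determined by its skeleton alone, so $\overline{T}_1\ne\overline{T}_2$. I would realize all three laws as zero-mean unit-variance Gaussians that agree on the backbone and differ only in the $\{1,2,3\}$ block. Working in partial correlations, $P_1$ is pinned by $\rho(X_1,X_3\mid X_2)=0$ (the missing edge $\{1,3\}$) while its present edges stay strong, e.g. $\rho(X_1,X_2)$ a fixed constant and $\rho(X_2,X_3\mid X_1)$ a suitable constant multiple of $c$; then $P_2$ is the image of $P_1$ under swapping coordinates $1$ and $2$, so it is pinned by $\rho(X_2,X_3\mid X_1)=0$. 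Finally I take $\widehat{P}$ to be the swap-symmetric ``midpoint,'' with $\rho(X_1,X_3\mid X_2)=\rho(X_2,X_3\mid X_1)\asymp c$, so that $\widehat{P}$ differs from each $P_i$ by only an $O(c)$ perturbation of two covariance entries; swap-symmetry also gives $\dkl(P_1\|\widehat{P})=\dkl(P_2\|\widehat{P})$ automatically.

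The quantitative heart, and the step I expect to be most delicate, is bounding $\dkl(P_i\|\widehat{P})$ by $O(c^2)$ while simultaneously certifying that both endpoints are genuinely $c$-strong tree-faithful. For the KL bound I would write the correlation matrix of $P_i$ as $\widehat{R}+\Delta$ with $\|\Delta\|_F=O(c)$ and expand the Gaussian formula $2\dkl=\tr(\widehat{R}^{-1}R_i)-d+\log\det\widehat{R}-\log\det R_i$; the first-order term $\tr(\widehat{R}^{-1}\Delta)$ cancels, leaving $\tfrac12\tr\big((\widehat{R}^{-1}\Delta)^2\big)+O(c^3)=O(c^2)$, with an absolute implied constant because $\widehat{R}$ is well-conditioned (its off-backbone entries are $O(c)$ and the backbone correlations are bounded away from $\pm1$). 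This yields $\dkl(P_i\|\widehat{P})\le Kc^2$, so the hypothesis $\eps\gtrsim c^2$ (with constant $K$) makes both KL terms at most $\eps$, as required.

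The accompanying bookkeeping obstacle is calibrating the weak-edge correlations as the correct constant multiple of $c$: after the $O(c)$ move to the midpoint, $P_1$ must still satisfy $\rho(X_2,X_3\mid\,\cdot\,)\ge c$ on its present edge $\{2,3\}$ (and symmetrically $P_2$ on $\{1,3\}$), while its absent-edge partial correlation stays exactly $0$. Verifying all four tree-faithfulness inequalities per present edge, as in \cref{defn:strongtreefaith} and in the spirit of the case analysis in \cref{lem:cconstant}, is where the constants must be pinned down carefully; the reduction itself and the MEC-distinctness are then immediate.
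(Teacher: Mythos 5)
Your proposal is correct, and at its core it is the same reduction as the paper's: exhibit two directed trees with distinct MECs together with two $c$-strong tree-faithful distributions that a single output $\widehat{P}$ can $\eps$-approximate simultaneously (possible exactly because $\eps\gtrsim c^2$), so that any estimator reading only $\widehat{P}$ must be wrong on one of the two instances with probability one. The difference is in how the confusable pair is realized. The paper dispenses with a ``midpoint'' entirely by setting $\widehat{P}=P'$, the second faithful distribution itself: one KL bound is then trivially zero, the other, $\dkl(P\|P')\le 4c^2$, is a short explicit computation using the SEM factorization with every edge weight equal to $\beta=\sqrt{2}c$, and $c$-strong tree-faithfulness of both endpoints comes for free from \cref{lem:tree:beta}, the same parametrization already verified for the Fano lower bound in \cref{app:tree:lb}. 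Your symmetric-midpoint gadget (paths differing in the edge $\{2,3\}$ versus $\{1,3\}$, constant backbone correlations, $\asymp c$ distinguishing edges) is also valid, and your second-order expansion of the Gaussian KL, with the first-order trace term cancelling against the log-determinant and leaving $\tfrac12\tr\big((\widehat{R}^{-1}\Delta)^2\big)+O(c^3)=O(c^2)$, is sound given that $\widehat{R}$ is well-conditioned. What your route buys is symmetry ($\dkl(P_1\|\widehat{P})=\dkl(P_2\|\widehat{P})$ automatically); what it costs is exactly the two obligations you flag but do not discharge: verifying $c$-strong tree-faithfulness of $P_1,P_2$ with mixed (constant and $\asymp c$) edge strengths under all single-node conditionings, and pinning down the constant $K$ in $\dkl(P_i\|\widehat{P})\le Kc^2$. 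If you want the cheapest completion of your own write-up, replace the midpoint by $\widehat{P}=P_2$ and give the distinguishing edges uniform weight $\sqrt{2}c$, so that \cref{lem:tree:beta} applies verbatim and only one nontrivial KL bound remains.
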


\begin{proof}

We construct $T,T'\in \drtr$ with different skeletons, and $P,P'$ Markov and strongly faithful to $T,T'$ respectively such that $\dkl(P\|P')\asymp c^2$. 
In this way, consider the ground truth to be $T$ and $P$, and supppose $\mathcal{A}$ outputs $\widehat{P}=P'$. Then we have $\dkl(P\|\widehat{P})\le \eps$ with $\eps\asymp c^2$. While $P$ and $\widehat{P}=P'$ correspond to different structures, thus any estimator using solely $\widehat{P}$ cannot uniformly find the true structure.

It remains to show the construction: Consider $T$ and $T'$ as follows:

\begin{figure}[h]
\centering 
\subfigure[Tree $T$]{
    \begin{tikzpicture}[->,>=stealth',node distance=1cm, auto]
    \node[est] (X2) {$X_2$};
    \node[est, right = of X2] (X3) {$X_3$};
    \node[est, left = of X2] (X1) {$X_1$};
    \node[est, below left = of X2, xshift=-1.5cm] (X4) {$X_4$};
    \node[est, right = of X4] (X5) {$X_5$};
    \node[est, right = of X5] (dot1) {$\cdots$};
    \node[est, right = of dot1] (Xd) {$X_d$};
    \path[pil] (X1) edgenode {} (X2);
    \path[pil] (X2) edgenode {} (X3);
    \path[pil] (X2) edgenode {} (X4);
    \path[pil] (X2) edgenode {} (X5);
    \path[pil] (X2) edgenode {} (dot1);
    \path[pil] (X2) edgenode {} (Xd);
    \end{tikzpicture}
}
\hspace{0.8cm}
\subfigure[Tree $T'$]{
    \begin{tikzpicture}[->,>=stealth',node distance=1cm, auto]
    \node[est] (X2) {$X_2$};
    \node[est, right = of X2] (X3) {$X_3$};
    \node[est, right = of X3] (X1) {$X_1$};
    \node[est, below left = of X2, xshift=-1.5cm] (X4) {$X_4$};
    \node[est, right = of X4] (X5) {$X_5$};
    \node[est, right = of X5] (dot1) {$\cdots$};
    \node[est, right = of dot1] (Xd) {$X_d$};
    \path[pil] (X3) edgenode {} (X1);
    \path[pil] (X2) edgenode {} (X3);
    \path[pil] (X2) edgenode {} (X4);
    \path[pil] (X2) edgenode {} (X5);
    \path[pil] (X2) edgenode {} (dot1);
    \path[pil] (X2) edgenode {} (Xd);
    \end{tikzpicture}
}
\caption{Construction for \cref{lem:distlnvsstrucln}.}
\label{fig:lemA2}
\end{figure}

We parameterize $P,P'$ as the lower bound construction in \cref{app:tree:lb}:
\begin{align*}
    X_k = \beta X_{\pa(k)} + \eta_k\,,
\end{align*}
where $\beta=\sqrt{2}c$, $\eta_k\sim\mathcal{N}(0,1)$ and \cref{lem:tree:beta} makes sure they are $c$-strong tree faithful. 
Now we only need to compute the KL divergence:
\begin{align*}
    \dkl(P\|P') & = \E_P \log\frac{\prod_k P(X_k\given \pa(k))}{\prod_j P'(X_j\given \pa(j))} \\
    & = \E_P \log\frac{P(X_3\given X_2)P(P_2\given X_1)P(X_1)}{P(X_1\given X_3)P(X_3\given X_2)P(X_2)}\\
    & = \E_P \frac{1}{2} \bigg( X_2^2 + (X_1-\beta X_3)^2 - X_1^2 - (X_3-\beta X_2)^2 \bigg)\\
    & = \frac{1}{2} \bigg( -\beta^4 + \beta^6 + 2(\beta^2 + \beta^4 - \beta^3) \bigg) \\
    & \le 2\beta^2 = 4c^2\,,
\end{align*}
which completes the proof.
\end{proof}

\section{Proofs of \cref{sec:chow-liu tree}}
\label{sec:deffer_proof}

\subsection{Preliminaries}

We first state some useful lemmas. They are well-known results for the concentration bound on variances and covariances. For completeness, we provide the proof below.

\begin{lemma}[Guarantees of variance recovery]
\label{lm:deltasigma}
    Suppose $X$ is the random variable of $\N(0,\sigma^2)$ for some $\sigma>0$.
    Let $X^{(1)},\dots,X^{(n)}$ be the i.i.d. samples of $X$ and $\wh{\sigma}^2$ be $\frac{1}{n}\sum_{i=1}^n (X^{(i)})^2$.
    Then, for any $t\in(0,1)$, we have
    \begin{align*}
        \abs{\wh{\sigma}^2 - \sigma^2} < t \sigma^2
    \end{align*}
    with probability $1-O(e^{-\Omega(nt^2)})$.
\end{lemma}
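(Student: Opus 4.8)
The plan is to reduce the statement to a standard chi-squared concentration bound. Since $X\sim\N(0,\sigma^2)$, each rescaled sample $Y_i := (X^{(i)}/\sigma)^2$ is an i.i.d.\ $\chi^2_1$ variable with $\E[Y_i]=1$. Writing $\wh\sigma^2/\sigma^2 = \frac1n\sum_{i=1}^n Y_i$, the target event $\abs{\wh\sigma^2-\sigma^2}<t\sigma^2$ is identical to $\abs{\frac1n\sum_{i=1}^n Y_i - 1} < t$. So it suffices to show that the empirical mean of $n$ i.i.d.\ $\chi^2_1$ variables lies within $t$ of its mean with the stated probability, i.e.\ that $Z:=\sum_{i=1}^n Y_i\sim\chi^2_n$ obeys $\abs{Z-n}<nt$ except with probability $O(e^{-\Omega(nt^2)})$.

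Second, I would bound the upper and lower tails of $Z$ separately by the Chernoff/MGF method, using $\E[e^{\lambda Y_i}]=(1-2\lambda)^{-1/2}$, valid for $\lambda<1/2$. Equivalently, the centered variables $Y_i-1$ are sub-exponential with absolute constant parameters, so Bernstein's inequality gives $\prob(\abs{\frac1n\sum_{i=1}^n(Y_i-1)}\ge t)\le 2\exp(-c\,n\min(t^2,t))$ for a universal constant $c>0$. This directly delivers the desired form once the exponent is identified.

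The one point requiring care --- and the reason the hypothesis $t\in(0,1)$ appears --- is that the $\chi^2$ distribution is only sub-exponential, not sub-Gaussian: in the large-deviation regime $t\ge 1$ the tail decays like $e^{-\Omega(nt)}$ rather than $e^{-\Omega(nt^2)}$. Restricting to $t\in(0,1)$ places us in the small-deviation regime where $\min(t^2,t)=t^2$, which is exactly what produces the claimed $e^{-\Omega(nt^2)}$ bound. The only remaining bookkeeping is the mild asymmetry between the upper and lower chi-squared tails (their MGF-optimal exponents differ by constants), but both yield an exponent of order $nt^2$ in this regime, so combining them by a union bound completes the argument.
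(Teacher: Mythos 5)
Your proposal is correct and follows essentially the same route as the paper: the paper also bounds each tail by the Chernoff/MGF method applied to the sum of squared Gaussians (computing $\E[e^{\lambda X^2/n}]$ explicitly and optimizing $\lambda$), which is precisely the chi-squared/sub-exponential Bernstein bound you invoke, just carried out inline without first normalizing by $\sigma^2$. Your remark that the restriction $t\in(0,1)$ keeps the argument in the small-deviation regime where the exponent is $\Omega(nt^2)$ rather than $\Omega(nt)$ accurately reflects the role of the corresponding constraint (e.g.\ $2\sigma^2\lambda/n<1/2$) in the paper's calculation.
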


\begin{proof}

We first show that the probability of $\wh{\sigma}^2 > (1+t) \sigma^2$ is bounded by $e^{-\Omega(nt^2)}$ and the other inequality $\wh{\sigma}^2 < (1-t) \sigma^2$ follows similarly.

Note that 
\begin{align*}
    \wh{\sigma}^2 > (1+t) \sigma^2
    & \Leftrightarrow
    e^{\lambda \frac{1}{n}\sum_{i=1}^n (X^{(i)})^2} > e^{\lambda (1+t)\sigma^2} & \text{for any $\lambda>0$.}
\end{align*}
By Markov inequality, the probability of $\wh{\sigma}^2 > (1+t)\sigma^2$ is bounded by
\begin{align*}
    \E(e^{\lambda \frac{1}{n}\sum_{i=1}^n (X^{(i)})^2}) / e^{\lambda (1+t) \sigma^2}
    & =
    \underbrace{\E(e^{\lambda \frac{1}{n}X^2})^n}_{\text{by i.i.d. assumption}} / e^{\lambda (1+t) \sigma^2}. \numberthis \label{eq:markov_var}
\end{align*}
Hence, we need to bound the term $\E(e^{\lambda \frac{1}{n}X^2})$.
\begin{align*}
    \E(e^{\lambda \frac{1}{n}X^2})
    & =
    \int_{-\infty}^\infty \frac{1}{\sqrt{2\pi \sigma^2}} e^{\lambda\frac{1}{n}x^2} e^{-\frac{1}{2\sigma^2}x^2} d x 
    =
    \frac{1}{\sqrt{1-\frac{2\sigma^2\lambda}{n}}} & \text{as long as $\frac{1}{2\sigma^2} - \frac{\lambda}{n} > 0$}
\end{align*}
Moreover, using the inequality $\frac{1}{\sqrt{1-x}} \leq e^{\frac{1}{2}x+x^2}$ for $x < \frac{1}{2}$, we have 
\begin{align*}
    \E(e^{\lambda \frac{1}{n}X^2})
    & \leq
    e^{\frac{\sigma^2\lambda}{n} + \frac{4\sigma^4\lambda^2}{n^2}} & \text{as long as $\frac{2\sigma^2\lambda}{n}<\frac{1}{2}$} \numberthis\label{eq:exp_var}
\end{align*}
Plugging \eqref{eq:exp_var} into \eqref{eq:markov_var}, the probability of $\wh{\sigma}^2 > (1+t) \sigma^2$ is bounded by
\begin{align*}
    (e^{\frac{\sigma^2\lambda}{n} + \frac{4\sigma^4\lambda^2}{n^2}})^n/e^{\lambda(1+t)\sigma^2}
    & =
    e^{-\frac{4\sigma^4\lambda^2}{n} + \lambda t \sigma^2}
    =
    e^{-\frac{4\sigma^4}{n}(\lambda - \frac{nt}{8\sigma^2})^2 + \frac{nt^2}{16}}
\end{align*}
and, by taking $\lambda = \frac{nt}{8\sigma^2}$, it becomes $e^{-\frac{nt^2}{16}}$.
    
\end{proof}

\begin{lemma}[Guarantees of correlation coefficient recovery]
\label{lm:deltarho}
Suppose $(X,Y)$ is the random variable of $\N(0,\Sigma)$ for some positive definite $\Sigma = \begin{bmatrix}
    \sigma_x^2 & \rho_{xy} \\
    \rho_{xy} & \sigma_y^2
\end{bmatrix}$.
Let $(X^{(1)},Y^{(1)}),\dots,(X^{(n)},Y^{(n)})$ be the i.i.d. samples of $(X,Y)$ and $\wh{\rho}_{xy}$ be $\frac{1}{n}\sum_{i=1}^n {X^{(i)}}{Y^{(i)}}$.
Then, for any $t\in(0,1)$, we have
\begin{align*}
    \abs{\wh{\rho}_{xy} - \rho_{xy}}
    & < 
    t \sigma_x \sigma_y
\end{align*}
with probability $1-O(e^{-\Omega(nt^2)})$.
\end{lemma}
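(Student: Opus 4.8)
The plan is to reduce this bivariate statement to the univariate variance-concentration result of \cref{lm:deltasigma} via a polarization identity. First I would rescale to unit marginal variances: set $U^{(i)} = X^{(i)}/\sigma_x$ and $V^{(i)} = Y^{(i)}/\sigma_y$, so that $(U,V)$ is a centered Gaussian pair with unit marginal variances and correlation $r \defeq \rho_{xy}/(\sigma_x\sigma_y) = \E[UV]$; note $\abs{r}<1$ since $\Sigma$ is positive definite. Because $\wh{\rho}_{xy} = \sigma_x\sigma_y\cdot\frac1n\sum_i U^{(i)}V^{(i)}$ and $\rho_{xy} = \sigma_x\sigma_y\,r$, it suffices to show $\bigl|\frac1n\sum_i U^{(i)}V^{(i)} - r\bigr| < t$ with the stated probability.

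Next I would apply the polarization identity $uv = \tfrac14\bigl((u+v)^2 - (u-v)^2\bigr)$. Writing $A^{(i)} = U^{(i)} + V^{(i)}$ and $B^{(i)} = U^{(i)} - V^{(i)}$, each $A^{(i)}\sim\N(0,2+2r)$ and $B^{(i)}\sim\N(0,2-2r)$, both nondegenerate because $\abs{r}<1$. Then $\frac1n\sum_i U^{(i)}V^{(i)} = \tfrac14\bigl(\frac1n\sum_i (A^{(i)})^2 - \frac1n\sum_i (B^{(i)})^2\bigr)$, while $r = \tfrac14\bigl(\E[A^2] - \E[B^2]\bigr)$, so the target deviation splits as $\tfrac14$ times the difference of the two empirical-second-moment deviations of the $A$'s and $B$'s.

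The key step is to invoke \cref{lm:deltasigma} separately for the i.i.d. samples $\{A^{(i)}\}$ (with variance $2+2r$) and $\{B^{(i)}\}$ (with variance $2-2r$), each at relative tolerance $s\asymp t$. This yields $\bigl|\frac1n\sum_i (A^{(i)})^2 - (2+2r)\bigr| < s(2+2r)$ and the analogous bound for $B$, each failing with probability $O(e^{-\Omega(ns^2)})$. A triangle inequality over the two terms then bounds the absolute deviation by $\tfrac14\bigl(s(2+2r) + s(2-2r)\bigr) = s$; the crucial observation making this conversion clean is that the \emph{relative} guarantee of \cref{lm:deltasigma} becomes an \emph{absolute} bound here precisely because the normalized variances satisfy $2\pm 2r \le 4$, so no dependence on how close $r$ is to $\pm1$ survives. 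Choosing $s = t$ and taking a union bound over the two failure events closes the argument with overall failure probability $O(e^{-\Omega(nt^2)})$.

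The main obstacle is bookkeeping rather than a conceptual hurdle: one must confirm that the relative-to-absolute conversion is clean (handled by the $2\pm 2r\le 4$ bound) and that the degenerate boundary cases $r=\pm1$ are ruled out by positive-definiteness of $\Sigma$, so that \cref{lm:deltasigma} applies verbatim to both $A$ and $B$. I expect no genuine difficulty beyond tracking these constants.
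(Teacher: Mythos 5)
Your proposal is correct, but it takes a genuinely different route from the paper. The paper proves \cref{lm:deltarho} directly by a Chernoff argument on the product variable: it computes the bivariate Gaussian integral giving the moment generating function $\E\big(e^{\lambda XY/n}\big) = \big(1-\tfrac{2\rho_{xy}\lambda}{n}-\tfrac{\lambda^2\Delta}{n^2}\big)^{-1/2}$ (valid only under an explicit domain restriction on $\lambda$), then applies Markov's inequality and optimizes $\lambda = \tfrac{tn}{19\sigma_x\sigma_y}$ to obtain the explicit tail bound $e^{-t^2n/38}$. You instead normalize to unit variances, polarize $UV=\tfrac14\big((U+V)^2-(U-V)^2\big)$, and invoke \cref{lm:deltasigma} twice — once for $A\sim\N(0,2+2r)$ and once for $B\sim\N(0,2-2r)$ — followed by a triangle inequality and a union bound; your observation that the relative guarantees become an absolute bound because $2\pm 2r\le 4$ is exactly what makes the reduction clean, and positive definiteness of $\Sigma$ correctly rules out the degenerate cases $r=\pm1$. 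Your approach is more modular (it reuses the univariate lemma and avoids the bivariate MGF computation and its integrability bookkeeping), at the cost of less explicit constants inside the $\Omega(\cdot)$; the paper's direct computation yields sharper, fully explicit exponents. It is worth noting that the paper itself employs essentially your polarization device elsewhere, in the proof of \cref{lem:pc:varcovbound}, where the sample covariance is decomposed into differences of $\chi^2$ sums — so your argument is very much in the spirit of techniques already used in the paper, just applied at a different point.
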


\begin{proof}

We first show that the probability of $\wh{\rho}_{xy} > \rho_{xy} + t \sigma_x\sigma_y$ is bounded by $e^{-\Omega(nt^2)}$ and the other inequality $\wh{\rho}_{xy} < \rho - t \sigma_x\sigma_y$ follows similarly.

Note that
\begin{align*}
    \wh{\rho}_{xy} > \rho_{xy} + t\sigma_x\sigma_y
    & \Leftrightarrow
    e^{\lambda \frac{1}{n} \sum_{i=1}^n {X^{(i)}}{Y^{(i)}}} > e^{\lambda(\rho_{xy} + t\sigma_x\sigma_y)} & \text{for any $\lambda>0$.}
\end{align*}
By Markov inequality, the probability of $\wh{\rho}_{xy} > \rho_{xy} + t \sigma_x\sigma_y$ is bounded by 
\begin{align*}
    \E(e^{\lambda\frac{1}{n}\sum_{i=1}^n{X^{(i)}}{Y^{(i)}}}) / e^{\lambda(\rho_{xy} + t\sigma_x\sigma_y)}
    & =
    \underbrace{\E(e^{\lambda\frac{1}{n}XY})^n}_{\text{by i.i.d. assumption}} / e^{\lambda(\rho_{xy} + t\sigma_x\sigma_y)}. \numberthis\label{eq:markov_covar}
\end{align*}
Hence, we need to bound the term $\E(e^{\lambda\frac{1}{n}XY})$.
\begin{align*}
    \E(e^{\lambda\frac{1}{n}XY})
    & =
    \int_{-\infty}^{\infty}\int_{-\infty}^{\infty}\frac{1}{\sqrt{(2\pi)^2(\sigma_x^2\sigma_y^2 - \rho_{xy}^2)}} e^{\lambda\frac{1}{n}xy} e^{-\frac{1}{2(\sigma_x^2\sigma_y^2 - \rho_{xy}^2)} (\sigma_y^2x^2 - 2\rho_{xy}xy + \sigma_x^2y^2)} dxdy \\
    & =
    \frac{1}{\sqrt{1-\frac{2\rho_{xy}\lambda}{n} - \frac{\lambda^2\Delta}{n^2}}} \qquad \text{as long as $\sigma_x^2\sigma_y^2 > (\rho_{xy} + \frac{\lambda\Delta}{n})^2$ where $\Delta = \sigma_x^2\sigma_y^2 - \rho_{xy}^2$} 
\end{align*}
Moreover, using the inequality $\frac{1}{\sqrt{1-x}} \leq e^{\frac{1}{2}x + x^2}$ for $x<\frac{1}{2}$, we have
\begin{align*}
    \E(e^{\lambda\frac{1}{n}XY})
    & \leq
    e^{\frac{1}{2}(\frac{2\rho_{xy}\lambda}{n} +\frac{\lambda^2\Delta}{n^2})+(\frac{2\rho_{xy}\lambda}{n} +\frac{\lambda^2\Delta}{n^2})^2} \qquad \text{as long as $\frac{2\rho_{xy}\lambda}{n} +\frac{\lambda^2\Delta}{n^2} < \frac{1}{2}$} \\
    & \leq
    e^{\frac{\rho_{xy}\lambda}{n} + \frac{\lambda^2\sigma_x^2\sigma_y^2}{2n^2} + (\frac{2\sigma_x\sigma_y\lambda}{n} + \frac{\lambda^2\sigma_x^2\sigma_y^2}{n^2})^2} \qquad \text{using $\rho_{xy}\leq \sigma_x\sigma_y$ and $\Delta\leq \sigma_x^2\sigma_y^2$} \\
    & \leq
    e^{\frac{\rho_{xy}\lambda}{n} + \frac{19\lambda^2\sigma_x^2\sigma_y^2}{2n^2}} \qquad \text{as long as $\frac{\lambda\sigma_x\sigma_y}{n}<1$}\numberthis\label{eq:exp_covar}
\end{align*}

Plugging \eqref{eq:exp_covar} into \eqref{eq:markov_covar}, the probability of $\wh{\rho}_{xy} > \rho_{xy} + t\sigma_x\sigma_y$ is bounded by 
\begin{align*}
    (e^{\frac{\rho_{xy}\lambda}{n} + \frac{19\lambda^2\sigma_x^2\sigma_y^2}{2n^2}})^n / e^{\lambda(\rho_{xy} + t\sigma_x\sigma_y)}
    & =
    e^{-\frac{19\sigma_x^2\sigma_y^2}{2n}\lambda^2 + t\sigma_x\sigma_y\lambda}
    =
    e^{-\frac{19\sigma_x^2\sigma_y^2}{2n}(\lambda - \frac{tn}{19\sigma_x\sigma_y})^2 + \frac{t^2n}{38}}
\end{align*}
and, by taking $\lambda = \frac{tn}{19\sigma_x\sigma_y}$, it becomes $e^{-\frac{t^2n}{38}}$.
 
\end{proof}

\begin{corollary}
\label{cor:rhosigma}
    Suppose $(X_1,\dots,X_d)$ is the random variable of $\N(0,\Sigma)$ for some positive definite $\Sigma$ where $\rho_{ij} := \Sigma_{ij}$ and $\sigma_i^2 := \Sigma_{ii}$ for $i,j=1,\dots,d$.
    Let $(X_1^{(1)},\dots,X_d^{(1)}),\dots,(X_1^{(n)},\dots,X_d^{(n)})$ be the i.i.d. samples of $(X_1,\dots,X_d)$ and 
    \begin{align*}
        \wh{\rho}_{jk} = \frac{1}{n}\sum_{i=1}^n X_j^{(i)}X_k^{(i)} \qquad \text{and} \qquad\wh{\sigma}_j^2 = \frac{1}{n}\sum_{i=1}^n (X_j^{(i)})^2.
    \end{align*}
    Then, when $n = \Theta(\frac{1}{t^2}\log \frac{d}{\delta})$, we have, for all $j,k=1,\dots,d$,
    \begin{align*}
        \abs{\wh{\rho}_{jk} - \rho_{jk}} \leq t \sigma_j\sigma_k \qquad \text{and} \qquad\abs{\wh{\sigma}_j^2 - \sigma_j^2} \leq t \sigma_j^2
    \end{align*}
    with probability $1-\delta$.
\end{corollary}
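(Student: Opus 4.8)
The plan is to derive the corollary as a straightforward union bound over the per-coordinate and per-pair concentration guarantees already proved in \cref{lm:deltasigma} and \cref{lm:deltarho}. First I would observe that each diagonal estimate $\wh{\sigma}_j^2 = \frac{1}{n}\sum_i (X_j^{(i)})^2$ is exactly the empirical second moment of the marginal $\N(0,\sigma_j^2)$, so \cref{lm:deltasigma} applies verbatim (for any fixed $t\in(0,1)$) and gives $\abs{\wh{\sigma}_j^2 - \sigma_j^2} \le t\sigma_j^2$ with failure probability $O(e^{-\Omega(nt^2)})$. Likewise, each off-diagonal estimate $\wh{\rho}_{jk}$ depends only on the pair $(X_j,X_k)$, whose joint law is the bivariate Gaussian with covariance equal to the $\{j,k\}$ principal submatrix of $\Sigma$. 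Since every principal submatrix of a positive-definite matrix is itself positive definite, the hypothesis of \cref{lm:deltarho} is met, yielding $\abs{\wh{\rho}_{jk} - \rho_{jk}} \le t\sigma_j\sigma_k$ with failure probability $O(e^{-\Omega(nt^2)})$.

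Next I would count the events: there are $d$ diagonal inequalities and $\binom{d}{2} < d^2$ off-diagonal inequalities, so a union bound shows that the probability that any one of these $O(d^2)$ bounds fails is at most $O(d^2)\cdot e^{-\Omega(nt^2)}$. To push this below $\delta$ it suffices that $nt^2$ exceeds a constant multiple of $\log(d^2/\delta) = 2\log d + \log(1/\delta) = \Theta\!\big(\log\frac{d}{\delta}\big)$, i.e. $n = \Theta\!\big(\frac{1}{t^2}\log\frac{d}{\delta}\big)$, which is precisely the stated sample size. On this event all $d$ variance bounds and all $\binom{d}{2}$ correlation bounds hold simultaneously, giving the conclusion.

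There is essentially no hard step here: the two lemmas do all of the probabilistic work, and the proof reduces to combining them. The only points that require care are (i) verifying that the bivariate marginals inherit positive-definiteness from $\Sigma$, so that \cref{lm:deltarho} is genuinely applicable to every pair, and (ii) tracking that the constants hidden in the $\Omega(nt^2)$ exponents are absorbed correctly, so that the $d^2$-fold union bound costs only an extra $\log d$ factor inside the required $n$. Both are routine bookkeeping rather than genuine obstacles.
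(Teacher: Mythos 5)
Your proof is correct and matches the paper's intended argument exactly: the paper states \cref{cor:rhosigma} as an immediate consequence of \cref{lm:deltasigma} and \cref{lm:deltarho}, combined via precisely the union bound over the $O(d^2)$ variance and covariance events that you describe, with the sample size $n = \Theta\bigl(\frac{1}{t^2}\log\frac{d}{\delta}\bigr)$ chosen to absorb the $\log(d^2/\delta)$ union-bound cost. Your additional remarks on positive-definiteness of principal submatrices and constant bookkeeping are routine and consistent with the paper.
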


\subsection{Conditional Mutual Information Tester}\label{sec:cmit}

In this subsection, we define the conditional mutual information tester used in our main algorithm.

Suppose $(X,Y,Z)$ is the random variable of $\N(0,\Sigma)$ for some positive definite $\Sigma =
\begin{bmatrix}
    \sigma_x^2 & \rho_{xy} & \rho_{xz} \\
    \rho_{xy} & \sigma_y^2 & \rho_{yz} \\
    \rho_{xz} & \rho_{xy} & \sigma_z^2
\end{bmatrix}$.
WLOG, we can express $(X,Y,Z)$ as
\begin{align*}
    Y & = \beta_{xy} X + \eta_y \\
    Z & = \gamma_{xz} X + \gamma_{yz}Y + \eta_z
\end{align*}
for some random variables $\eta_y,\eta_z$ where 
\begin{align*}
    \beta_{xy} = \frac{\rho_{xy}}{\sigma_x^2} \qquad & \text{and} \qquad
    \begin{bmatrix}
        \gamma_{xz} \\
        \gamma_{yz}
    \end{bmatrix} 
    =
    \begin{bmatrix}
        \sigma_x^2 & \rho_{xy} \\
        \rho_{xy} & \sigma_y^2
    \end{bmatrix}^{-1}
    \begin{bmatrix}
        \rho_{xz} \\
        \rho_{yz}
    \end{bmatrix}.
\end{align*}
Let $\sigma_{y\mid x}^2$ be $\E(\eta_y^2)$ and $\sigma_{z\mid x,y}^2$ be $\E(\eta_z^2)$.
Recall that the mutual information $I(X;Y)$ and the conditional mutual information $I(Y;Z\mid X)$ are defined (equivalently) as
\begin{align*}
    I(X;Y)
    & :=
    \frac{1}{2}\log(1+\frac{\beta_{xy}^2\sigma_x^2}{\sigma_{y\mid x}^2}) \qquad \text{and} \qquad
    I(Y;Z \mid X)
    :=
    \frac{1}{2}\log(1+\frac{\gamma_{yz}^2\sigma_{y\mid x}^2}{\sigma_{z\mid x,y}^2})
\end{align*}
Let $(X^{(1)},Y^{(1)},Z^{(1)}),\dots,(X^{(n)},Y^{(n)},Z^{(n)})$ be the i.i.d. samples of $(X,Y,Z)$.
Then we define the empirical mutual information $\wh{I}(X;Y)$ and the empirical mutual information $\wh{I}(Y;Z\mid X)$ to be 
\begin{align*}
    \wh{I}(X;Y)
    & :=
    \frac{1}{2}\log(1+\frac{\wh{\beta}_{xy}^2 \wh{\sigma}_x^2}{\wh{\sigma}_{y\mid x}^2}) \qquad \text{and} \qquad 
    \wh{I}(Y;Z\mid X)
    :=
    \frac{1}{2}\log(1+\frac{\wh{\gamma}_{yz}^2 \wh{\sigma}_{y\mid x}^2}{\wh{\sigma}_{z\mid x,y}^2}) \numberthis\label{eq:emp_mi}
\end{align*}
where the $\wh{\cdot}$ mark indicates the empirical version of the quantity.
Namely,
\begin{align*}
\begin{cases}
    \wh{\sigma}_x^2
    & :=
    \frac{1}{n}\sum_{i=1}^n (X^{(i)})^2, \qquad
    \wh{\sigma}_y^2
    :=
    \frac{1}{n}\sum_{i=1}^n (Y^{(i)})^2, \qquad
    \wh{\sigma}_z^2
    :=
    \frac{1}{n}\sum_{i=1}^n (Z^{(i)})^2, \\
    \wh{\rho}_{xy}
    & :=
    \frac{1}{n}\sum_{i=1}^n X^{(i)}Y^{(i)}, \qquad
    \wh{\rho}_{xz}
    :=
    \frac{1}{n}\sum_{i=1}^n X^{(i)}Z^{(i)}, \qquad
    \wh{\rho}_{yz}
    :=
    \frac{1}{n}\sum_{i=1}^n Y^{(i)}Z^{(i)}, \\
    \wh{\beta}_{xy}
    & :=
    \frac{\wh{\rho}_{xy}}{\wh{\sigma}_x^2},\qquad
    \begin{bmatrix}
        \wh{\gamma}_{xz} \\
        \wh{\gamma}_{yz}
    \end{bmatrix}
    :=
    \begin{bmatrix}
        \wh{\sigma}_x^2 & \wh{\rho}_{xy} \\
        \wh{\rho}_{xy} & \wh{\sigma}_y^2
    \end{bmatrix}^{-1}
    \begin{bmatrix}
        \wh{\rho}_{xz} \\
        \wh{\rho}_{yz}
    \end{bmatrix}, \\
    \wh{\sigma}_{y\mid x}^2
    & :=
    \wh{\sigma}_y^2 - \wh{\beta}_{xy}^2\wh{\sigma}_x^2   \qquad \text{and} \qquad
    \wh{\sigma}_{z\mid x,y}^2
    :=
    \wh{\sigma}_z^2 - \wh{\gamma}_{xz}^2\wh{\sigma}_x^2 - \wh{\gamma}_{yz}^2\wh{\sigma}_{y\mid x}^2.
\end{cases} \numberthis\label{eq:emp_def}
\end{align*}
Note that the above quantities depend on the samples but we will not emphasize it if the set of samples is clear in the context.
Also, it is known that, by the chain rule of mutual information,
\begin{align*}
    I(X;Y) - I(X;Z)
    & =
    I(X;Y\mid Z) - I(X;Z \mid Y)\numberthis\label{eq:conmui_chain}\\
    \wh{I}(X;Y) - \wh{I}(X;Z)
    & =
    \wh{I}(X;Y\mid Z) - \wh{I}(X;Z \mid Y). \numberthis\label{eq:conmui_chain_emp}
\end{align*}

From now on, when we have a $d$-dimensional random variable $(X_1,\dots,X_d)$, we abuse the notations defined in \eqref{eq:emp_def} by replacing $x,y,z$ with $i,j,k$ for $i,j,k=1,\dots,d$.

\begin{lemma} \label{lem:betasigmabound}
Suppose $(X_1,\dots,X_d)$ is the random variable of $\N(0,\Sigma)$ for some positive definite $\Sigma$ where $\rho_{ij} := \Sigma_{ij}$ and $\sigma_i^2 := \Sigma_{ii}$ for $i,j=1,\dots,d$.
Let $(X_1^{(1)},\dots,X_d^{(1)}),\dots,(X_1^{(n)},\dots,X_d^{(n)})$ be the i.i.d. samples of $(X_1,\dots,X_d)$ and $\wh{\gamma}_{ij}, \wh{\sigma}_{i\mid j}, \wh{\sigma}_{i\mid j,k}$ be the quantities defined in \eqref{eq:emp_def} for $i,j,k=1,\dots,d$.
Then, when $n=\Theta(\frac{1}{t^2}\log\frac{d}{\delta})$, we have, for all $i,j,k=1,\dots,d$, 
\begin{align*}
    \abs{\wh{\gamma}_{ij} - \gamma_{ij}} < t \frac{\sigma_{j\mid i,k}}{\sigma_{i\mid k}}, \qquad 
    \abs{\wh{\sigma}_{i\mid j}^2 - \sigma_{i\mid j}^2} < t \sigma_{i\mid j}^2 \qquad \text{and} \qquad
    \abs{\wh{\sigma}_{i\mid j,k}^2 - \sigma_{i\mid j,k}^2} < t \sigma_{i\mid j,k}^2
\end{align*}
with probability $1-\delta$.
\end{lemma}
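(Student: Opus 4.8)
The plan is to reduce every empirical quantity in \eqref{eq:emp_def} to a function of the empirical Gram matrix of a fixed family of i.i.d.\ standard Gaussians, obtained by an exact \emph{whitening} of each pair/triple, and then to control that Gram matrix using the concentration estimates already proved. The main obstacle is that the statement asks for \emph{relative} (multiplicative) error bounds: naively propagating the absolute errors of \cref{cor:rhosigma} through the rational formulas in \eqref{eq:emp_def} fails whenever a conditional variance is much smaller than the marginal variances (i.e.\ under strong correlation), since then the absolute error of, say, $\wh\sigma_{i\mid j}^2$ need not be small \emph{compared to} $\sigma_{i\mid j}^2$ itself. Whitening is precisely what removes this difficulty, by applying concentration to genuinely standard Gaussians and isolating the correct scale factor out front.

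Concretely, for the triple quantities I would fix $(X_i,X_j,X_k)$ and whiten sequentially using the \emph{population} parameters: write $X_k=\sigma_k U_1$, then $X_i=(\cdot)U_1+\sigma_{i\mid k}U_2$, then $X_j=(\cdot)U_1+(\cdot)U_2+\sigma_{j\mid i,k}U_3$, where $U_1,U_2,U_3$ are i.i.d.\ $\N(0,1)$ and the coefficients are the true regression coefficients. The key point is that each $U_p^{(m)}$ is a \emph{fixed} (population-determined) linear combination of $X^{(m)}$, so the whitened samples are i.i.d.\ $\N(0,I_3)$. Since the least-squares fit depends only on the linear span of the regressors, and $\{X_i,X_k\}$ spans the same subspace as $\{U_1,U_2\}$, every empirical quantity can be rewritten \emph{exactly} in terms of the entries $\wh A_{pq}=\tfrac1n\sum_m U_p^{(m)}U_q^{(m)}$, whose population value is $I_3$. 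The two-variable quantity $\wh\sigma_{i\mid j}^2$ is handled in the analogous pair-frame $X_j=\sigma_j U_1$, $X_i=(\cdot)U_1+\sigma_{i\mid j}U_2$.

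Next I carry out the algebra that makes the scaling explicit, using the identity that an empirical conditional variance equals the empirical residual sum of squares. A short computation collapses the cross terms and yields the clean multiplicative forms
\[
\wh\sigma_{i\mid j}^2=\sigma_{i\mid j}^2\Big(\wh A_{22}-\wh A_{12}^2/\wh A_{11}\Big),\qquad \wh\sigma_{i\mid j,k}^2=\sigma_{i\mid j,k}^2\big(\wh A_{33}-\wh v\T\wh M^{-1}\wh v\big),
\]
where $\wh M=(\wh A_{pq})_{p,q\in\{1,2\}}$ and $\wh v=(\wh A_{13},\wh A_{23})\T$ are the obvious blocks of $\wh A$ in the triple-frame. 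An entirely analogous (slightly longer) computation gives $\wh\gamma_{ij}-\gamma_{ij}=\tfrac{\sigma_{j\mid i,k}}{\sigma_{i\mid k}}\,h(\wh A)$ for a rational $h$ with $h(I_3)=0$; the prefactor $\sigma_{j\mid i,k}/\sigma_{i\mid k}$ is exactly the classical standard-error scale of a partial regression coefficient (residual spread of the response over residual spread of the predictor), which is why it appears in the statement. In each case the multiplicative factor equals $1$ (resp.\ the additive term equals $0$) at $\wh A=I_3$ and is smooth in $\wh A$ near $I_3$, so a small entrywise perturbation of $\wh A$ forces the factor within an $O(1)$ multiple of that perturbation.

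Finally, for the concentration I would apply \cref{lm:deltasigma} to each diagonal moment $\wh A_{pp}$ and \cref{lm:deltarho} (with unit variances) to each off-diagonal $\wh A_{pq}$, in each frame, giving $\max_{p,q}|\wh A_{pq}-(I)_{pq}|\le t'$ with per-moment failure probability $O(e^{-\Omega(nt'^2)})$. A union bound over all $O(d^3)$ triples (and the constantly many moments per frame) costs only a factor $\log(d^3/\delta)=\Theta(\log(d/\delta))$, so $n=\Theta(t'^{-2}\log(d/\delta))$ secures the event for all $i,j,k$ simultaneously with probability $1-\delta$. On this event the computed factors deviate from their nominal values by $O(t')$---here I use $t'\le 1/2$ to bound denominators such as $\wh A_{11}\ge 1-t'$ and to control $\wh M^{-1}$---and choosing $t'$ a small constant multiple of $t$ delivers the three claimed relative bounds.
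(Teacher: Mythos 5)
Your proposal is correct, and it takes a genuinely different---and more careful---route than the paper's own argument, which consists of the single sentence that the lemma follows from \cref{cor:rhosigma} and the definitions in \eqref{eq:emp_def} ``by a straightforward calculation.'' Your opening concern is exactly right: the entrywise bounds $|\wh\rho_{jk}-\rho_{jk}|\le t\sigma_j\sigma_k$ and $|\wh\sigma_j^2-\sigma_j^2|\le t\sigma_j^2$ of \cref{cor:rhosigma} cannot, by themselves, yield the \emph{relative} bounds claimed in the lemma. For instance, with $\sigma_i=\sigma_j=1$ and $\rho_{ij}=1-\eta$, the perturbation $\wh\sigma_i^2=1+t$, $\wh\sigma_j^2=\sigma_j^2$, $\wh\rho_{ij}=\rho_{ij}$ is fully consistent with the corollary's conclusion, yet it gives $|\wh\sigma_{i\mid j}^2-\sigma_{i\mid j}^2|=t$ while $\sigma_{i\mid j}^2\approx 2\eta$, so the relative error blows up as $\eta\to 0$. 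Hence some use of joint Gaussianity beyond entrywise covariance concentration is unavoidable, and your whitening argument supplies precisely that: writing every empirical quantity exactly in terms of the Gram matrix $\wh A$ of population-whitened i.i.d.\ standard normals (equivalently, applying \cref{lm:deltasigma} and \cref{lm:deltarho} to the successive regression residuals rather than to the raw coordinates) places the concentration at the correct scale. The exact cancellations you assert do hold---I verified $\wh\sigma_{i\mid j}^2=\sigma_{i\mid j}^2\bigl(\wh A_{22}-\wh A_{12}^2/\wh A_{11}\bigr)$ and $\wh\gamma_{ij}-\gamma_{ij}=\tfrac{\sigma_{j\mid i,k}}{\sigma_{i\mid k}}\,(\wh M^{-1}\wh v)_2$, using that the span of the regressors is preserved by the whitening---and the union bound over $O(d^3)$ frames gives the stated sample complexity. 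In short, your proof fills a real analytical gap that the paper's one-line proof glosses over.

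One caveat worth making explicit: your step ``an empirical conditional variance equals the empirical residual sum of squares'' is true for $\wh\sigma_{i\mid j}^2$, but for $\wh\sigma_{z\mid x,y}^2$ it holds only under a corrected reading of \eqref{eq:emp_def}. As literally written, that display subtracts $\wh\gamma_{xz}^2\wh\sigma_x^2$ using the \emph{joint} coefficient $\wh\gamma_{xz}$, whereas the Gram--Schmidt decomposition underlying the RSS identity requires the \emph{marginal} coefficient $\wh\rho_{xz}/\wh\sigma_x^2$. Under the literal definition the lemma is false even in the population limit: with unit variances and all pairwise correlations equal to $1/2$, the literal formula converges to $29/36$ while $\sigma_{z\mid x,y}^2=2/3$. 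So your RSS reading is not a gap but the correct repair of an apparent typo in \eqref{eq:emp_def}; you should state this interpretation explicitly when writing the argument up.
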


\begin{proof}

By using Corollary \ref{cor:rhosigma} and the definition in \eqref{eq:emp_def}, it can be done by a straightforward calculation.
    
\end{proof}

\begin{restatable}[Conditional Mutual Information Tester]{theorem}{thmcondintest}
\label{thm:conditionalindependence}
Suppose $(X_1,\dots,X_d)$ is the random variable of $\N(0,\Sigma)$ for some positive definite $\Sigma$.
Let $(X_1^{(1)},\dots,X_d^{(1)}),\dots,(X_1^{(n)},\dots,X_d^{(n)})$ be the i.i.d. samples of $(X_1,\dots,X_d)$
For any sufficiently small $\eps,\delta >0$, if 
  \[
    n= \Theta(\frac{1}{\eps} \log\frac{d}{\delta}),
  \]
  the following results hold for all $i,j,k= 1,\dots,d$ with probability $1-\delta$:
  \begin{enumerate}
  \item If $I(X_i; X_j \mid X_k ) = 0$, then $\wh{I}(X_i;X_j \mid X_k) \leq \frac{\eps}{100}$.
  \item If $I(X_i;X_j \mid X_k) \ge \eps$, then $\wh{I}(X_i;X_j \mid X_k) > \frac{1}{20}I(X_i;X_j \mid X_k) - \frac{\eps}{40}$.
  \end{enumerate}
Combining these two cases, we have
\begin{align*}
    \wh{I}(X_i;X_j \mid X_k) > \frac{1}{20}I(X_i;X_j \mid X_k) - \frac{\eps}{40}.
\end{align*}

\end{restatable}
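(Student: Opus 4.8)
The plan is to push everything through the uniform concentration bounds already proved in \cref{cor:rhosigma} and \cref{lem:betasigmabound}, and then to track how the additive error in the estimated regression coefficient and the multiplicative errors in the estimated conditional variances propagate through the closed form of the (conditional) mutual information. First I would fix the accuracy parameter in those lemmas to be $t=\sqrt{\eps/150}$, the constant chosen so that Case~1 lands exactly at $\eps/100$. Since $n=\Theta(\tfrac1\eps\log\tfrac d\delta)=\Theta(\tfrac1{t^2}\log\tfrac d\delta)$, \cref{lem:betasigmabound} guarantees that with probability at least $1-\delta$, simultaneously for all triples $(i,j,k)$,
\begin{align*}
    |\wh\gamma_{ij}-\gamma_{ij}|<t\,\tfrac{\sigma_{j\mid i,k}}{\sigma_{i\mid k}},\qquad
    |\wh\sigma_{i\mid k}^2-\sigma_{i\mid k}^2|<t\,\sigma_{i\mid k}^2,\qquad
    |\wh\sigma_{j\mid i,k}^2-\sigma_{j\mid i,k}^2|<t\,\sigma_{j\mid i,k}^2 .
\end{align*}
The union bound over the $O(d^3)$ triples is already absorbed into the $\log d$ factor, so the remaining argument is entirely deterministic on this event. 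Writing $A:=\gamma_{ij}^2\sigma_{i\mid k}^2/\sigma_{j\mid i,k}^2$ and $\wh A:=\wh\gamma_{ij}^2\wh\sigma_{i\mid k}^2/\wh\sigma_{j\mid i,k}^2$, the definitions of $I$ and $\wh I$ give $I(X_i;X_j\mid X_k)=\tfrac12\log(1+A)$ and $\wh I(X_i;X_j\mid X_k)=\tfrac12\log(1+\wh A)$, so the entire theorem reduces to comparing $\log(1+\wh A)$ with $\log(1+A)$.

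The technical core is a two-sided estimate of $\wh A^{1/2}=|\wh\gamma_{ij}|\,\wh\sigma_{i\mid k}/\wh\sigma_{j\mid i,k}$ in terms of $A^{1/2}=|\gamma_{ij}|\,\sigma_{i\mid k}/\sigma_{j\mid i,k}$. Factoring $\wh A^{1/2}$ as $\bigl(|\wh\gamma_{ij}|\tfrac{\sigma_{i\mid k}}{\sigma_{j\mid i,k}}\bigr)\cdot\tfrac{\wh\sigma_{i\mid k}/\sigma_{i\mid k}}{\wh\sigma_{j\mid i,k}/\sigma_{j\mid i,k}}$ and inserting the three displayed bounds yields, on the good event,
\begin{align*}
    (A^{1/2}-t)_+^2\,\tfrac{1-t}{1+t}\ \le\ \wh A\ \le\ (A^{1/2}+t)^2\,\tfrac{1+t}{1-t}.
\end{align*}
For Case~1, $I=0$ forces $\gamma_{ij}=0$, hence $A=0$, and the upper bound reads $\wh A\le t^2\tfrac{1+t}{1-t}\le 3t^2$ for small $t$; then $\wh I\le\tfrac12\wh A\le\tfrac32 t^2=\eps/100$, as claimed. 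For the lower-bound side I would record the elementary inequality $\log(1+cA)\ge c\log(1+A)$ for $c\in(0,1]$ and $A\ge0$ (immediate by differentiating in $A$), which converts a multiplicative control $\wh A\ge cA$ into $\wh I\ge c\,I$.

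For Case~2 and the combined bound I would split at $I=\eps/2$. If $I<\eps/2$ then $\tfrac1{20}I-\tfrac\eps{40}<0\le\wh I$, so the combined inequality is trivial. If $I\ge\eps/2$ then $A\ge e^{\eps}-1\ge\eps$, so $A^{1/2}\ge\sqrt\eps=\sqrt{150}\,t\gg t$; consequently $(A^{1/2}-t)_+\ge\tfrac{11}{12}A^{1/2}$ and the lower bound gives $\wh A\ge c\,A$ for a constant $c$ bounded away from $0$ (close to $0.8$ for small $\eps$), whence $\wh I\ge c\,I\ge\tfrac1{20}I>\tfrac1{20}I-\tfrac\eps{40}$. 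When additionally $I\ge\eps$ the same computation with $A\ge e^{2\eps}-1\ge2\eps$ tightens $c$ toward $1$, which is more than enough for the stated Case~2 inequality.

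I expect the main obstacle to be the near-threshold regime, where $A^{1/2}$ and $t$ are of the same order $\sqrt\eps$. Because \cref{lem:betasigmabound} controls the regression coefficient only through an \emph{additive} error $t\,\sigma_{j\mid i,k}/\sigma_{i\mid k}$ that is independent of the magnitude of $\gamma_{ij}$, the multiplicative lower bound $\wh A\gtrsim A$ survives only thanks to the deliberately small constant in $t=\sqrt{\eps/150}$, which forces $A^{1/2}\ge\sqrt\eps$ to dominate $t$. Getting this separation of constants right — together with the tight bookkeeping of the $(1\pm t)$ and $\tfrac{1\pm t}{1\mp t}$ factors as they pass through the logarithm — is the one place where genuine care, rather than routine algebra, is required.
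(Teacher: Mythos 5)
Your proposal is correct and follows essentially the same route as the paper's proof: both invoke \cref{lem:betasigmabound} with accuracy $t\asymp\sqrt{\eps}$, factor the empirical quantity $\wh{A}=\wh\gamma_{ij}^2\,\wh\sigma_{i\mid k}^2/\wh\sigma_{j\mid i,k}^2$ into the population ratio times $(1\pm t)$ correction factors, and finish with elementary logarithm inequalities. Two points where you diverge, both in your favor: (i) in Case~2 you exploit the separation $A^{1/2}\ge\sqrt{\eps}=\sqrt{150}\,t$ to obtain a genuinely multiplicative bound $\wh{A}\ge cA$ with $c\approx 0.8$, hence $\wh{I}\ge cI$, which is cleaner and strictly stronger than the paper's chain through $(a-b)^2\ge\tfrac12 a^2-b^2$ followed by $\log(1+a/3)\ge\tfrac1{10}\log(1+a)$ (a chain whose intermediate step $\wh{A}\ge A-\sqrt{\eps/100}\ge 0$ is stated sloppily in the paper and only works after the kind of careful separation-of-scales argument you make explicit); (ii) your split at $I=\eps/2$ establishes the combined inequality $\wh{I}>\tfrac1{20}I-\tfrac{\eps}{40}$ for \emph{every} value of $I$, including the intermediate regime $0<I(X_i;X_j\mid X_k)<\eps$, which the paper's two cases do not literally cover yet which is implicitly relied upon when the theorem is applied to arbitrary conditional mutual informations in the proof of \cref{tm:realizable}. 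So your version is not only correct but quietly closes a small gap in the paper's own argument.
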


\begin{proof}

By Lemma \ref{lem:betasigmabound}, with $\Theta(\frac{1}{\eps}\log\frac{d}{\delta})$, we have the following properties for all $i,j,k=1,\dots,d$ with probability $1-\delta$:
\begin{align*}
    \abs{\wh{\gamma}_{ij} - \gamma_{ij}} < \frac{\sqrt{\eps}}{100}\frac{\sigma_{j\mid i,k}}{\sigma_{i\mid k}}, \qquad 
    \abs{\wh{\sigma}_{i\mid j}^2 - \sigma_{i\mid j}^2} < \frac{\sqrt{\eps}}{100}  \sigma_{i\mid j}^2 \qquad \text{and} \qquad
    \abs{\wh{\sigma}_{i\mid j,k}^2 - \sigma_{i\mid j,k}^2} < \frac{\sqrt{\eps}}{100}  \sigma_{i\mid j,k}^2 \numberthis\label{eq:conmui_betasigma}
\end{align*}

We express 
\begin{align*}
    \wh{I}(X_i;X_j \mid X_k)
    & =
    \frac{1}{2} \log\left(1 + \wh{\gamma}_{ij}^2 \frac{\wh{\sigma}_{i\mid k}^2}{\wh{\sigma}_{j\mid i,k}^2} \right)
    =
    \frac{1}{2} \log\left(1 + \wh{\gamma}_{ij}^2\frac{\sigma_{i\mid k}^2}{\sigma_{j\mid i,k}^2}\cdot \frac{\wh{\sigma}_{i\mid k}^2}{\sigma_{i\mid k}^2} \cdot \frac{\sigma_{j\mid i,k}^2}{\wh{\sigma}_{j\mid i,k}^2} \right) \numberthis\label{eq:condmui}
\end{align*}
We bound each term $\wh{\gamma}_{ij}^2\frac{\sigma_{i\mid k}^2}{\sigma_{j\mid i,k}^2}$, $\frac{\wh{\sigma}_{i\mid k}^2}{\sigma_{i\mid k}^2}$ and $\frac{\sigma_{j\mid i,k}^2}{\wh{\sigma}_{j\mid i,k}^2}$ for the cases of $I(X_i;X_j \mid X_k) = 0$ and $I(X_i;X_j \mid X_k)\geq \eps$.

We first prove if $I(X_i;X_j \mid X_k) = 0$ then $\wh{I}(X_i;X_j \mid X_k)\leq \frac{\eps}{100}$.
Since $I(X_i;X_j \mid X_k) = 0$, it means that $X_i$ and $X_j$ are independent conditioned on $X_k$ and hence $\gamma_{ij}=0$.
We have $\wh{\gamma}_{ij}^2 \frac{\sigma_{i\mid k}^2}{\sigma_{j\mid i,k}^2} \leq \frac{\eps}{100}$.
For the term $\frac{\wh{\sigma}_{i\mid k}^2}{\sigma_{i\mid k}^2}$, we have $\frac{\wh{\sigma}_{i\mid k}^2}{\sigma_{i\mid k}^2} \leq 1+\frac{\sqrt{\eps}}{100}$ by \eqref{eq:conmui_betasigma}.
For the term $\frac{\sigma_{j\mid i,k}^2}{\wh{\sigma}_{j\mid i,k}^2}$, we have $\frac{\sigma_{j\mid i,k}^2}{\wh{\sigma}_{j\mid i,k}^2} \leq \frac{1}{1-\frac{\sqrt{\eps}}{100}}$ by \eqref{eq:conmui_betasigma}.
Plugging these three inequalities into \eqref{eq:condmui}, we have
\begin{align*}
    \wh{I}(X_i;X_j \mid X_k)
    &= 
    \frac{1}{2} \log\left(1 + \wh{\gamma}_{ij}^2\frac{\sigma_{i\mid k}^2}{\sigma_{j\mid i,k}^2}\cdot \frac{\wh{\sigma}_{i\mid k}^2}{\sigma_{i\mid k}^2} \cdot \frac{\sigma_{j\mid i,k}^2}{\wh{\sigma}_{j\mid i,k}^2} \right) 
    \leq 
    \frac{1}{2} \log \left( 1 + \frac{\eps}{100}\cdot \frac{1 + \frac{\sqrt{\eps}}{100}}{ 1 -\frac{\sqrt{\eps}}{100}} \right) 
    \leq 
     \frac{\eps}{100}
\end{align*}
for any sufficiently small $\eps>0$.

We now prove if $I(X_i;X_j \mid X_k) \ge \eps$, then $\wh{I}(X_i;X_j \mid X_k) > \frac{1}{20}I(X_i;X_j \mid X_k) - \frac{\eps}{40}$.
Since $I(X_i;X_j \mid X_k) \ge \eps$, it means that $I(X_i;X_j \mid X_k)  = \frac{1}{2}\log(1 + \gamma_{ij}^2\frac{\sigma_{i\mid k}^2}{\sigma_{j\mid i,k}^2}) \ge \eps$ and hence $\gamma_{ij}^2\frac{\sigma_{i\mid k}^2}{\sigma_{j\mid i,k}^2} \ge e^{2\eps}-1 \ge 2\eps$.
We have $\wh{\gamma}_{ij}^2\frac{\sigma_{i\mid k}^2}{\sigma_{j\mid i,k}^2}  \ge \gamma_{ij}^2\frac{\sigma_{i\mid k}^2}{\sigma_{j\mid i,k}^2} - \sqrt{\frac{\eps}{100}} \ge  0$.
For the term $\frac{\wh{\sigma}_{i\mid k}^2}{\sigma_{i\mid k}^2}$, we have $\frac{\wh{\sigma}_{i\mid k}^2}{\sigma_{i\mid k}^2} \ge 1-\frac{\sqrt{\eps}}{100}$ by \eqref{eq:conmui_betasigma}.
For the term $\frac{\sigma_{j\mid i,k}^2}{\wh{\sigma}_{j\mid i,k}^2}$, we have $\frac{\sigma_{j\mid i,k}^2}{\wh{\sigma}_{j\mid i,k}^2} \ge \frac{1}{1+\frac{\sqrt{\eps}}{100}}$ by \eqref{eq:conmui_betasigma}.
Plugging these three inequalities into \eqref{eq:condmui}, we have
\begin{align*}
    \wh{I}(X_i;X_j \mid X_k)
    & =
    \frac{1}{2} \log\left(1 + \wh{\gamma}_{ij}^2\frac{\sigma_{i\mid k}^2}{\sigma_{j\mid i,k}^2}\cdot \frac{\wh{\sigma}_{i\mid k}^2}{\sigma_{i\mid k}^2} \cdot \frac{\sigma_{j\mid i,k}^2}{\wh{\sigma}_{j\mid i,k}^2} \right) 
    \geq
    \frac{1}{2} \log\left(1 + \bigg(\gamma_{ij}^2\frac{\sigma_{i\mid k}^2}{\sigma_{j\mid i,k}^2} - \sqrt{\frac{\eps}{100}}\bigg)^2 \cdot \frac{1-\frac{\sqrt{\eps}}{100}}{1+\frac{\sqrt{\eps}}{100}}\right).
\end{align*}
Note that, for any $a,b$, we have $(a-b)^2 \geq \frac{1}{2}a^2 - b^2$ which implies the term $\bigg(\gamma_{ij}^2\frac{\sigma_{i\mid k}^2}{\sigma_{j\mid i,k}^2} - \sqrt{\frac{\eps}{100}}\bigg)^2$ is larger than $\frac{1}{2}\gamma_{ij}^2\frac{\sigma_{i\mid k}^2}{\sigma_{j\mid i,k}^2} - \frac{\eps}{100}$.
Namely, we have
\begin{align*}
    \wh{I}(X_i;X_j \mid X_k)
    & \geq
    \frac{1}{2} \log\left(1 + \bigg(\frac{1}{2}\gamma_{ij}^2\frac{\sigma_{i\mid k}^2}{\sigma_{j\mid i,k}^2}  - \frac{\eps}{100}\bigg) \cdot \frac{1-\frac{\sqrt{\eps}}{100}}{1+\frac{\sqrt{\eps}}{100}}\right) \\
    &\geq 
    \frac{1}{2} \log\left(1 + \frac{1}{3}\gamma_{ij}^2\frac{\sigma_{i\mid k}^2}{\sigma_{j\mid i,k}^2}  - \frac{\eps}{100}\right) \\
    & \geq 
    \frac{1}{2} \log\left(1 + \frac{1}{3}\gamma_{ij}^2\frac{\sigma_{i\mid k}^2}{\sigma_{j\mid i,k}^2}\right) - \frac{\eps}{40}
\end{align*}
for any sufficiently small $\eps>0$.
Note that, for any $a>0$, $\log(1+\frac{1}{3}a) \geq \frac{1}{10}\log(1+a)$.
Namely, we have
\begin{align*}
    \wh{I}(X_i;X_j \mid X_k)
    & \geq
    \frac{1}{2} \log\left(1 + \frac{1}{3}\gamma_{ij}^2\frac{\sigma_{i\mid k}^2}{\sigma_{j\mid i,k}^2}\right) - \frac{\eps}{40}
    \geq 
    \frac{1}{20} \log\left(1 + \gamma_{ij}^2\frac{\sigma_{i\mid k}^2}{\sigma_{j\mid i,k}^2}\right) - \frac{\eps}{40}
    =
    \frac{1}{20}I(X_i;X_j \mid X_k) - \frac{\eps}{40}.\qedhere
\end{align*}
\end{proof}

\subsection{Distribution Learning Upper Bounds}
In this subsection, we give the formal proof of the upper bounds on the sample complexity for distribution learning in the non-realizable setting \cref{tm:nonrealizable} and realizable setting \cref{tm:realizable}:

\subsubsection{Non-realizable Case}
\thmupperboundnonrealizable*

\begin{proof}

Let $T^*$ be $\arg\min_{T\in\mathcal{T}} \kl{P}{P_T}$.
By \eqref{eq:kl_p_pt}, we express $\kl{P}{P_{\wh{T}}} - \kl{P}{P_{T^*}}$ as
\begin{align*}
    \kl{P}{P_{\wh{T}}} - \kl{P}{P_{T^*}} 
    & =
    -\sum_{(W,Z)\in \wh{T}} I(W;Z) + \sum_{(X,Y)\in T^*} I(X;Y)
\end{align*}
Since $\wh{T}$ is the output of \cref{algo:chow-liu}, we have
\begin{align*}
    \sum_{(X,Y)\in T^*} \wh{I}(X;Y) - \sum_{(W,Z)\in \wh{T}} \wh{I}(W;Z)
    & \leq
    0.
\end{align*}
Hence, we have
\begin{align*}
    \MoveEqLeft \kl{P}{P_{\wh{T}}} - \kl{P}{P_{T^*}}  \\
    & \leq
    \sum_{(W,Z)\in \wh{T}} \wh{I}(W;Z)-\sum_{(W,Z)\in \wh{T}} I(W;Z) + \sum_{(X,Y)\in T^*} I(X;Y) - \sum_{(X,Y)\in T^*} \wh{I}(X;Y)
\end{align*}
By the definition in \eqref{eq:emp_mi} and Corollary \ref{cor:rhosigma}, we can show that each $\abs{\wh{I}(X,Y) - I(X,Y)} < \frac{\eps}{d}$ for all $(X,Y)$ using  ${O}(\frac{d^2}{\eps^2} \log \frac{d}{\delta})$ samples.
Therefore, we have
\begin{align*}
    \kl{P}{P_{\wh{T}}} - \kl{P}{P_{T^*}} 
    <
    \eps.
\end{align*}

\end{proof}

\subsubsection{Realizable Case}

\begin{fact}[\citep{bhattacharyya2021near}]
\label{fact:spanningtree}
Let $T_1$ and $T_2$ be two spanning trees on $d$ vertices such that their symmetric difference consists of the edges $E = \{e_1, e_2, \dots, e_l\} \in T_1 \setminus T_2$ and $F = \{f_1, f_2, \dots, f_l\} \in T_2 \setminus T_1$. Then $E$ and $F$ can be paired up, say $
\langle e_i, f_i\rangle$, such that for all $i$, $T_1 \cup\{f_i\} \setminus \{e_i\} $ is a spanning tree.
\end{fact}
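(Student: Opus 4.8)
The plan is to recognize this as the bijective (symmetric) basis-exchange property of the graphic matroid and to prove it via Hall's marriage theorem. First I would set up a bipartite graph $H$ whose left vertex set is $E = T_1 \setminus T_2$ and whose right vertex set is $F = T_2 \setminus T_1$; note that $|E| = |F| = l$ since both $T_1$ and $T_2$ have exactly $d-1$ edges. I place an edge between $e \in E$ and $f \in F$ precisely when $T_1 \cup \{f\} \setminus \{e\}$ is again a spanning tree. A perfect matching in $H$ is exactly the desired pairing $\langle e_i, f_i \rangle$, so it suffices to verify Hall's condition on $H$.

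The adjacency in $H$ admits a clean combinatorial description that I would establish first: adding an edge $f \in F$ to the spanning tree $T_1$ creates a unique cycle $C_f$ (the fundamental cycle of $f$ with respect to $T_1$), and $T_1 \cup \{f\} \setminus \{e\}$ is a spanning tree if and only if $e$ lies on $C_f$. Consequently, for any $X \subseteq E$ the neighborhood is $N(X) = \{f \in F : C_f \cap X \neq \emptyset\}$, and the goal reduces to showing $|N(X)| \geq |X|$.

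The crux of the argument is to show that the edge set $(T_1 \setminus X) \cup N(X)$ is a connected spanning subgraph on the $d$ vertices. Here I would argue that every $f \in F \setminus N(X)$ has $C_f \cap X = \emptyset$, so its two endpoints are already joined by a path lying entirely inside $T_1 \setminus X$. Since $T_2 = (T_1 \cap T_2) \cup F$ is connected and $T_1 \cap T_2 \subseteq T_1 \setminus X$ (because $X \subseteq T_1 \setminus T_2$ is disjoint from $T_2$), any path in $T_2$ between two vertices can be rerouted inside $(T_1 \setminus X) \cup N(X)$ by replacing each of its edges in $F \setminus N(X)$ with a detour through $T_1 \setminus X$; hence $(T_1 \setminus X) \cup N(X)$ connects all $d$ vertices. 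This connectivity claim is the only delicate point, and I expect it to be the main obstacle.

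To finish, I would invoke the fact that a connected spanning subgraph on $d$ vertices has at least $d-1$ edges. Since $N(X)$ is disjoint from $T_1$ (and hence from $T_1 \setminus X$), and $|T_1 \setminus X| = (d-1) - |X|$, the edge count gives $(d-1) - |X| + |N(X)| \geq d-1$, i.e. $|N(X)| \geq |X|$. Thus Hall's condition holds, $H$ admits a perfect matching, and this matching is precisely the required pairing of $E$ with $F$ for which each $T_1 \cup \{f_i\} \setminus \{e_i\}$ is a spanning tree. Everything apart from the connectivity/spanning claim is routine bookkeeping with fundamental cycles and edge counts.
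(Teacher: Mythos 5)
Your proof is correct, but a comparison with ``the paper's own proof'' is moot here: the paper does not prove this statement at all --- it is imported verbatim as a Fact with a citation to \citep{bhattacharyya2021near}, so your argument is necessarily a new, self-contained derivation rather than a variant of an in-paper one. What you have written is the standard proof of bijective basis exchange for graphic matroids (essentially Brualdi's symmetric basis-exchange theorem specialized to spanning trees), and every step checks out. The characterization of adjacency in $H$ is right: removing an edge $e$ of the unique cycle $C_f$ of $T_1 \cup \{f\}$ leaves a connected graph on $d-1$ edges, hence a spanning tree, while removing $e \notin C_f$ leaves the cycle intact in a graph with $d-1$ edges, which is therefore disconnected. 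The connectivity claim you flag as the delicate point is handled correctly: for $f \in F \setminus N(X)$ the path $C_f \setminus \{f\}$ lies in $T_1$ and avoids $X$ (since $C_f \cap X = \emptyset$), and $T_1 \cap T_2$ is automatically disjoint from $X \subseteq T_1 \setminus T_2$, so every $T_2$-path reroutes into a walk in $(T_1 \setminus X) \cup N(X)$. The edge count then gives $(d-1) - |X| + |N(X)| \geq d-1$, i.e.\ Hall's condition, using that $N(X) \subseteq F$ is disjoint from $T_1$; and since $|E| = |F| = l$, the matching guaranteed by Hall's theorem is perfect, which is exactly the required pairing. One could quibble that this buys more than the paper needs --- the Fact is only invoked once, in the realizable upper bound, and a citation suffices there --- but as a blind reconstruction your proof is complete and is the argument one would expect behind the cited result.
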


\thmupperrealizabe*

\begin{proof}

We first consider the edge difference between $\wh{T}$ and $T^*$.
By \cref{fact:spanningtree}, we can pair up the edges in $\wh{T}\backslash T^*$ with the edges in $T^* \backslash \wh{T}$ such that $T^*\cup \{(W,Z)\} \backslash \{(X,Y)\}$ is also a spanning tree for any $(W,Z) \in \wh{T}\backslash T^*$ and $(X,Y) \in T^* \backslash \wh{T}$.
Let $\wh{T}\backslash T^*$ be $\{(W_1,Z_1),\dots,(W_k,Z_k)\}$ and $T^*\backslash\wh{T}$ be $\{(X_1,Y_1),\dots,(X_k,Y_k)\}$ such that $(W_i,Z_i)$ pairs up with $(X_i,Y_i)$ for $i=1,\dots,k$.
Because of that, there exists a path in $T^*$ from $W_i$ to $Z_i$ containing $X_i$ and $Y_i$.
Without loss of generality, we assume that the order of them is $W_i\leadsto X_i \mathdash Y_i \leadsto  Z_i$ in $T^*$.

Since $\wh{T}$ is the output of \cref{algo:chow-liu}, we have 
\begin{align*}
    \sum_{i=1}^k \wh{I}(X_i;Y_i) - \sum_{i=1}^k \wh{I}(W_i;Z_i)
    & \leq
    0
\end{align*}
by the definition of the maximal spanning tree.
We first expand the LHS as
\begin{align*}
    \MoveEqLeft \sum_{i=1}^k \wh{I}(X_i,Y_i) - \sum_{i=1}^k \wh{I}(W_i,Z_i)
    =
    \sum_{i=1}^k \left( \wh{I}(X_i,Y_i) - \wh{I}(X_i;Z_i) + \wh{I}(X_i;Z_i) - \wh{I}(W_i;Z_i) \right)\\
    & =
    \sum_{i=1}^k \left(\wh{I}(X_i; Y_i \mid Z_i) -  \wh{I}(X_i; Z_i \mid Y_i) +  \wh{I}(X_i; Z_i \mid W_i) - \wh{I}(W_i; Z_i \mid X_i) \right)  \qquad \text{by \eqref{eq:conmui_chain_emp}}\\ 
    & =  
    \underbrace{\sum_{i=1}^k \left(\wh{I}(X_i; Y_i \mid Z_i) +  \wh{I}(X_i; Z_i \mid W_i) \right)}_{:=A} - \underbrace{\sum_{i=1}^k \left( \wh{I}(X_i; Z_i \mid Y_i) + \wh{I}(W_i; Z_i \mid X_i)\right)}_{:=B}.
\end{align*}
In other words, we have $A \leq B$.

Recall that there exists a path $W_i\leadsto X_i \mathdash Y_i \leadsto  Z_i$ in $T^*$ and hence $(X_i,Z_i)\notin T^*$ which further implies $I(X_i; Z_i \mid Y_i) = 0$.
Similarly, we have $I(W_i; Z_i \mid X_i) = 0$.
By \cref{thm:conditionalindependence} with $\Theta(\frac{1}{\eps'}\log\frac{d}{\delta})$ samples, we have
\begin{align*}
    \wh{I}(X_i; Z_i \mid Y_i) \leq \eps'/100
    \qquad \text{and} \qquad
    \wh{I}(W_i;Z_i \mid X_i) \leq \eps'/100 \qquad \text{for all $i=1,\dots,k$.}
\end{align*}
Plugging them into each term in $B$, we can bound $B$ by $2k\cdot \eps'/100 \leq d\eps'/50$.
Namely, we have
\begin{align*}
    A
    & =
    \sum_{i=1}^k \left(\wh{I}(X_i; Y_i \mid Z_i) +  \wh{I}(X_i; Z_i \mid W_i) \right)
    \leq d\eps'/50.
\end{align*}
By \cref{thm:conditionalindependence} with $\Theta(\frac{1}{\eps'}\log\frac{d}{\delta})$ samples, we have 
\begin{align*}
    \frac{1}{20}I(X_i;Y_i \mid Z_i) - \frac{\eps'}{40} \leq \wh{I}(X_i; Y_i \mid Z_i)
    \qquad \text{and} \qquad
    \frac{1}{20}I(X_i;Z_i \mid W_i) - \frac{\eps'}{40} \leq \wh{I}(X_i; Z_i \mid W_i)
\end{align*}
for all $i=1,\dots,k$.
In other words, 
\begin{align*}
    A
    & \geq 
    \frac{1}{20}\sum_{i=1}^k \left(I(X_i; Y_i \mid Z_i) +  I(X_i; Z_i \mid W_i) \right) - \frac{d\eps'}{40}
\end{align*}
or 
\begin{align*}
    \sum_{i=1}^k \left(I(X_i; Y_i \mid Z_i) +  I(X_i; Z_i \mid W_i) \right)
    & \leq
    \frac{9d\eps'}{10} \numberthis\label{eq:muti_sum}
\end{align*}

Now, we can bound $\kl{P_{T^*}}{P_{\wh{T}}}$.
We express it as
\begin{align*}
    \kl{P_{T^*}}{P_{\wh{T}}} 
    & = 
    \sum_{i=1}^k I(X_i;Y_i) - \sum_{i=1}^k I(W_i;Z_i)
    =
    \sum_{i=1}^k \left(I(X_i; Y_i) - I(X_i; Z_i) + I(X_i; Z_i) - I(W_i; Z_i) \right)\\
    &= 
    \sum_{i=1}^k \left(I(X_i; Y_i \mid Z_i) -  I(X_i; Z_i \mid Y_i) +  I(X_i; Z_i \mid W_i) - I(W_i; Z_i \mid X_i)\right) \qquad \text{by \eqref{eq:conmui_chain}}
\end{align*}
Recall that we have $I(X_i; Z_i \mid Y_i) = 0$ and $I(W_i; Z_i \mid X_i) = 0$.
Combining with \eqref{eq:muti_sum}, we have
\begin{align*}
    \kl{P_{T^*}}{P_{\wh{T}}} 
    & \leq 
    \frac{9d\eps'}{10}
\end{align*}
with probability at least $1-\delta$.
By picking $\eps' = \frac{10\eps}{9d}$, we conclude our result.
\end{proof}

\subsection{Distribution Learning Lower Bounds}

To show the lower bounds, our main idea is to reduce \cref{prob:reduction} defined below to our problem.

\begin{problem}\label{prob:reduction}
Suppose $R^{(1)}$ and $R^{(2)}$ are two distributions such that $\kl{R^{(1)}}{R^{(2)}} \leq \delta$.
Let $P$ be a distribution on $m$ variables where each variable is distributed as either $R^{(1)}$ or $R^{(2)}$ uniformly and independently.
We are given $n$ i.i.d. samples drawn from a distribution $P$.
Our task is to determine which distribution the samples are drawn from correctly for at least $51m/100$ variables.
Formally, we define 
\begin{align*}
    \mathcal{R}
    :=
    \{(R_1,\dots,R_m)\mid R_i\in\{R^{(1)},R^{(2)}\}\}.
\end{align*}
We pick a distribution uniformly from $\mathcal{R}$ and let $P=(R^*_1,\dots,R^*_m)$ be this distribution.
Then, our goal is to design an algorithm that takes $n$ i.i.d. samples drawn from $P$ as input and returns $(\wh{R}_1,\dots,\wh{R}_m)$ such that $\wh{R}_i = R^*_i$ for at least $51m/100$ of $\{1,\dots,m\}$.

\end{problem}

\begin{fact}\label{fact:standard_lb}
    By the standard information-theoretic lower bounds, if $n = o(\frac{1}{\delta})$, then no algorithm can solve \cref{prob:reduction} with probability $2/3$.
\end{fact}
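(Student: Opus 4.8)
The plan is to exploit the product structure of \cref{prob:reduction}: the hidden labels $R^*_1,\dots,R^*_m$ are drawn independently and each $R^*_i$ influences only the samples of coordinate $i$, so the task is effectively $m$ independent copies of a single binary hypothesis test. First I would fix a coordinate $i$ and consider the per-coordinate test of $R^{(1)}$ against $R^{(2)}$ from the $n$ i.i.d.\ samples of that coordinate under the uniform prior. The Bayes-optimal success probability of such a test equals $\tfrac12\bigl(1+\dtv((R^{(1)})^{\otimes n},(R^{(2)})^{\otimes n})\bigr)$. By Pinsker's inequality together with tensorization of KL divergence,
\begin{align*}
    \dtv\bigl((R^{(1)})^{\otimes n},(R^{(2)})^{\otimes n}\bigr)
    \le \sqrt{\tfrac12\,\kl{(R^{(1)})^{\otimes n}}{(R^{(2)})^{\otimes n}}}
    = \sqrt{\tfrac{n}{2}\,\kl{R^{(1)}}{R^{(2)}}}
    \le \sqrt{\tfrac{n\delta}{2}},
\end{align*}
so when $n=o(1/\delta)$ this distance, call it $\tau$, tends to $0$; in particular $\mu:=\tfrac12(1+\tau)\le 0.505$ once $n\delta$ is small enough.

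The main subtlety is that a general algorithm may use all coordinates' samples to decide each $\wh R_i$, so the correctness indicators $Y_i:=\mathbf{1}[\wh R_i=R^*_i]$ are not obviously independent. I would resolve this by conditioning on the full sample $X$ and on any internal randomness of the algorithm. Since $R^*_i$ is independent of the samples of every other coordinate, the posterior of $R^*_i$ given $X$ depends only on coordinate $i$'s samples $X_i$, and the joint posterior of $(R^*_1,\dots,R^*_m)$ given $X$ factorizes across $i$. Consequently, conditioned on $X$ the output $\wh R$ is fixed and the $Y_i$ become \emph{independent} Bernoulli variables, each with success probability at most the posterior maximum $q_i^\star:=\max_r \prob(R^*_i=r\mid X_i)$. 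The $q_1^\star,\dots,q_m^\star$ are i.i.d.\ with mean equal to the Bayes success probability $\mu$.

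Writing $N=\sum_{i=1}^m Y_i$ for the number of correctly identified coordinates, the previous step shows that $N$ is stochastically dominated by a $\mathrm{Binomial}(m,\mu)$ variable: conditionally on $X$ it is a sum of independent Bernoullis with parameters $q_i\le q_i^\star$, and integrating out $X$ yields i.i.d.\ $\mathrm{Bernoulli}(\mu)$ summands. Hoeffding's inequality then gives
\begin{align*}
    \prob\!\Paren{N\ge \tfrac{51m}{100}}
    \le \exp\!\Paren{-2m\Paren{\tfrac{51}{100}-\mu}^2}
    \le \exp\!\Paren{-\tfrac{m}{20000}},
\end{align*}
using $\mu\le 0.505$. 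Thus the success probability decays to $0$ as $m$ grows and is in particular below $2/3$, which rules out solving the problem at confidence $2/3$.

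The only genuine obstacle is the cross-coordinate dependence discussed in the second paragraph; once it is handled by the conditioning argument, the rest reduces to two elementary facts: nearly indistinguishable distributions cannot be told apart better than a coin flip per coordinate (the total-variation bound via Pinsker and tensorization), and coordinate-wise coin flips concentrate strictly below the $51/100$ threshold (Hoeffding after stochastic domination to a binomial). I would finally note that this matches the "standard information-theoretic lower bound" invoked in the statement, with $m$ taken large as in the reductions where \cref{prob:reduction} is applied.
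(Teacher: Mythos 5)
Your proposal is correct, and it supplies an argument the paper itself never writes down: \cref{fact:standard_lb} is stated as a bare appeal to ``standard information-theoretic lower bounds,'' with no proof given anywhere in the paper. Your route is precisely the standard argument being implicitly invoked, spelled out in full: (i) a per-coordinate two-point bound, where Pinsker plus tensorization of KL gives $\dtv\bigl((R^{(1)})^{\otimes n},(R^{(2)})^{\otimes n}\bigr)\le\sqrt{n\delta/2}=o(1)$ when $n=o(1/\delta)$, so the per-coordinate Bayes success probability satisfies $\mu\le 0.505$ once $n\delta$ is small; (ii) the posterior factorization across coordinates, which is exactly the right way to dispose of the genuine subtlety that $\wh{R}_i$ may depend on all coordinates' samples --- since the pairs $(R^*_i,X_i)$ are independent across $i$, conditioning on the data makes the correctness indicators independent Bernoullis with parameters at most the per-coordinate MAP posteriors; and (iii) stochastic domination by $\mathrm{Binomial}(m,\mu)$ followed by Hoeffding. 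Steps (i)--(iii) are all sound as written, including the numerics ($2(0.51-0.505)^2=1/20000$).

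One caveat worth flagging: your final bound $\exp(-m/20000)$ only drops below $2/3$ once $m\gtrsim 8\times 10^3$, so as written the argument establishes the fact in the regime $m\to\infty$. That is the regime in which the paper actually uses the fact (there $m=d/3$ with $d$ growing), and you acknowledge this explicitly at the end; but if one wants the statement for fixed small $m$ as $\delta\to 0$, a supplementary step is needed. For fixed $m$, since $\mu\to 1/2$, one can instead bound
\begin{align*}
    \prob\bigl(\mathrm{Bin}(m,\mu)\ge\lceil 0.51m\rceil\bigr)
    \le \prob\bigl(\mathrm{Bin}(m,\tfrac12)>\tfrac{m}{2}\bigr)+\dtv\bigl(\mathrm{Bin}(m,\mu),\mathrm{Bin}(m,\tfrac12)\bigr)
    \le \tfrac12+m\bigl(\mu-\tfrac12\bigr),
\end{align*}
which is below $2/3$ once $n\delta$ is small relative to $1/m^2$. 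This is a minor patch rather than a gap in the setting the paper needs, and your proof stands as a correct, self-contained justification of what the paper leaves as a citation.
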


\subsubsection{Non-realizable Case}

We define two distributions $Q^{(1)},Q^{(2)}$ as follows.
\begin{align*}
    Q^{(1)}
    =
    \begin{cases}
        H \sim \N(0,1) \\
        X \sim (1+\eps)H + \N(0,1) \\
        Y \sim H + \N(0,1) \\
        Z \sim H + \N(0,1)
    \end{cases}
    \qquad \text{and} \qquad
    Q^{(2)}
    =
    \begin{cases}
        H \sim \N(0,1) \\
        X \sim H + \N(0,1) \\
        Y \sim (1+\eps)H + \N(0,1) \\
        Z \sim H + \N(0,1)
    \end{cases} 
    \numberthis\label{eq:nonreal_threenodes}
\end{align*}
Also, we define $R^{(1)},R^{(2)}$ to be the corresponding marginal distributions on $(X,Y,Z)$.

\begin{figure}  
\centering
\subfigure[$R^{(1)}$]
{  
    \begin{tikzpicture}[->,>=stealth',node distance=1cm, auto]
    \node[est, label = {$H + \N(0,1)$}] (Y) {$Y$};
    \node[est, right = of Y, dashed, label = {[xshift=1.2cm, yshift=-0.7cm]$\N(0,1)$}, fill=lightgray] (H) {$H$};
    \node[est, above = of H,  label = {$H + \N(0,1)$}] (Z) {$Z$};
    \node[est, below = of H, label = {[xshift=-2.0cm, yshift=-0.7cm]$(1 + \eps)H + \N(0, 1)$}] (X) {$X$};
    \path[pil] (H) edgenode {} (Z);
    \path[pil] (H) edge node {} (Y);
    \path[pil] (H) edgenode {} (X);
    \path[pil, densely dashed] (X) edge [bend left=40] node [left]  {} (Y);
    \path[pil, densely dashed] (X) edge [bend right=80] node [right]  {} (Z);
    \end{tikzpicture}  
} 
\hspace{1cm}
\subfigure[$R^{(2)}$]  
{  
    \begin{tikzpicture}[->,>=stealth',node distance=1cm, auto]
    \node[est, label = {[xshift=-2.0cm, yshift=-0.7cm]$(1+\eps)H + \N(0,1)$}] (Y) {$Y$};
    \node[est, right = of Y, dashed, label = {[xshift=1.2cm, yshift=-0.7cm]$\N(0,1)$}, fill=lightgray] (H) {$H$};
    \node[est, above = of H,  label = {$H + \N(0,1)$}] (Z) {$Z$};
    \node[est, below = of H, label = {[xshift=-1.7cm, yshift=-0.7cm]$H + \N(0,1)$}] (X) {$X$};
    \path[pil] (H) edgenode {} (Z);
    \path[pil] (H) edge node {} (Y);
    \path[pil] (H) edgenode {} (X);
    \path[pil, densely dashed] (Y) edge [bend left=40] node [left]  {} (Z);
    \path[pil, densely dashed] (Y) edge [bend right=40] node [right]  {} (X);
    \end{tikzpicture}  
} 

\caption{The $\Omega(1/\eps^2)$ bound in the non-realizable setting. The underlying graph is represented with solid lines, while the best estimated tree structure is depicted with dashed lines.}
\label{fig:nonrealizable}
\end{figure}
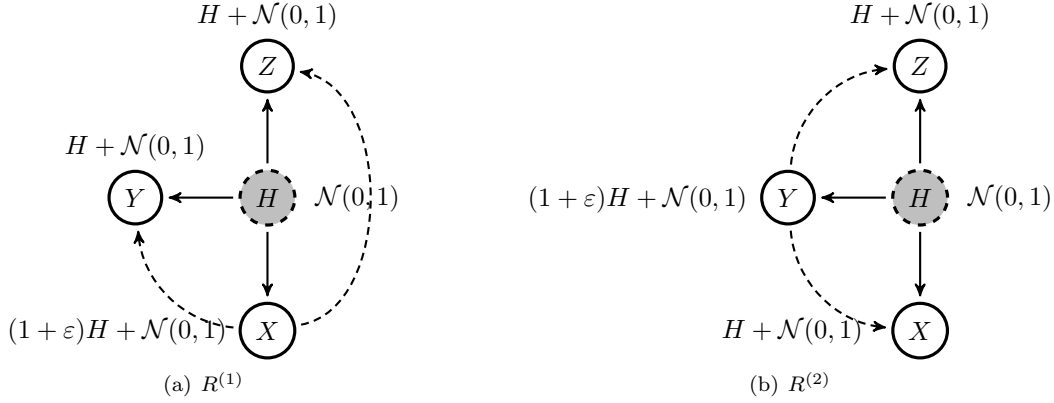

\begin{lemma}\label{lem:nonreal_three}

Suppose $R^*$ is one of $R^{(1)}$ and $R^{(2)}$ defined in \eqref{eq:nonreal_threenodes}.
For any small $\eps>0$, if a direct tree $\wh{T}$ satisfies
\begin{align*}
    \kl{R^*}{R^*_{\wh{T}}}
    \leq 
    \min_T \kl{R^*}{R^*_T} + \frac{\eps}{100} 
    \numberthis\label{eq:nonreal_c}
\end{align*}
and 
$\wh{R}=\arg\min_{R\in\{R^{(1)},R^{(2)}\}} \kl{R}{R_{\wh{T}}}$, then $\wh{R} = R^*$.

\end{lemma}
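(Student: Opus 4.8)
The plan is to exploit the fact that $R^{(1)}$ and $R^{(2)}$ are Gaussians on only three coordinates $(X,Y,Z)$, so there are exactly three candidate skeletons to compare: the star $T_X$ centered at $X$ (edges $\{XY,XZ\}$), the star $T_Y$ centered at $Y$ (edges $\{XY,YZ\}$), and the star $T_Z$ centered at $Z$ (edges $\{XZ,YZ\}$). Writing $I(\rho)=-\tfrac12\log(1-\rho^2)$ for the Gaussian mutual information and using \eqref{eq:kl_p_pt}, for any skeleton $T$ we have $\kl{R}{R_T}=\big(\sum_i H(X_i)-H(X,Y,Z)\big)-\sum_{e\in T} I_R(e)$, where $\sum_{e\in T} I_R(e)$ is the sum of edge mutual informations under $R$ (equal to $\sum_i I(X_i;X_{\pa(i)})$ for a tree) and the first, multi-information, term is independent of $T$. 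Hence comparing trees reduces to comparing sums of edge mutual informations, i.e. to comparing pairwise correlations. First I would record these: a direct covariance computation from \eqref{eq:nonreal_threenodes} gives, under $R^{(1)}$, $|\rho_{XY}|=|\rho_{XZ}|=\rho_h\defeq(1+\eps)/\sqrt{2(1+(1+\eps)^2)}$ and $\rho_{YZ}=1/2$, with the roles of $X$ and $Y$ swapped under $R^{(2)}$. The key elementary fact is $\rho_h^2-\tfrac14=\tfrac{(1+\eps)^2-1}{4(1+(1+\eps)^2)}=\Theta(\eps)>0$, so $\rho_h>1/2$ and the ``high'' edges strictly dominate the ``low'' edge $YZ$.

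The first step is to pin down $\wh T$. Without loss of generality take $R^*=R^{(1)}$. Since $I$ is increasing in $|\rho|$, the unique maximum-weight skeleton is $T_X$, and each of the next-best skeletons $T_Y,T_Z$ falls short of it in total edge weight by exactly $I(\rho_h)-I(1/2)=\tfrac12\log\frac{1-1/4}{1-\rho_h^2}$. Expanding around $\eps=0$, this gap equals $\Theta(\eps)$ (concretely $\approx\eps/6$ for small $\eps$), so for all sufficiently small $\eps$ it exceeds $\eps/100$. Therefore every skeleton other than $T_X$ satisfies $\kl{R^{(1)}}{R^{(1)}_T}>\min_T\kl{R^{(1)}}{R^{(1)}_T}+\eps/100$, and the near-optimality hypothesis \eqref{eq:nonreal_c} forces $\wh T=T_X$.

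The second step is to evaluate the decision rule on $\wh T=T_X$. Abbreviating the common multi-information term by $C$, which is invariant under the coordinate swap $(X,Y,Z)\mapsto(Y,X,Z)$ and hence is the same for $R^{(1)}$ and $R^{(2)}$, I compute $\kl{R^{(1)}}{R^{(1)}_{T_X}}=C-2I(\rho_h)$, since both edges $XY,XZ$ of $T_X$ are ``high'' under $R^{(1)}$, whereas $\kl{R^{(2)}}{R^{(2)}_{T_X}}=C-I(\rho_h)-I(1/2)$, since under $R^{(2)}$ the edge $XZ$ is the ``low'' edge. Subtracting gives $\kl{R^{(2)}}{R^{(2)}_{T_X}}-\kl{R^{(1)}}{R^{(1)}_{T_X}}=I(\rho_h)-I(1/2)>0$, so $\arg\min_{R}\kl{R}{R_{\wh T}}=R^{(1)}=R^*$, i.e. $\wh R=R^*$. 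The symmetric argument (with $\wh T=T_Y$) handles $R^*=R^{(2)}$.

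The delicate point, and the main obstacle, is that everything degenerates at $\eps=0$: there all three correlations coincide at $1/2$ and all three trees are tied, so the entire separation is a first-order-in-$\eps$ effect. The crux is therefore the quantitative claim that both the tree-selection gap and the final decision margin equal $I(\rho_h)-I(1/2)=\Theta(\eps)$ and, crucially, strictly exceed the $\eps/100$ slack for small $\eps$; this single quantity simultaneously guarantees $\wh T=T_X$ and that $T_X$ ``points back'' to the correct $R^*$. The computations are routine, but one must verify that the constant in the gap genuinely beats $1/100$, which is exactly where the hypothesis that $\eps$ is sufficiently small is used.
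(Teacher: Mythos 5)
Your proof is correct and takes essentially the same route as the paper's: both reduce the KL comparisons among the three candidate trees to sums of edge mutual informations via \eqref{eq:kl_p_pt}, and both hinge on the single quantitative fact that the strong-edge/weak-edge gap $I(\rho_h)-I(1/2)$ is of order $\eps$ (you compute $\approx \eps/6$; the paper states $\ge \eps/50$), which exceeds the $\eps/100$ slack. The only difference is organizational: you fix $R^*$ by symmetry and let \eqref{eq:nonreal_c} force $\wh{T}$, then compare the two KL values at $\wh{T}$ directly through the common multi-information constant, whereas the paper fixes $\wh{T}$ (after eliminating the third tree), deduces $R^*$ from \eqref{eq:nonreal_c}, and finishes with the swap symmetry $\kl{R^{(1)}}{R^{(1)}_{T_2}}=\kl{R^{(2)}}{R^{(2)}_{T_1}}$ --- logically the same argument in reverse order.
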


\begin{proof}

Since there are three variables, there are only three possible tree structures: $T_1=Y-X-Z$, $T_2=X-Y-Z$ and $T_3=X-Z-Y$.
Recall that, by \eqref{eq:kl_p_pt}, we have
\begin{align*}
    \kl{R^{(1)}}{R^{(1)}_{T_2}} - \kl{R^{(1)}}{R^{(1)}_{T_1}}
    & =
    I(X;Z) - I(Y;Z)
    \geq \frac{\eps}{50} \qquad \text{by a straightforward calculation} \numberthis\label{eq:nonreal_21}
\end{align*}
and, similarly, we also have
\begin{align*}
    \kl{R^{(1)}}{R^{(1)}_{T_3}} - \kl{R^{(1)}}{R^{(1)}_{T_1}}
    \geq 
    \frac{\eps}{50} \numberthis\label{eq:nonreal_31}\\
    \kl{R^{(2)}}{R^{(2)}_{T_1}} - \kl{R^{(2)}}{R^{(2)}_{T_2}}
    \geq 
    \frac{\eps}{50} \numberthis\label{eq:nonreal_12}\\
    \kl{R^{(2)}}{R^{(2)}_{T_3}} - \kl{R^{(2)}}{R^{(2)}_{T_2}}
    \geq 
    \frac{\eps}{50}\numberthis\label{eq:nonreal_32}
\end{align*}

By \eqref{eq:nonreal_c}, \eqref{eq:nonreal_31} and \eqref{eq:nonreal_32}, we have $\wh{T}\neq T_3$.
Namely, $\wh{T}$ is either $T_1$ or $T_2$ (WLOG, say $T_1$).
By \eqref{eq:nonreal_c} and \eqref{eq:nonreal_12}, we have $R^* = R^{(1)}$.
By \eqref{eq:nonreal_21}, we have
\begin{align*}
    \kl{R^{(1)}}{R^{(1)}_{T_1}}
    & \leq
    \kl{R^{(1)}}{R^{(1)}_{T_2}} - \frac{\eps}{50}
    <
    \underbrace{\kl{R^{(1)}}{R^{(1)}_{T_2}}
    =
    \kl{R^{(2)}}{R^{(2)}_{T_1}}}_{\text{by symmetry}}.
\end{align*}
Hence, $\wh{R} = R^{(1)} = R^*$ by the definition of $\wh{R}$.

\end{proof}

\thmnonrealizablelowerbound*

\begin{proof}

We will prove the statement by reducing \cref{prob:reduction} to our problem.
We first split the $d$ variables into $m=d/3$ groups of $3$ variables and for each group we select $R^{(1)}$ or $R^{(2)}$ defined in \eqref{eq:nonreal_threenodes} (replacing $\eps$ with $\eps/d$) uniformly and independently and notice that $\kl{R^{(1)}}{R^{(2)}} = O(\eps^2/d^2)$ by a straightforward calculation.
By \cref{fact:standard_lb}, it implies that if $n = o(\frac{d^2}{\eps^2})$ then no algorithm can determine which distribution the samples are drawn from correctly for at least $51m/100$ groups with probability $\frac{2}{3}$.

Suppose there is an algorithm that takes these $n$ i.i.d. samples as input and returns a directed tree $\wh{T}$ such that
\begin{align*}
   \kl{P}{P_{\wh{T}}} \leq \min_{T\in\drtr} \kl{P}{P_{T}} + \eps \numberthis \label{eq:nonreal_d_c}
\end{align*}
with probability $\frac{2}{3}$.
If we manage to show that we can use $\wh{T}$ to determine which distribution the samples are drawn from correctly for $51m/100$ groups then it implies $n=\Omega(\frac{d^2}{\eps^2})$.

We construct the reduction as follows.
For the $i$-th group of variables, we consider its subtree $\wh{T}_i$ of $\wh{T}$ and declare $\wh{R}_i$ to be the distribution for this group where $\wh{R}_i$ is defined to be $\arg\min_{R\in\{R^{(1)},R^{(2)}\}} \kl{R}{R_{\wh{T}_i}}$.
To see the correctness, we have the following.
Since each group is independent, \eqref{eq:nonreal_d_c} can be decomposed into 
\begin{align*}
    \sum_{i=1}^m \kl{P_i}{(P_i)_{\wh{T}_i}}
    \leq
    \sum_{i=1}^m \min_{T_i}\kl{P_i}{(P_i)_{T_i}} + \eps
\end{align*}
where $P_i$ is the random pick of $R^{(1)}$ or $R^{(2)}$ for the $i$-th group.
Therefore, at least $51m/100$ of the terms $\kl{P_i}{(P_i)_{\wh{T}_i}} - \min_{T_i}\kl{P_i}{(P_i)_{T_i}} \leq \frac{10\eps}{m}$.
By Lemma \ref{lem:nonreal_three}, for these $51m/100$ groups, $\wh{R}_i$ is correctly determined, i.e. $\wh{R}_i = P_i$ and hence the reduction is completed.
\end{proof}

\subsubsection{Realizable Case}

We define two distributions $R^{(1)}, R^{(2)}$ as follows.
\begin{align*}
    R^{(1)}
    =
    \begin{cases}
        X \sim \N(0,1) \\
        Y \sim (1-\sqrt{\eps})X+\sqrt{\eps}\N(0,1) \\
        Z \sim \frac{1}{2}X+\frac{1}{2}\N(0,1)
    \end{cases}
    \qquad \text{and}\qquad
    R^{(2)}
    =
    \begin{cases}
        X \sim \N(0,1) \\
        Y \sim (1-\sqrt{\eps})X+\sqrt{\eps}\N(0,1) \\
        Z \sim \frac{1}{2}Y+\frac{1}{2}\N(0,1)
    \end{cases}
    \numberthis\label{eq:real_threenodes}
\end{align*}
Namely, the underlying graph for $R^{(1)}$ is $Y <- X -> Z$ and the underlying graph for $R^{(2)}$ is $X-> Y->Z$.
Both have $X->Y$ and the only difference is $Z$.
\begin{figure*} 
\centering
\subfigure[$R^{(1)}$]  
{ 
\centering
    \begin{tikzpicture}[->,>=stealth',node distance=1cm, auto,]
    \node[est, label = {[xshift=0cm, yshift=0cm]$\N(0, 1)$}] (X) {$X$};
    \node[est, below right = of X, label = {[xshift=1cm, yshift=-1.4cm]$(1-\sqrt{\eps})X+\sqrt{\eps}\N(0, 1)$}] (Y) {$Y$};
    \node[est, below left = of X,  label = {[xshift=0cm, yshift=-1.4cm]$\frac{1}{2}X+\frac{1}{2}\N(0, 1)$}] (Z) {$Z$};
    \path[pil] (X) edgenode {} (Z);
    \path[pil] (X) edgenode {} (Y);
    \end{tikzpicture}  
}
\hspace{1cm}
\subfigure[$R^{(2)}$]  
{ 
\centering
    \begin{tikzpicture}[->,>=stealth',node distance=1cm, auto,]
    \node[est, label = {[xshift=0cm, yshift=0cm]$\N(0, 1)$}] (X) {$X$};
    \node[est, below right = of X, label = {[xshift=1cm, yshift=-1.4cm]$(1-\sqrt{\eps})X+\sqrt{\eps}\N(0,1)$}] (Y) {$Y$};
    \node[est, below left = of X,  label = {[xshift=0cm, yshift=-1.4cm]$\frac{1}{2}Y+\frac{1}{2}\N(0, 1)$}] (Z) {$Z$};
    \path[pil] (Y) edgenode {} (Z);
    \path[pil] (X) edgenode {} (Y);
    \end{tikzpicture}  
}
\caption{Realizable setting} 
\label{fig:realizable}
\end{figure*}
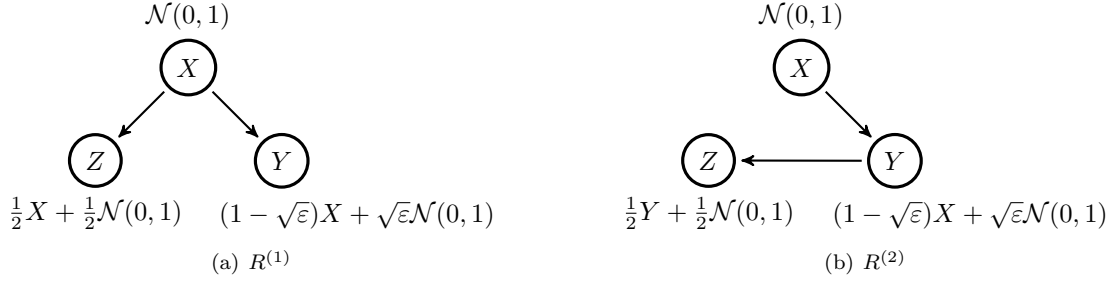

\begin{lemma}\label{lem:real_three}
Suppose $R^*$ is one of $R^{(1)}$ and $R^{(2)}$ defined in \eqref{eq:real_threenodes}.
For any small $\eps>0$, if a direct tree $\wh{T}$ satisfies
\begin{align*}
    \kl{R^*}{R^*_{\wh{T}}}
    \leq 
    \frac{\eps}{100} \numberthis\label{eq:real_c}
\end{align*}
and $\wh{R} = \arg\min_{R\in\{R^{(1)},R^{(2)}\}}\kl{R}{R_{\wh{T}}}$, then $\wh{R}=R^*$.
\end{lemma}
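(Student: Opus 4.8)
The plan is to mirror the proof of \cref{lem:nonreal_three}. With only three variables there are exactly three candidate skeletons, $T_1 = Y-X-Z$, $T_2 = X-Y-Z$, and $T_3 = X-Z-Y$, and since $R^*_{\wh{T}}$ depends only on the skeleton of $\wh{T}$, it suffices to determine which of these three $\wh{T}$ realizes. Because $R^{(1)}$ is exactly $T_1$-structured and $R^{(2)}$ is exactly $T_2$-structured, we have $\kl{R^{(1)}}{R^{(1)}_{T_1}} = 0$ and $\kl{R^{(2)}}{R^{(2)}_{T_2}} = 0$, so here $\min_T \kl{R^*}{R^*_T} = 0$ and the hypothesis $\kl{R^*}{R^*_{\wh{T}}} \le \eps/100$ plays exactly the role of the $\min_T + \eps/100$ bound in the non-realizable case.

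First I would establish four ``wrong-skeleton'' gap inequalities, analogous to \eqref{eq:nonreal_21}--\eqref{eq:nonreal_32}. Using \eqref{eq:kl_p_pt}, each difference $\kl{R^{(i)}}{R^{(i)}_{T_j}} - \kl{R^{(i)}}{R^{(i)}_{T_{j'}}}$ reduces to a difference of the two pairwise mutual informations on the edges not shared by $T_j$ and $T_{j'}$, and for a Gaussian pair $I(U;V) = -\tfrac12\log(1-\rho_{UV}^2)$. Concretely I would show, by a direct calculation of the variances and correlations under each model, that
\begin{align*}
\kl{R^{(1)}}{R^{(1)}_{T_2}},\ \kl{R^{(1)}}{R^{(1)}_{T_3}} \ge \tfrac{\eps}{50}, \qquad
\kl{R^{(2)}}{R^{(2)}_{T_1}},\ \kl{R^{(2)}}{R^{(2)}_{T_3}} \ge \tfrac{\eps}{50}.
\end{align*}
The two $T_3$ bounds are in fact $\Theta(\log(1/\eps))$, since discarding the strong edge $X-Y$ (whose correlation is $\approx 1$) is very costly, so they hold trivially. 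The delicate ones are the other two: $\kl{R^{(1)}}{R^{(1)}_{T_2}}-\kl{R^{(1)}}{R^{(1)}_{T_1}} = I(X;Z)-I(Y;Z)$ and its $R^{(2)}$ analog, where the conditional-independence structure forces $\rho_{YZ}=\rho_{XY}\rho_{XZ}$ (resp. $\rho_{XZ}=\rho_{XY}\rho_{YZ}$); a short expansion then gives $\tfrac12\log\!\bigl(1+\tfrac{\eps}{(1-\sqrt\eps)^2+\eps}\bigr)$, which is of order $\tfrac{\eps}{2}$ and hence exceeds $\tfrac{\eps}{50}$ for all sufficiently small $\eps$.

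With these gaps in hand the deduction is immediate, treating $R^* = R^{(1)}$ and $R^* = R^{(2)}$ symmetrically. If $R^* = R^{(1)}$, the bounds for $T_2,T_3$ together with $\kl{R^{(1)}}{R^{(1)}_{\wh{T}}} \le \eps/100 < \eps/50$ force the skeleton of $\wh{T}$ to be $T_1$; then $R^{(1)}_{\wh{T}} = R^{(1)}$ gives $\kl{R^{(1)}}{R^{(1)}_{\wh{T}}} = 0$, whereas $\kl{R^{(2)}}{R^{(2)}_{\wh{T}}} = \kl{R^{(2)}}{R^{(2)}_{T_1}} \ge \eps/50 > 0$, so $\wh{R} = \argmin_{R} \kl{R}{R_{\wh{T}}} = R^{(1)} = R^*$. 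The case $R^* = R^{(2)}$ is identical with the roles of $T_1$ and $T_2$ swapped.

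The main obstacle is purely computational: verifying the two $\Theta(\eps)$ gaps with explicit constants. This requires computing $\Var(Y) = (1-\sqrt\eps)^2+\eps$, the variance of $Z$, and the correlations $\rho_{XY},\rho_{XZ},\rho_{YZ}$ under each model, then controlling $\tfrac12\log\tfrac{1-\rho_{YZ}^2}{1-\rho_{XZ}^2}$ (and its analog) uniformly in the small-$\eps$ regime to extract the constant $\tfrac{1}{50}$. Everything else follows the template of \cref{lem:nonreal_three} verbatim.
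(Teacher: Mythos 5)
Your proposal is correct and follows essentially the same argument as the paper: the same three candidate skeletons, the same four gap inequalities (with the same constant $\eps/50$ and the same $\Omega(1)$ bounds for $T_3$), and the same final argmin deduction. The only cosmetic difference is that you case on the identity of $R^*$ to pin down $\wh{T}$, whereas the paper first rules out $T_3$ and then cases on $\wh{T}\in\{T_1,T_2\}$ to pin down $R^*$ --- logically equivalent reorganizations of the identical argument.
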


\begin{proof}

Since there are three variables, there are only three possible tree structures: $T_1=Y-X-Z$, $T_2=X-Y-Z$ and $T_3=X-Z-Y$.
Recall that, by \eqref{eq:kl_p_pt}, we have
\begin{align*}
    \kl{R^{(1)}}{R^{(1)}_{T_2}} - \kl{R^{(1)}}{R^{(1)}_{T_1}}
    & =
    I(X;Z) - I(Y;Z)
    \geq \frac{\eps}{50} \qquad \text{by a straightforward calculation.}
\end{align*}
Note that $\kl{R^{(1)}}{R^{(1)}_{T_1}}=0$ and hence
\begin{align*}
    \kl{R^{(1)}}{R^{(1)}_{T_2}}
    \geq 
    \frac{\eps}{50} \numberthis\label{eq:real_21}
\end{align*}
Similarly, we also have
\begin{align*}
    \kl{R^{(1)}}{R^{(1)}_{T_3}}
    &\geq 
    \Omega(1)
    \geq
    \frac{\eps}{50} \numberthis\label{eq:real_31}\\
    \kl{R^{(2)}}{R^{(2)}_{T_1}} 
    &\geq 
    \frac{\eps}{50} \numberthis\label{eq:real_12}\\
    \kl{R^{(2)}}{R^{(2)}_{T_3}}
    &\geq
    \Omega(1)
    \geq 
    \frac{\eps}{50}\numberthis\label{eq:real_32}
\end{align*}

By \eqref{eq:real_c}, \eqref{eq:real_31} and \eqref{eq:real_32}, we have $\wh{T}\neq T_3$.
Namely, $\wh{T}$ is either $T_1$ or $T_2$.
If $\wh{T}=T_1$, by \eqref{eq:real_c} and \eqref{eq:real_12}, we have
\begin{align*}
    \kl{R^{(2)}}{R^{(2)}_{\wh{T}}} > \kl{R^*}{R^*_{\wh{T}}}
\end{align*}
and hence $R^* = R^{(1)}$.
If $\wh{T} = T_2$, by \eqref{eq:real_c} and \eqref{eq:real_21}, we have
\begin{align*}
    \kl{R^{(1)}}{R^{(1)}_{\wh{T}}} > \kl{R^*}{R^*_{\wh{T}}}
\end{align*}
and hence $R^*=R^{(2)}$.
By the definition of $\wh{R}$, both cases imply $\wh{R} = R^*$.

\end{proof}

\thmrealizablelowerbound*

\begin{proof}

We will prove the statement by reducing \cref{prob:reduction} to our problem.
We first split the $d$ variables into $m=d/3$ groups of $3$ variables and for each group we select $R^{(1)}$ or $R^{(2)}$ defined in \eqref{eq:real_threenodes} (replacing $\eps$ with $\eps/d$) uniformly and independently and notice that $\kl{R^{(1)}}{R^{(2)}} = O(\eps/d)$ by a straightforward calculation.
By \cref{fact:standard_lb}, it implies that if $n = o(\frac{d}{\eps})$ then no algorithm can determine which distribution the samples are drawn from correctly for at least $51m/100$ groups with probability $\frac{2}{3}$.

Suppose there is an algorithm that takes these $n$ i.i.d. samples as input and returns a directed tree $\wh{T}$ such that
\begin{align*}
   \kl{P}{P_{\wh{T}}} \leq \eps \numberthis \label{eq:real_d_c}
\end{align*}
with probability $\frac{2}{3}$.
If we manage to show that we can use $\wh{T}$ to determine which distribution the samples are drawn from correctly for $51m/100$ groups then it implies $n=\Omega(\frac{d}{\eps})$.

We construct the reduction as follows.
For the $i$-th group of variables, we consider its subtree $\wh{T}_i$ of $\wh{T}$ and declare $\wh{R}_i$ to be the distribution for this group where $\wh{R}_i$ is defined to be $\arg\min_{R\in\{R^{(1)},R^{(2)}\}} \kl{R}{R_{\wh{T}_i}}$.
To see the correctness, we have the following.
Since each group is independent, \eqref{eq:real_d_c} can be decomposed into 
\begin{align*}
    \sum_{i=1}^m \kl{P_i}{(P_i)_{\wh{T}_i}}
    \leq
    \eps
\end{align*}
where $P_i$ is the random pick of $R^{(1)}$ or $R^{(2)}$ for the $i$-th group.
Therefore, at least $51m/100$ of the terms $\kl{P_i}{(P_i)_{\wh{T}_i}}\leq \frac{10\eps}{m}$.
By Lemma \ref{lem:real_three}, for these $51m/100$ groups, $\wh{R}_i$ is correctly determined, i.e. $\wh{R}_i = P_i$ and hence the reduction is completed.
    
\end{proof}

\subsection{Learning Polytrees given Skeleton}
\label{sec:polytree_distlearning}
In this section, we sketch how to obtain a sample-efficient algorithm for learning bounded-degree gaussian {\em polytrees} 
by adapting the recent results from \citep{choo2023learning}, using the guarantees of the estimator $\wh{I}$, assuming that the skeleton is known. Let a {\em $m$-polytree} denote a polytree with maximum in-degree $m$. Our main result in this section is the following:
\begin{theorem}\label{thm:distlearn_polytrees}
There exists an algorithm which, given $n$ samples from a gaussian $m$-polytree $P$ over $\R^d$, accuracy parameter $\eps > 0$, failure probability $\delta$, maximum in-degree $m$, and the explicit description of the ground truth skeleton of $P$, outputs a $m$-polytree $\wh{P}$ such that $\dkl(P \| \wh{P}) \leq \eps$ with success probability at least $1-\delta$, as long as:
\[
n \geq \widetilde{O}\left(\frac{d}{\eps}\log\frac1\delta \right).
\]
Moreover, the algorithm runs in time polynomial in $n$ and $d$.
\end{theorem}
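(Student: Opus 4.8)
The plan is to follow the polytree framework of \citep{choo2023learning}, instantiated for Gaussians through the conditional mutual information estimator $\wh I$ developed in \cref{sec:cmit}. First I would reduce distribution learning to an \emph{orientation} problem. Since the skeleton $\sk(T)$ is a tree and is given, any acyclic orientation of its edges with in-degree at most $m$ yields a valid $m$-polytree, and projecting $P$ onto such a polytree $T$ gives, by the multi-parent generalization of \eqref{eq:kl_p_pt},
\begin{align*}
\kl{P}{P_T} = \Big(\sum_{k} H(X_k) - H(X)\Big) - \sum_k I(X_k; X_{\pa_T(k)}),
\end{align*}
where the bracketed term does not depend on the orientation. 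As $P$ is realizable, the true orientation $T^*$ satisfies $P_{T^*}=P$, so $\kl{P}{P_T} = W^* - W_T \ge 0$ for every orientation $T$, with $W_T \defeq \sum_k I(X_k;X_{\pa_T(k)})$ and $W^*\defeq W_{T^*}$. Hence it suffices to output an orientation $\wh T$ whose score obeys $W_{\wh T}\ge W^* - \eps$, which yields $\kl{P}{P_{\wh T}}\le\eps$.

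Next I would build the scores. For each node $k$ and each candidate parent set $S$ drawn from its skeleton-neighbours with $|S|\le m$, I need the mutual information $I(X_k;X_S)$, which for a Gaussian is $-\tfrac12\log(1-R^2)$ for the squared multiple correlation $R^2$ of $X_k$ on $X_S$, a smooth function of the relevant covariance entries. The concentration bounds of \cref{cor:rhosigma} and \cref{lem:betasigmabound} extend directly to this set-valued regression, producing an estimator $\wh I(X_k;X_S)$ obeying the one-sided guarantees of \cref{thm:conditionalindependence} (with scalar arguments replaced by sets): with $n=\Theta(\tfrac{1}{\eps'}\log\tfrac{d}{\delta})$ samples, $I=0$ forces $\wh I\le\eps'/100$, while $I\ge\eps'$ is preserved multiplicatively. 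Given these scores, the maximum-score in-degree-bounded orientation of the tree skeleton is computed by dynamic programming over the tree, as in \citep{choo2023learning}. Since each node admits at most $\sum_{j\le m}\binom{\deg(k)}{j}=O(d^m)$ candidate parent sets, both the scoring and the dynamic program run in time polynomial in $n$ and $d$ for fixed $m$.

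Finally I would carry out the accuracy analysis, which is the main obstacle. The naive route of estimating each of the $d$ score terms to additive accuracy $\eps/d$ would cost $\Ot(d/\eps^2)$; to reach the claimed $\Ot(d/\eps)$ I would instead adapt the exchange argument from the realizable tree case (\cref{tm:realizable}). Concretely, I would compare $\wh T$ with $T^*$ by decomposing $W^* - W_{\wh T}$ through the chain rule \eqref{eq:conmui_chain} along the skeleton, so that every ``mismatched'' parent reassignment contributes conditional mutual informations that are \emph{identically zero} under $P$ (the corresponding $d$-separations hold in the true polytree); the first branch of \cref{thm:conditionalindependence} then bounds each such empirical surrogate by $\eps'/100$, while the genuinely informative terms are preserved by the second branch. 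Summing over the $O(d)$ mismatches and taking $\eps'\asymp\eps/d$ gives $W_{\wh T}\ge W^*-\eps$ with $n=\Ot(\tfrac{d}{\eps}\log\tfrac1\delta)$. The delicate point is that the edge-pairing of \cref{fact:spanningtree} no longer suffices: in the polytree setting one must account for how \emph{parent-set} reassignments of size up to $m$ decompose into vanishing conditional mutual informations, and verify that the set-valued estimator retains the one-sided guarantee that keeps the dependence at $1/\eps$ rather than $1/\eps^2$.
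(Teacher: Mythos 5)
Be aware of what the paper's own proof of this theorem consists of: a single remark that ``the algorithm and its analysis is exactly the same as in \citep{choo2023learning}, with the only change being that we use \eqref{eq:emp_mi} for the estimator $\wh{I}$.'' So the benchmark is whether you correctly supply (or correctly defer) the ingredients that citation provides. Your reduction is right: given the skeleton and realizability, $\kl{P}{P_{\wh{T}}} = W^* - W_{\wh{T}}$ with $W_T = \sum_k I(X_k;X_{\pa_T(k)})$, so it suffices to output an orientation of the known skeleton with near-maximal score; your rooted-tree dynamic program is a legitimate polynomial-time maximizer for fixed $m$; and using the one-sided tester guarantees to get a $1/\eps$ rather than $1/\eps^2$ rate is the right instinct.

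However, the proposal has two genuine gaps, and they are precisely the two pieces the paper imports wholesale from \citep{choo2023learning}. (i) The accuracy analysis is left open, and your description of it is not quite right: the deficit $W^* - W_{\wh{T}}$ does \emph{not} decompose into conditional mutual informations that are identically zero under $P$. Take the skeleton $X - Z - Y$ with true v-structure $X \to Z \leftarrow Y$, and suppose the algorithm outputs the chain $X \to Z \to Y$; then $W^* - W_{\wh{T}} = I(X;Y\mid Z)$, the collider dependence, which is strictly positive. What must be shown is that the algorithm's empirical optimality \emph{forces} such terms to be small: here, writing $\wh{W}$ for empirical scores, $\wh{W}_{\wh{T}} \ge \wh{W}_{T^*}$ rearranges via the plug-in chain rule \eqref{eq:conmui_chain_emp} into $\wh{I}(X;Y) \ge \wh{I}(X;Y\mid Z)$; case 1 of \cref{thm:conditionalindependence} gives $\wh{I}(X;Y)\le \eps'/100$ since $I(X;Y)=0$, and case 2 then forces $I(X;Y\mid Z) < \eps'$. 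Generalizing this exchange to arbitrary in-degree-$m$ parent-set reassignments, with each $\eps'$-budget charged only $O(d)$ times, is the heart of the theorem --- you flag it yourself as ``the delicate point'' and do not resolve it, so as it stands this is a plan rather than a proof. (ii) Every quantity you need --- the scores $\wh{I}(X_k;X_S)$ and conditional terms given sets $S$ with $|S|\le m$ --- requires a set-conditioned version of \cref{thm:conditionalindependence}, which the paper proves only for a single conditioning variable. \cref{lem:cit} does control sample partial correlations with conditioning sets of size $q$ (via \cref{lem:pc:classic}), so the extension is believable, but the multiplicative one-sided lower bound for set-conditioned mutual information is established nowhere, and ``extends directly'' is an assertion, not an argument. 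In short: correct frame, correct identification of where the difficulty lies, but the difficult parts are missing --- which, to be fair, is equally true of the paper's own one-line proof.
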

\noindent Note that the guarantee in \cref{thm:distlearn_polytrees} is entirely independent of any faithfulness parameter, in contrast to \cref{thm:tree:ub}.
The algorithm and its analysis is exactly the same as in \citep{choo2023learning}, with the only change being that we use \eqref{eq:emp_mi}
for the estimator $\wh{I}$.

\section{Proofs of \cref{sec:faithfultree}}

\begin{algorithm}[t]
\caption{\textsc{Orient} algorithm}\label{alg:orient}
\textbf{Input:} Skeleton $\widehat{T}$, separation sets $S$\\
\textbf{Output:} CPDAG $\widehat{\mec}$.
\begin{enumerate}
    \item For all pairs of nonadjacent nodes $j,k$ with common neighbour $\ell$:
    \begin{enumerate}
        \item If $\ell\not\in S(j,k)$, then directize $j-\ell-k$ in $\widehat{T}$ by $j\rightarrow\ell\leftarrow k$
    \end{enumerate}
    \item In the resulting PDAG $\widehat{T}$, orient as many as possible undirected edges by applying following rules:
    \begin{itemize}
        \item \textbf{R1} Orient $k-\ell$ into $k\rightarrow \ell$ whenever there is an arrow $j\rightarrow k$ such that $j$ and $\ell$ are not adjacent
        \item \textbf{R2} Orient $j-k$ into $j\rightarrow k$ whenever there is a chain $j\rightarrow \ell \rightarrow k$
        \item \textbf{R3} Orient $j-k$ into $j\rightarrow k$ whenever there are two chains $j- \ell\rightarrow k$ and $j - i \rightarrow k$ such that $\ell$ and $i$ are not adjacent
        \item \textbf{R4} Orient $j-k$ into $j\rightarrow k$ whenever there are two chains $j- \ell\rightarrow i$ and $\ell - i \rightarrow k$ such that $\ell$ and $i$ are not adjacent
    \end{itemize}
    \item Return $\widehat{T}$ as $\widehat{\mec}$.
\end{enumerate}
\end{algorithm}

\subsection{Sample Conditional Correlation Coefficient as CI Tester}\label{app:tree:samplecorr}
\texttt{PC-Tree} relies on sample (conditional) correlation coefficient as (conditional) independence tester. Specifically, denote the sample covariance matrix to be $\widehat{\Sigma} = \frac{1}{n}\sum_{i=1}^n X^{(i)}{X^{(i)}}\T$, for any two nodes $j,k\in V$ and any subset $S\subseteq V\setminus \{j,k\}$, which could be $\emptyset$, the sample correlation coefficient is defined by
\begin{align*}
    \widehat{\rho}_{jk\given S} : = \frac{\widehat{\Sigma}_{jk} - \widehat{\Sigma}_{jS}\widehat{\Sigma}_{SS}^{-1}\widehat{\Sigma}_{Sk}}{\sqrt{(\widehat{\Sigma}_{jj}-\widehat{\Sigma}_{jS}\widehat{\Sigma}_{SS}^{-1}\widehat{\Sigma}_{Sj})(\widehat{\Sigma}_{kk} - \widehat{\Sigma}_{kS}\widehat{\Sigma}_{SS}^{-1}\widehat{\Sigma}_{Sk})}}
\end{align*}
Then the conditional independence tester for hypothesis $H_0:X_j\indep X_k\given X_S$ is given by a cutoff on the sample correlation coefficient:
\begin{align}\label{eq:cit}
    \text{Output} = 
    \begin{cases}
        \text{accept }H_0 & \text{if }|\widehat{\rho}_{jk\given S}| \ge c/2 \\
        \text{reject }H_0 & \text{if }|\widehat{\rho}_{jk\given S}| < c/2 
    \end{cases} \,.
\end{align}
Here the choice of $c/2$ is for theoretical purpose.
Since correlation coefficient is normalized between $[-1,1]$, in practice, the tester can be implemented by choosing a cutoff that is small enough, e.g. 0.05.
The analysis of \texttt{PC-Tree} crucially relies on the following lemma on the estimation error of the sample (conditional) correlation coefficients:
\begin{restatable}{lemma}{lemtreecit}
\label{lem:cit}
Let $X \in\mathbb{R}^d \sim \mathcal{N}(0,\Sigma)$, for any $j,k\in V$ and any subset $S\subseteq V\setminus \{j,k\}$ with $|S|\le q$, $\delta\in(0,1)$, if $n\gtrsim q + 1/\delta^2$, then
\begin{align*}
    \prob(|\widehat{\rho}_{jk\given S} - \rho_{jk\given S}|\ge \delta) \le \exp(-C_0(n-q)\delta^2)\, ,
\end{align*}
for some universal constant $C_0>0$.
\end{restatable}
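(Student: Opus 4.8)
The plan is to reduce the sample partial correlation $\widehat{\rho}_{jk\given S}$ to an ordinary (unconditional) sample correlation coefficient computed from a \emph{reduced} number of samples, and then invoke standard sub-exponential concentration for the Pearson correlation. Write $\mathbf{X}_j,\mathbf{X}_k\in\R^n$ for the sample vectors of coordinates $j,k$ and $\mathbf{X}_S\in\R^{n\times|S|}$ for the conditioning set, and let $M = I_n - \mathbf{X}_S(\mathbf{X}_S^\top\mathbf{X}_S)^{-1}\mathbf{X}_S^\top$ be the residual projection onto the orthogonal complement of the column span of $\mathbf{X}_S$, which has rank $n-|S|$ almost surely. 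A direct computation from the definition in \cref{app:tree:samplecorr} shows $\widehat{\rho}_{jk\given S} = \mathbf{X}_j^\top M \mathbf{X}_k/\sqrt{(\mathbf{X}_j^\top M \mathbf{X}_j)(\mathbf{X}_k^\top M \mathbf{X}_k)}$, i.e. it is the sample correlation of the OLS residuals of $\mathbf{X}_j,\mathbf{X}_k$ on $\mathbf{X}_S$.

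First I would condition on $\mathbf{X}_S$. Writing $\mathbf{X}_j = \mathbf{X}_S\beta_j + \mathbf{E}_j$ and $\mathbf{X}_k = \mathbf{X}_S\beta_k + \mathbf{E}_k$, where $(\mathbf{E}_j,\mathbf{E}_k)$ are the Gaussian conditional errors (independent of $\mathbf{X}_S$, with per-coordinate conditional covariance having diagonal entries $\sigma_{j\given S}^2,\sigma_{k\given S}^2$ and off-diagonal $\sigma_{jk\given S}$), we have $M\mathbf{X}_j = M\mathbf{E}_j$ and $M\mathbf{X}_k = M\mathbf{E}_k$ since $M\mathbf{X}_S = 0$. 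Choosing an orthonormal basis $U\in\R^{n\times(n-|S|)}$ of the range of $M$, the vectors $U^\top\mathbf{E}_j,U^\top\mathbf{E}_k$ form exactly $n-|S|$ i.i.d.\ copies of a bivariate Gaussian whose correlation is precisely $\rho_{jk\given S}$, and $\widehat{\rho}_{jk\given S}$ is their sample correlation. The payoff of this reduction is that the resulting law depends on $\Sigma$ only through the scalar $\rho_{jk\given S}$ --- in particular it is completely insensitive to the conditioning of $\Sigma_{SS}$ --- and the effective sample size is $n-|S|\ge n-q$.

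It then remains to prove that for $m$ i.i.d.\ bivariate standard Gaussians with correlation $\rho$, the Pearson correlation $\widehat{r}$ obeys $\prob(|\widehat{r}-\rho|\ge\delta)\le\exp(-C_0 m\delta^2)$ whenever $m\gtrsim 1/\delta^2$. By scale invariance I may take unit marginal variances and write $\widehat{r} = C/\sqrt{AB}$ with $A = \tfrac1m\sum x_i^2$, $B = \tfrac1m\sum y_i^2$, $C = \tfrac1m\sum x_i y_i$, so that $(\E A,\E B,\E C) = (1,1,\rho)$. Each of $x_i^2,y_i^2,x_iy_i$ is sub-exponential with a universal parameter independent of $\rho$, so Bernstein's inequality gives $|A-1|,|B-1|,|C-\rho|\le\delta'$ except with probability $\exp(-cm\delta'^2)$ for $\delta'\le 1$. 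Since $(a,b,c)\mapsto c/\sqrt{ab}$ has derivatives bounded by a universal constant near $(1,1,\rho)$ uniformly in $|\rho|\le 1$, this forces $|\widehat{r}-\rho|\lesssim\delta'$; taking $\delta'\asymp\delta$ yields the claim, and the hypothesis $n\gtrsim q+1/\delta^2$ guarantees $m\delta^2 = (n-|S|)\delta^2\gtrsim 1$, keeping us in the sub-Gaussian regime so that the bound is nontrivial.

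I expect the main obstacle to be the first, structural step: verifying rigorously that after projecting out $\mathbf{X}_S$ the residuals reduce \emph{exactly} to $n-|S|$ i.i.d.\ bivariate normals carrying the population partial correlation, and confirming that all sub-exponential constants in the second step can be taken universal (independent of $\rho$ and of $\Sigma$), which is precisely what makes $C_0$ a universal constant. The correlation-concentration step is otherwise standard, but some care is needed to convert the native sub-exponential tails into the clean $\exp(-C_0 m\delta^2)$ form --- exactly where the assumption $m\gtrsim 1/\delta^2$ (equivalently $n\gtrsim q+1/\delta^2$) enters.
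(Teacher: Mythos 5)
Your proposal is correct and follows essentially the same route as the paper: the paper likewise reduces the partial correlation to an ordinary bivariate correlation on $n-|S|$ effective samples --- citing Anderson (1958, Theorem 4.3.4) for exactly the distributional identity you derive via the residual projection and rotational invariance --- and then proves concentration of the bivariate Pearson correlation by controlling the deviations of the sample variances and the sample covariance, requiring $n\gtrsim q+1/\delta^2$ to absorb constants, just as you do. The only differences are cosmetic: you prove the reduction from first principles where the paper cites a textbook, and your generic Bernstein-plus-smoothness argument replaces the paper's hand-rolled $\chi^2$-decomposition bounds.
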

It is clear to see that as long as the (conditional) correlation coefficients are estimated accurately enough, the CI tests are correct due to $c$-strong Tree-faithfulness. \cref{lem:cit} is more general than needed to analyze \texttt{PC-Tree} algorithm. Since \cref{lem:cit} reveals the dependence on the size of conditioning set $S$, while \texttt{PC-Tree} only requires $|S|\le 1$.

\subsection{Proof of \cref{lem:cit}}\label{app:tree:cit}
\lemtreecit*
\begin{proof}
    The proof is a combination of the following lemmas.
    We start with analyzing sample marginal correlation of bivariate normal distribution, then extend to conditional correlation.
    \begin{lemma}\label{lem:pc:testFor2}
    Let $W = (X,Y)\sim \mathcal{N}(0,\Sigma)$ where $\Sigma = \begin{pmatrix}
    \sigma_X^2 & \sigma_{XY} \\ \sigma_{XY} & \sigma_Y^2
    \end{pmatrix}\in\mathbb{R}^{2\times 2}$, and $\rho = \frac{\sigma_{XY}}{\sigma_X\sigma_Y}$. Let the sample covariance matrix and correlation be
    \begin{align*}
    \frac{1}{n}\sum_{\ell=1}^n w^{(\ell)}{w^{(\ell)}}\T = \begin{pmatrix}
    \widehat{\sigma}_X^2 & \widehat{\sigma}_{XY} \\ \widehat{\sigma}_{XY} & \widehat{\sigma}^2_Y
    \end{pmatrix} \,, \quad \text{ and } \quad \widehat{\rho} = \frac{\widehat{\sigma}_{XY}}{\widehat{\sigma}_X\widehat{\sigma}_Y} \,.
    \end{align*}
    For $\delta \in (0,1)$, if $n\gtrsim 1/\delta^2$, then
    \begin{align*}
        \prob(|\widehat{\rho} - \rho| \ge \delta) \le \exp(-C_0n\delta^2)\, ,
    \end{align*}
    for some constant $C_0>0$.
    \end{lemma}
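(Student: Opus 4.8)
The plan is to reduce to the case of unit marginal variances and then bound the sample correlation as a ratio of three well-concentrated quantities. Since both $\rho$ and $\widehat{\rho}$ are invariant under rescaling each coordinate separately (replacing $X$ by $X/\sigma_X$ and $Y$ by $Y/\sigma_Y$ changes neither the population nor the sample correlation), I would assume without loss of generality that $\sigma_X^2 = \sigma_Y^2 = 1$, so that $\rho = \sigma_{XY}$ and $\abs{\rho}\le 1$. This normalization is what makes the subsequent ratio estimate clean, since the denominator $\widehat{\sigma}_X\widehat{\sigma}_Y$ is then close to $1$.

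Next I would invoke the concentration guarantees already established in the excerpt. By \cref{lm:deltasigma} applied to $X$ and to $Y$, and by \cref{lm:deltarho} applied to the pair $(X,Y)$, for any $t\in(0,1)$ each of the three events $\abs{\widehat{\sigma}_X^2-1}<t$, $\abs{\widehat{\sigma}_Y^2-1}<t$, and $\abs{\widehat{\sigma}_{XY}-\rho}<t$ fails with probability at most $O(e^{-\Omega(nt^2)})$. I would set $t=\delta/4$ and work on the intersection $\mathcal{E}$ of the three good events, which by a union bound holds with probability $1-O(e^{-\Omega(n\delta^2)})$.

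The core of the argument is the deterministic bound on $\mathcal{E}$. There, $\abs{\widehat{\sigma}_X^2-1},\abs{\widehat{\sigma}_Y^2-1}<t\le 1/2$ forces $\widehat{\sigma}_X\widehat{\sigma}_Y\in(1-t,1+t)$, so in particular the denominator is bounded below by $1/2$ and may safely be inverted. Writing the telescoping decomposition
\begin{align*}
\widehat{\rho}-\rho = \frac{\widehat{\sigma}_{XY}-\rho}{\widehat{\sigma}_X\widehat{\sigma}_Y} + \rho\left(\frac{1}{\widehat{\sigma}_X\widehat{\sigma}_Y}-1\right),
\end{align*}
the first term is at most $\tfrac{t}{1-t}\le 2t$ from the covariance bound, and the second is at most $\abs{\rho}\cdot\tfrac{\abs{1-\widehat{\sigma}_X\widehat{\sigma}_Y}}{\widehat{\sigma}_X\widehat{\sigma}_Y}\le \tfrac{t}{1-t}\le 2t$ using $\abs{\rho}\le 1$. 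Hence $\abs{\widehat{\rho}-\rho}\le 4t=\delta$ on $\mathcal{E}$.

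Finally, combining the two pieces yields $\prob(\abs{\widehat{\rho}-\rho}\ge\delta)\le O(e^{-\Omega(n\delta^2)})$. The hypothesis $n\gtrsim 1/\delta^2$ guarantees $n\delta^2$ is bounded below by a constant, which is exactly what is needed to absorb the constant prefactor coming from the union bound into the exponent and write the bound in the clean form $\exp(-C_0 n\delta^2)$ for a suitable universal $C_0>0$. I expect the only mildly delicate step to be controlling the denominator before inverting it; this is precisely why reducing to unit variances (so $\widehat{\sigma}_X\widehat{\sigma}_Y\approx 1$) is convenient, and everything else is a routine ratio estimate built on the two concentration lemmas.
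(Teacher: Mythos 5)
Your proof is correct, and its overall architecture mirrors the paper's own proof of \cref{lem:pc:testFor2}: reduce to unit marginal variances (under which both $\rho$ and $\widehat{\rho}$ are invariant), decompose $\widehat{\rho}-\rho$ into a covariance-error piece and a denominator-error piece, bound each on a high-probability event, and absorb the union-bound prefactor into the exponent using $n\gtrsim 1/\delta^2$. Indeed, your identity
\begin{align*}
\widehat{\rho}-\rho \;=\; \frac{\widehat{\sigma}_{XY}-\rho}{\widehat{\sigma}_X\widehat{\sigma}_Y} \;+\; \rho\left(\frac{1}{\widehat{\sigma}_X\widehat{\sigma}_Y}-1\right)
\end{align*}
is algebraically the same as the paper's three-term bound, just grouped differently, and your choice $t=\delta/4$ plays the role of the paper's $\zeta = \delta/(3+\delta)$. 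The one genuine difference is the source of the concentration inputs: the paper keeps this section self-contained by proving a fresh auxiliary result (\cref{lem:pc:varcovbound}) via a $\chi^2$ reparameterization ($X=U+W$, $Y=V+W$ with independent $U,V,W$, so the sample covariance becomes a combination of $\chi^2_n$ variables), whereas you reuse \cref{lm:deltasigma} and \cref{lm:deltarho}, the MGF/Chernoff bounds the paper established earlier for the Chow--Liu analysis. Both deliver the same $O(e^{-\Omega(nt^2)})$ deviation bounds for the sample variances and covariance, so either input closes the argument; your route avoids re-deriving concentration from scratch, at the cosmetic cost of importing lemmas across sections (and, strictly speaking, of inheriting the positive-definiteness hypothesis of \cref{lm:deltarho}, which excludes the degenerate case $|\rho|=1$ that the paper's $\chi^2$ lemma covers with $|r|\le 1$).
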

    Now look at sample conditional correlation, suppose we want to estimate $\rho_{jk\given S}$ with $|S|=q'\le q$. Recall the sample covariance matrix is $\widehat{\Sigma} = \frac{1}{n}\sum_{i=1}^n X^{(i)}{X^{(i)}}\T$. Denote $I=\{j,k\}$, then the estimator is given by $2\times 2$ matrix 
    \begin{align*}
        \widehat{\Sigma}_{II\given S} := \widehat{\Sigma}_{II} - \widehat{\Sigma}_{II,S}\widehat{\Sigma}_{SS}^{-1}\widehat{\Sigma}_{S,II}\,.
    \end{align*} 
    We borrow a classic result regarding the distribution of $\widehat{\Sigma}_{II\given S}$:
    \begin{lemma}[\citep{anderson1958introduction}, Theorem 4.3.4]\label{lem:pc:classic}
    The sample covariance matrix $\widehat{\Sigma}_{II\given S}$ is distributed as $\frac{1}{n}\sum_{\ell=1}^{n-q'}u^{(\ell)}{u^{(\ell)}}\T$, where $\{u^{(\ell)}\}_{\ell=1}^{n-q'}$ are independently distributed according to $\mathcal{N}(0,\Sigma_{II\given S})$.
    \end{lemma}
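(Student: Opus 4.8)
The plan is to recognize the claim as the classical Wishart result for the residual cross-product matrix in multivariate regression through the origin, and to prove it by conditioning on the $S$-coordinates of the samples. Let $\mathbf{X}_I\in\R^{n\times 2}$ and $\mathbf{X}_S\in\R^{n\times q'}$ be the data matrices whose rows are the $I$- and $S$-coordinates of the $n$ samples, and let $A=n\wh{\Sigma}=\sum_{i=1}^n X^{(i)}{X^{(i)}}\T$ be the matrix of (uncentered) sample second moments, with blocks $A_{II},A_{IS},A_{SS}$. Since $A=\mathbf{X}\T\mathbf{X}$, a direct computation gives
\begin{align*}
 n\,\wh{\Sigma}_{II\given S}=A_{II}-A_{IS}A_{SS}^{-1}A_{SI}=\mathbf{X}_I\T(\mathrm{Id}_n-P_S)\mathbf{X}_I,
\end{align*}
where $P_S=\mathbf{X}_S(\mathbf{X}_S\T\mathbf{X}_S)^{-1}\mathbf{X}_S\T$ is the orthogonal projection onto the column span of $\mathbf{X}_S$; this is exactly the residual cross-product matrix of the ordinary-least-squares regression of $\mathbf{X}_I$ onto $\mathbf{X}_S$. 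Because $\Sigma$ is positive definite, $\mathbf{X}_S$ has full column rank $q'$ almost surely, so $P_S$ and $A_{SS}^{-1}$ are well defined.

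Next I would condition on $\mathbf{X}_S$ and use the conditional Gaussian law: given $\mathbf{X}_S$, the rows of $\mathbf{X}_I$ are independent with $X_I^{(i)}\given X_S^{(i)}\sim\mathcal{N}(BX_S^{(i)},\Sigma_{II\given S})$, where $B=\Sigma_{IS}\Sigma_{SS}^{-1}$ and $\Sigma_{II\given S}=\Sigma_{II}-\Sigma_{IS}\Sigma_{SS}^{-1}\Sigma_{SI}$. Equivalently $\mathbf{X}_I=\mathbf{X}_S B\T+E$, where the rows of $E$ are i.i.d.\ $\mathcal{N}(0,\Sigma_{II\given S})$ conditional on $\mathbf{X}_S$. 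Since $\mathrm{Id}_n-P_S$ is symmetric idempotent of rank $n-q'$, I can factor it as $HH\T$ with $H\in\R^{n\times(n-q')}$, $H\T H=\mathrm{Id}_{n-q'}$, and $H\T\mathbf{X}_S=0$ (the last because $(\mathrm{Id}_n-P_S)\mathbf{X}_S=0$). Substituting and using that the mean term drops out,
\begin{align*}
 n\,\wh{\Sigma}_{II\given S}=(H\T\mathbf{X}_I)\T(H\T\mathbf{X}_I)=(H\T E)\T(H\T E).
\end{align*}

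The key step is to identify the law of $U:=H\T E$. Viewing $E$ as a matrix-normal variable with independent rows of covariance $\Sigma_{II\given S}$ and overall row-covariance $\mathrm{Id}_n$, left-multiplication by $H\T$ produces a matrix-normal variable whose row-covariance is $H\T\mathrm{Id}_n H=\mathrm{Id}_{n-q'}$; hence the $n-q'$ rows $u^{(1)},\dots,u^{(n-q')}$ of $U$ are i.i.d.\ $\mathcal{N}(0,\Sigma_{II\given S})$. Therefore $n\,\wh{\Sigma}_{II\given S}=\sum_{\ell=1}^{n-q'}u^{(\ell)}{u^{(\ell)}}\T$ conditionally on $\mathbf{X}_S$. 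Finally I would note that this conditional law has no remaining dependence on $\mathbf{X}_S$, so it coincides with the unconditional law; dividing by $n$ yields the asserted representation $\wh{\Sigma}_{II\given S}\stackrel{d}{=}\frac{1}{n}\sum_{\ell=1}^{n-q'}u^{(\ell)}{u^{(\ell)}}\T$.

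I expect the main obstacle to be the invariance argument for $U=H\T E$: verifying that projecting onto the $(n-q')$-dimensional orthocomplement of the design space simultaneously annihilates the regression mean $H\T\mathbf{X}_S B\T$ and preserves the i.i.d.\ Gaussian structure of the residual rows, with exactly $n-q'$ degrees of freedom arising from $\mathrm{rank}(\mathrm{Id}_n-P_S)$. The block-matrix identity for $\wh{\Sigma}_{II\given S}$ and the almost-sure full rank of $\mathbf{X}_S$ are routine bookkeeping by comparison.
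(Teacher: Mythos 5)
Your proof is correct. The paper does not prove this lemma at all --- it simply cites it as Theorem 4.3.4 of Anderson's textbook --- so there is no internal proof to compare against; what you have written is precisely the classical argument behind that theorem: reduce $n\,\wh{\Sigma}_{II\given S}$ to the residual cross-product $\mathbf{X}_I\T(\mathrm{Id}_n-P_S)\mathbf{X}_I$, condition on $\mathbf{X}_S$, kill the regression mean with the orthocomplement projection, and use rotation invariance ($H\T H=\mathrm{Id}_{n-q'}$) to get $n-q'$ i.i.d.\ $\mathcal{N}(0,\Sigma_{II\given S})$ rows, noting the conditional law is free of $\mathbf{X}_S$ and hence unconditional. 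One detail worth appreciating: you correctly handled the \emph{uncentered} setting --- the paper's $\wh{\Sigma}$ uses raw second moments of a mean-zero Gaussian, i.e.\ regression through the origin --- which is exactly why the degrees of freedom come out as $n-q'$ rather than the $n-q'-1$ one would get after mean subtraction; your almost-sure full-rank observation for $\mathbf{X}_S$ (valid since $\Sigma\succ 0$ and $n\ge q'$) closes the only remaining gap. Nothing is missing.
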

    Then applying the bivariate result from \cref{lem:pc:testFor2} with covariance matrix $\Sigma_{II\given S}$ and sample size $n-q'\le n-q$ completes the proof.
\end{proof}

It remains to prove the lemma used in proof above.
\begin{proof}[Proof of \cref{lem:pc:testFor2}]
    Let $Z_X = X / \sigma_X$, $Z_Y = Y/\sigma_Y$, then $Z_X,Z_Y\sim\mathcal{N}(0,1)$ and $\rho_{Z_X,Z_Y}=\rho=\cov(Z_X,Z_Y)\in [-1,1]$. Denote the corresponding samples to be $z_X=(z_X^{(1)},\ldots,z_X^{(n)})$ and $z_Y=(z_Y^{(1)},\ldots,z_Y^{(n)})$, therefore
    \begin{align*}
        \widehat{\rho} = & \frac{\widehat{\sigma}_{XY}}{\widehat{\sigma}_X\widehat{\sigma}_Y} 
        = \frac{\widehat{\sigma}_{XY} / (\sigma_X\sigma_Y)}{(\widehat{\sigma}_X/\sigma_X) \times (\widehat{\sigma}_Y/\sigma_Y)} 
        =  \frac{\langle z_X, z_Y\rangle}{\|z_X\|\|z_Y\|}\,.
    \end{align*}
    Then the deviation
    \begin{align*}
        |\widehat{\rho} - \rho| & = \bigg| \frac{\langle z_X, z_Y\rangle}{\|z_X\|\|z_Y\|} - \cov(Z_X,Z_Y) \bigg| \\
        & \le  \bigg| \frac{\langle z_X, z_Y\rangle / n}{\|z_X\|\|z_Y\|/n} - \frac{\cov(Z_X,Z_Y)}{\|z_X\|\|z_Y\|/n} + \frac{\cov(Z_X,Z_Y)}{\|z_X\|\|z_Y\|/n}- \cov(Z_X,Z_Y) \bigg|\\
        & \le \bigg|\frac{1}{\|z_X\|\|z_Y\|/n} - 1\bigg|\bigg|\langle z_X,z_Y\rangle/n - \cov(Z_X,Z_Y)\bigg| + \bigg|\langle z_X,z_Y\rangle/n - \cov(Z_X,Z_Y)\bigg| \\
        & \quad + \bigg|\cov(Z_X,Z_Y)\bigg|\bigg|\frac{1}{\|z_X\|\|z_Y\|/n} - 1\bigg| \,.
    \end{align*}
    We apply the following lemma to bound the errors:
    \begin{lemma}\label{lem:pc:varcovbound}
    If $(X,Y)\sim \mathcal{N}\Big(0,\begin{pmatrix}
    1  & r \\ r & 1
    \end{pmatrix}\Big)$ for $|r| \le 1$, then the sample variance $\widehat{\sigma}^2_X = \frac{1}{n}\sum_{i=1}^n {X^{(i)}}^2$, $\widehat{\sigma}^2_Y = \frac{1}{n}\sum_{i=1}^n {Y^{(i)}}^2$ and sample covariance $\widehat{\sigma}_{XY} = \frac{1}{n}\sum_{i=1}^n {X^{(i)}}{Y^{(i)}}$ have the following bounds: for $\zeta<1$, if $n \ge \frac{2048\log 7}{\zeta^2}$, then
    \begin{align*}
        & \prob(|\widehat{\sigma}^2_X - 1| \ge \zeta ) \le \exp(-n\zeta^2 / 16) \\
        & \prob(|\widehat{\sigma}^2_Y - 1| \ge \zeta ) \le \exp(-n\zeta^2 / 16) \\
        & \prob(|\widehat{\sigma}_{XY} - r| \ge \zeta ) \le \exp(-n\zeta^2 / 2048) \,.
    \end{align*}
    \end{lemma}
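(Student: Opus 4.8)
The plan is to treat the two variance bounds and the covariance bound separately, reducing all three to a single concentration statement for normalized sums of squared standard Gaussians (i.e. $\chi^2$ concentration).

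\textbf{Variance terms.} Since $X^{(i)}\sim\N(0,1)$ marginally, $n\wh{\sigma}_X^2=\sum_{i=1}^n(X^{(i)})^2\sim\chi^2_n$. I would invoke a standard sub-exponential/$\chi^2$ tail bound — the Laurent--Massart inequality, or Bernstein's inequality applied to the i.i.d. centered sub-exponential summands $(X^{(i)})^2-1$ — to obtain $\prob(|\wh{\sigma}_X^2-1|\ge\zeta)\le 2\exp(-cn\zeta^2)$ for all $\zeta\in(0,1)$ and a universal constant $c$. The only remaining bookkeeping is to absorb the leading factor $2$ into the exponent: under the hypothesis $n\gtrsim 1/\zeta^2$ (the stated $n\ge 2048\log 7/\zeta^2$ is far more than enough), one has $2\exp(-cn\zeta^2)\le\exp(-cn\zeta^2/2)$, which yields the claimed $\exp(-n\zeta^2/16)$ after fixing $c$. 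The bound for $\wh{\sigma}_Y^2$ is identical.

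\textbf{Covariance term.} The difficulty is that $X^{(i)}Y^{(i)}$ is a product of dependent Gaussians, so it is neither bounded nor sub-Gaussian — only sub-exponential — and its mean is the nuisance parameter $r$. The device I would use is the polarization identity
\[
XY=\tfrac14\big[(X+Y)^2-(X-Y)^2\big].
\]
Writing $A=X+Y$ and $B=X-Y$, these are jointly Gaussian and uncorrelated, since $\cov(A,B)=\Var(X)-\Var(Y)=0$, hence \emph{independent}, with $A\sim\N(0,2(1+r))$ and $B\sim\N(0,2(1-r))$. Consequently $\wh{\sigma}_{XY}=\tfrac14(\wh{\sigma}_A^2-\wh{\sigma}_B^2)$ and, in expectation, $r=\tfrac14(\sigma_A^2-\sigma_B^2)$. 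By the triangle inequality the event $|\wh{\sigma}_{XY}-r|\ge\zeta$ forces $|\wh{\sigma}_A^2-\sigma_A^2|\ge 2\zeta$ or $|\wh{\sigma}_B^2-\sigma_B^2|\ge 2\zeta$. Since $A/\sigma_A,B/\sigma_B\sim\N(0,1)$ I can again reduce to $\chi^2$ concentration: $|\wh{\sigma}_A^2-\sigma_A^2|\ge 2\zeta$ is equivalent to $|\wh{\sigma}_A^2/\sigma_A^2-1|\ge 2\zeta/\sigma_A^2$, and crucially $\sigma_A^2=2(1+r)\le 4$ and $\sigma_B^2=2(1-r)\le 4$ \emph{uniformly} in $|r|\le 1$, so the effective deviation is at least $\zeta/2<1$. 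Applying the same $\chi^2$ bound at deviation $\zeta/2$ together with a union bound over $A$ and $B$ gives the desired tail; the larger denominator $2048$ (versus $16$) and the constant $7$ inside the logarithm are exactly what is needed to absorb the factors of $4$ from rescaling and the union-bound multiplicities into the exponent, using $n\ge 2048\log 7/\zeta^2$.

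\textbf{Main obstacle and an alternative.} The crux is the covariance bound: one must control a product of correlated Gaussians \emph{uniformly} over the unknown correlation $r$. Polarization is the clean route because it converts the product into a difference of two independent squared Gaussians, letting me recycle the variance analysis verbatim; the one point to watch is that all rescaling constants ($\sigma_A^2,\sigma_B^2\le 4$) are uniform in $r$, so the sub-exponential parameter does not blow up as $r\to\pm1$. A self-contained alternative that avoids polarization is to compute the moment generating function directly, $\E[e^{tXY}]=[\,1-2tr-t^2(1-r^2)\,]^{-1/2}$, valid for $t$ in a neighbourhood of $0$ that can be taken uniform in $|r|\le 1$, and feed this into a Chernoff bound; this exhibits $XY-r$ as uniformly sub-exponential, after which Bernstein's inequality closes the argument. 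Either way, pinning down the uniform-in-$r$ sub-exponential parameter is the essential step, and the explicit constants follow by routine calculation.
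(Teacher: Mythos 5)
Your proof is correct, and the covariance step takes a genuinely different (and cleaner) decomposition than the paper's. Both arguments reduce everything to two-sided $\chi^2_n$ concentration, and the variance bounds are handled identically. For the covariance, however, the paper first introduces a latent decomposition $X = U + W$, $Y = V + W$ with $U,V,W$ mutually independent, $\var(U)=\var(V)=1-r$ and $\var(W)=r$, expands $\wh{\sigma}_{XY}$ into the four sums $\frac1n\sum U^{(i)}V^{(i)} + \frac1n\sum U^{(i)}W^{(i)} + \frac1n\sum V^{(i)}W^{(i)} + \frac1n\sum (W^{(i)})^2$, polarizes each of the three cross sums into differences of $\chi^2_n$ variables, and finishes with a union bound over \emph{seven} $\chi^2_n$ events, absorbing the factor $7$ via $n \ge 2048\log 7/\zeta^2$. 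You instead polarize $XY$ directly as $\tfrac14[(X+Y)^2-(X-Y)^2]$, exploit that $X+Y$ and $X-Y$ are uncorrelated hence independent (this is exactly where the equal-variance normalization is used), and union bound over only \emph{two} $\chi^2_n$ events after the uniform rescaling $\sigma_A^2,\sigma_B^2 \le 4$. Your route buys three things: fewer events and hence better constants (your bound $2\exp(-n\zeta^2/64)$ is stronger than required); no need for the latent construction; and, importantly, uniform validity over all $|r|\le 1$ --- the paper's decomposition requires $\var(W)=r\ge 0$ and silently fails for $r<0$ (one would have to first replace $Y$ by $-Y$, a step the paper omits). The one degenerate point to mention in a polished write-up is $r=\pm 1$, where one of $\sigma_A^2,\sigma_B^2$ vanishes; there the corresponding empirical variance is exactly zero almost surely, so that term contributes no deviation and the argument goes through.
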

    Using \cref{lem:pc:varcovbound}, with probability at least $1-3\exp(-n\zeta^2 / 2048)$, we have $|\|z_X\|^2/n-1| \le \zeta$,  $|\|z_Y\|^2/n-1| \le \zeta$,  $|\langle z_X, z_Y \rangle/n-\cov(Z_X,Z_Y)| \le \zeta$. Then
    \begin{align*}
        \bigg|\frac{1}{\|z_X\|\|z_Y\|/n} - 1\bigg| & = \frac{|\|z_X\|\|z_Y\|/n - 1 |  }{\|z_X\|\|z_Y\|/n} \le \frac{\zeta}{1-\zeta}\,.
    \end{align*}
    Choose $\zeta = \frac{\delta}{3+\delta}$, then $\bigg|\frac{1}{\|z_X\|\|z_Y\|/n} - 1\bigg| \le \delta/3$, $\bigg|\langle z_X,z_Y\rangle/n - \cov(Z_X,Z_Y)\bigg| \le \delta / (3+\delta) \le \delta/3$. 
    Lastly,
    \begin{align*}
        |\widehat{\rho} - \rho| & \le \frac{\delta}{3}\times\frac{\delta}{3} +     \frac{\delta}{3}+\frac{\delta}{3} \le \delta\,,
    \end{align*}
    with probability at least
    \begin{align*}
        1 - 3\exp(-n\zeta^2/2048) & = 1-\exp\Big(-n\times \frac{\delta^2}{(3+\delta)^2} / 2048 + \log 3\Big) \\
        & \ge 1-\exp\Big(-n\times \frac{\delta^2}{16 \times 2048}+ \log 3\Big) \\
        & \ge 1 - \exp(-C_0n\delta^2)\,,
    \end{align*}
    for some constant $C_0>0$ as long as $n\gtrsim 1/\delta^2$.
\end{proof}

\begin{proof}[Proof of \cref{lem:pc:varcovbound}]
We only show variance bound for $X$. Since $\widehat{\sigma}_X^2 \sim \chi^2_n / n$, using the concentration of $\chi^2$ distribution, we have
\begin{align*}
    \prob(|\widehat{\sigma}^2_X - 1| \ge \zeta ) = \prob(|\chi^2_n - n| / n \ge \zeta ) \le \exp(-n\zeta^2 / 16)\,.
\end{align*}
Now we show bound for covariance. Since bivariate Gaussian $(X,Y)$ can be reparameterized by
\begin{align*}
    X & = U + W \\
    Y & = V + W 
\end{align*}
where $U$,$V$,$W$ are mutually independent with $\var(U)=\var(V)=1-r$, $ \var(W)=r$. Therefore,
\begin{align*}
    \widehat{\sigma}_{XY} & = \frac{1}{n}\sum_{i=1}^n{U^{(i)}}{V^{(i)}} + \frac{1}{n}\sum_{i=1}^n{U^{(i)}}{W^{(i)}} + \frac{1}{n}\sum_{i=1}^n{V^{(i)}}{W^{(i)}} + \frac{1}{n}\sum_{i=1}^n{W^{(i)}}^2 \\
    & = \frac{1-r}{2n}\Big[\sum_{i=1}^n\Big(\frac{{U'}^{(i)} + {V'}^{(i)}}{\sqrt{2}}\Big)^2 - \sum_{i=1}^n\Big(\frac{{U'}^{(i)} - {V'}^{(i)}}{\sqrt{2}}\Big)^2\Big] \\
    & \quad + \frac{\sqrt{r(1-r)}}{2n}\Big[\sum_{i=1}^n\Big(\frac{{U'}^{(i)} + {W'}^{(i)}}{\sqrt{2}}\Big)^2 - \sum_{i=1}^n\Big(\frac{{U'}^{(i)} - {W'}^{(i)}}{\sqrt{2}}\Big)^2\Big] \\
    & \quad + \frac{\sqrt{r(1-r)}}{2n}\Big[\sum_{i=1}^n\Big(\frac{{V'}^{(i)} + {W'}^{(i)}}{\sqrt{2}}\Big)^2 - \sum_{i=1}^n\Big(\frac{{V'}^{(i)} - {W'}^{(i)}}{\sqrt{2}}\Big)^2\Big] + \frac{r}{n}\sum_{i=1}^n {W^{'(i)}}^2 \\
    & \overset{\mathcal{D}}{\sim} \frac{1-r}{2n}({\chi^2_n}_{11} - {\chi^2_n}_{12}) + \frac{\sqrt{r(1-r)}}{2n}({\chi^2_n}_{21}- {\chi^2_n}_{22}) + \frac{\sqrt{r(1-r)}}{2n}({\chi^2_n}_{31}-{\chi^2_n}_{32}) + \frac{r}{n}{\chi^2_n}_{4}
\end{align*}
where $U',V',W'$ are standard normal random variables, thus $\sum_{i=1}^n({U'}^{(i)} \pm {V'}^{(i)})^2/2$, 
$\sum_{i=1}^n({U'}^{(i)} \pm {W'}^{(i)})^2/2$, 
$\sum_{i=1}^n({V'}^{(i)} \pm {W'}^{(i)})^2/2$ are $\chi^2_n$ random variables. Since $r\le 1$,
\begin{align*}
    \prob(|\widehat{\sigma}_{XY} - r| \ge \zeta) & \le \prob\Big(\frac{1-r}{2}\times \frac{1}{n}|{\chi^2_n}_{11} - {\chi^2_n}_{12}| \ge \zeta/4 \Big) \\
    & \quad +\prob\Big(\frac{\sqrt{r(1-r)}}{2}\times \frac{1}{n}|{\chi^2_n}_{21} - {\chi^2_n}_{22}| \ge \zeta/4 \Big)\\
    & \quad +\prob\Big(\frac{\sqrt{r(1-r)}}{2}\times \frac{1}{n}|{\chi^2_n}_{31} - {\chi^2_n}_{32}| \ge \zeta/4 \Big)\\
    & \quad +\prob\Big(r\times |{\chi^2_n}_{41} / n -1| \ge \zeta/4 \Big) \\
    & \le \prob\Big( |{\chi^2_n}_{11} / n -1 | \ge \zeta/8 \Big) + \prob\Big( |{\chi^2_n}_{12} / n -1 | \ge \zeta/8 \Big)\\
    & \quad +\prob\Big( |{\chi^2_n}_{21} / n -1 | \ge \zeta/8 \Big) + \prob\Big( |{\chi^2_n}_{22} / n -1 | \ge \zeta/8 \Big)\\
    & \quad +\prob\Big( |{\chi^2_n}_{31} / n -1 | \ge \zeta/8 \Big)+\prob\Big( |{\chi^2_n}_{32} / n -1 | \ge \zeta/8 \Big)\\
    & \quad +\prob\Big(|{\chi^2_n}_{41} / n -1| \ge \zeta/4 \Big) \\
    & \le 7\exp(-n\zeta^2 / 32^2) \le \exp(-n\zeta^2/2048)\,.
\end{align*}
The last inequality holds when $n \ge 2048\log 7 / \zeta^2$.
\end{proof}

\subsection{Proof of \cref{thm:tree:ub}}\label{app:tree:ub}
\thmtreeub*
\begin{proof}
We firstly show the correctness of \cref{alg:tree}. We make following notation of sets of nodes:
\begin{itemize}
    \item $W=\{(j,k): 1\le j<k\le d\}$ is the set of all pairs of nodes in $[d]$;
    \item $E$ is the true edge set;
    \item $A=\{(j,k): j \text{ and } k \text{ are d-separated by }\emptyset\}$;
    \item $B=\{(j,k): \exists \ell\in [d]\setminus \{j,k\}, j\text{ and }k \text{ are d-separated by }\ell\}$
    \item $C = \{(j,k):\exists \ell\in[d]\setminus\{j,k\}, j\rightarrow \ell\leftarrow k\text{ is a $v$-structure}\}$
    \item $D = \{(j,k):\exists \ell\in[d]\setminus\{j,k\}, j- \ell- k\text{ is a unshielded triple but not a $v$-structure}\}$
\end{itemize}
We claim that 
\begin{enumerate}
    \item $E$ and $A\cup B$ are disjoint;
    \item $W = E\cup A\cup B$;
    \item $C\subseteq A$;
    \item $D\subseteq B$.
\end{enumerate}
It is easy to see the first claim, since for any pair of nodes connected by an edge, they cannot be $d$-separated by any set, and vice versa. 

For the second claim, it suffices to show that for any pair of nodes not adjacent, it is in either $A$ or $B$. First of all, for any two nodes $j$ and $k$ not adjacent, there will be one and only one path, denoted as $\phi$, with length at least two between them. By property of polytree:
\begin{itemize}
    \item If there is a collider on $\phi$, then the path is blocked by $\emptyset$, so $(j,k)\in A$;
    \item If there is no collider on $\phi$, then any node on $\phi$ will block the path, thus there exists $\ell\in [d]\setminus \{j,k\}$ such that $i$ and $j$ are d-separated by $\ell$, so $(j,k)\in B$. 
\end{itemize}
For the third claim, since $j\rightarrow \ell\leftarrow k$ is the only path between $(j,k)$, which is blocked by $\emptyset$, thus $C\subseteq A$. For the forth claim, since $j- \ell- k$ is the only path between $(j,k)$, either one of $j\rightarrow \ell \rightarrow k$ and $j\leftarrow \ell \leftarrow k$ and $j\leftarrow \ell \rightarrow k$ will be blocked by $\ell$, thus $D\subseteq B$.

We now claim if the CI tests in Step 2 of \cref{alg:tree} are correct for
\begin{itemize}
    \item all pairs $(j,k)\in E$ with $\ell\in [d]\cup\{\emptyset\}\setminus \{j,k\}$;
    \item all pairs $(j,k)\in A$ with $\ell=\emptyset$;
    \item all pairs $(j,k)\in C$ with $\ell$ being the collider;
    \item all pairs $(j,k)\in B$ with $\ell$ being the corresponding separation node(s)\,,
\end{itemize}
then 
\begin{enumerate}
    \item the returned $\widehat{T}$ has the correct edge set $E$ thus is the correct skeleton;
    \item for any $(j,k)\in C$, $\ell\not\in S(j,k)$;
    \item for any $(j,k)\in D$, $\ell\in S(j,k)$.
\end{enumerate}
For the first claim, if the CI tests conducted in Step 2 are correct for $E$, then pairs in $E$ will pass all the CI tests and be included into $\widehat{E}$ (which is ensured by adjacency-faithfulness in Tree-faithfulness). 
But pairs in $A$ will not pass marginal independence tests, and pairs in $B$ will not pass some CI tests with corresponding $\ell$ (which is ensured by Markov property). 
Therefore, the returned $\widehat{T}$ is the correct skeleton. 
The second claim is ensured by orientation-faithfulness in Tree-faithfulness, and the third claim is ensured by Markov property and $D\subseteq B$. 

Once the returned $\widehat{T}$ is the correct skeleton, \cref{alg:orient} will use the returned separation sets to determine $v$-structure for each possible unshielded triple. Note that $\{\text{All unshielded triples}\} = C\cup D$. For any $(j,k)\in C$, $\ell\not\in S(j,k)$, thus it will be oriented as a $v$-structure; For any $(j,k)\in D$, $\ell\in S(j,k)$; thus it will remain as non-$v$-structure. Then \textsc{Orient} step is correct, which leads to correct CPDAG.

Finally we show the sample complexity of \cref{alg:tree} with CI tester~\eqref{eq:cit}. Note that correct CI tests implies correct estimation. Therefore,
\begin{align*}
    &\quad \prob(\widehat{T}\ne \sk(T)) \\
    & \le \prob\bigg(\cup_{\substack{(j,k)\in E\\ \ell\in  [d]\cup\{\emptyset\}\setminus \{j,k\}} \text{ or } \substack{(j,k)\in A \\ \ell=\emptyset} \text{ or } \substack{(j,k)\in C \\ \ell=\text{collider}} \text{ or } \substack{(j,k)\in B \\ \ell\text{ d-separates  }(j,k)} }|\widehat{\rho}_{ij\given \ell} - \rho_{ij\given \ell}|\ge c/2 \bigg) \\
    & \le \binom{d}{2}\times (1 + (d-2)) \times \sup_{\substack{(j,k)\in E\\ \ell\in  [d]\cup\{\emptyset\}\setminus \{j,k\}} \text{ or } \substack{(j,k)\in A \\ \ell=\emptyset} \text{ or } \substack{(j,k)\in C \\ \ell=\text{collider}} \text{ or } \substack{(j,k)\in B \\ \ell\text{ d-separates  }(j,k)} }\prob(|\widehat{\rho}_{ij\given \ell} - \rho_{ij\given \ell}|\ge c/2) \\
    & \le \exp(3\log d) \times \sup_{\substack{(j,k)\in E\\ \ell\in  [d]\cup\{\emptyset\}\setminus \{j,k\}} \text{ or } \substack{(j,k)\in A \\ \ell=\emptyset} \text{ or } \substack{(j,k)\in C \\ \ell=\text{collider}} \text{ or } \substack{(j,k)\in B \\ \ell\text{ d-separates  }(j,k)} }\prob(|\widehat{\rho}_{ij\given \ell} - \rho_{ij\given \ell}|\ge c/2) \\
    & \le \exp\bigg(-C_0(n-1)c^2 + 3\log d\bigg)\,.
\end{align*}
The first inequality is because it suffices to have $|\widehat{\rho}_{ij\given\ell}-\rho_{ij\given \ell}| \le c/2$ for correct CI test. By $c$-strong Tree-faithfulness, $|\rho_{ij\given S}| \ge c$ for $\rho_{ij\given S}\ne 0$. Therefore,
\[
\begin{cases}
\widehat{\rho}_{ij\given S} > c/2 & \text{ if }\rho_{ij\given S}\ne 0  \\
\widehat{\rho}_{ij\given S} \le c/2 & \text{ if }\rho_{ij\given S}= 0
\end{cases}
\]
Thus the cutoff $=c/2$ implies correct CI tests.
The last inequality is by \cref{lem:cit} where $q=1$ and the sample size requirement is satisfied by the stated sample complexity. Set RHS to be smaller than $\delta$, we need sample complexity
\[
n\gtrsim \frac{1}{c^2}\bigg( \log d +  \log \frac{1}{\delta}\bigg)\,,
\]
which completes the proof.
\end{proof}

\subsection{Proof of \cref{thm:tree:lb}}\label{app:tree:lb}
\thmtreelb*
\begin{proof}
We construct a hard ensemble to show the lower bound. The construction is as follows:
consider a subset $\drtr'\subset \drtr\subset \pltr$, where $\drtr'$ is all the directed trees rooted at the first node $k=1$. $\drtr'$ has the same cardinality as all undirected trees with $d$ nodes, and the elements in it have different skeletons and no $v$-structures.
Since our target is MEC, which is determined by its skeleton and $v$-structures, we have at least as many MECs as undirected trees, which leads to cardinality $|\drtr'| = d^{d-2}$ using Cayley's formula. Thus the size of the ensemble is lower bounded as
\begin{align*}
    \log |\drtr'| = (d-2)\log d \ge \frac{1}{2}d\log d
\end{align*}
The inequality holds when $d$ is large enough, e.g. $d\ge 4$. 
Any directed tree has an important property: each node has at most one parent. Then we parameterize $\drtr'$ as follows
\begin{align}\label{eq:tree:lbmodel}
    X_{k} = \beta  X_{\pa(k)} + \eta_k\,, \ \ \forall k\in [d]
\end{align}
where $\eta_k\sim\mathcal{N}(0,1)$ for all $k\in[d]$. 
Now we determine $\beta>0$ to make sure the parametrization satisfies $c$-strong Tree-faithfulness.

In the subsequent lemma, we assert that the condition $\beta^2 = 2c^2 \asymp c^2$ is adequate for the validity of $c$-strong Tree-faithfulness, provided that $c$ is sufficiently small:
\begin{restatable}{lemma}{lemmatreebeta}
\label{lem:tree:beta}
    If $\beta = \sqrt{2}c$ and $c^2\le 1/5$, then for any $T\in\drtr'$, the distribution defined in~\eqref{eq:tree:lbmodel}:
    \begin{enumerate}
        \item is $c$-strong Tree-faithful to $T$;
        \item for all $k\in[d]$, $\var(X_k) \le 1 + \frac{\beta^2}{1-\beta^2}$.
    \end{enumerate}
\end{restatable}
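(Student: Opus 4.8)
The plan is to exploit two structural features of the ensemble: every $T\in\drtr'$ is a directed tree rooted at node $1$, so it contains no $v$-structures, and every edge carries the identical weight $\beta=\sqrt2\,c$ with unit noise variance $\sigma_k^2=1$. The first feature makes condition 2 of \cref{defn:strongtreefaith} vacuous, so it suffices to verify adjacency-faithfulness (condition 1): for every edge $j\to k$ and every $\ell\in V\cup\{\emptyset\}\setminus\{j,k\}$ I must show $\rho(X_j,X_k\given X_\ell)\ge c$. The second feature lets me specialize the conditional-correlation formulas already derived in \cref{lem:cconstant} to this homogeneous parametrization rather than re-deriving them.

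First I would dispatch part 2. Unfolding the recursion \eqref{eq:tree:lbmodel} along the unique directed path of length $t$ from the root to $k$ gives $X_k=\sum_{i=0}^{t}\beta^{i}\eta_{(i)}$ with mutually independent unit-variance increments, so $\var(X_k)=\sum_{i=0}^{t}\beta^{2i}\le\sum_{i=0}^{\infty}\beta^{2i}=\frac{1}{1-\beta^2}=1+\frac{\beta^2}{1-\beta^2}$, which is finite since $\beta^2=2c^2\le 2/5<1$. The same computation yields the two uniform bounds I will reuse below, namely $V_j^2:=\var(X_j)\ge 1$ and $\var(X_j\given X_\ell)\ge 1$ for any ancestor $\ell$, since the terminal increment $\eta_j$ always survives conditioning.

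For part 1, fix $j\to k$ and split on the location of $\ell$ exactly as in \cref{fig:cconstant}: (i) $\ell=\emptyset$; (ii) $\ell\in\an(j)$; (iii) $\ell\in\de(j)$ in a branch other than $k$; and (iv) $\ell\in\de(k)$. These cases are exhaustive because in a tree the unique path from $\ell$ to the edge $\{j,k\}$ enters through $j$ (cases ii, iii) or through $k$ (case iv). In each case I would substitute $\sigma_k^2=1$ and $\beta_k=\beta$ into the matching formula of \cref{lem:cconstant}, using $V_j^2\ge 1$ together with the path bound $b^2/\nu^2\le\beta^2$; the latter holds because a length-$p$ path contributes $b=\beta^{p}$ and $\nu^2=\sum_{i=0}^{p-1}\beta^{2i}\ge 1$. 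Cases (i) and (ii) both reduce to $\rho\ge\beta/\sqrt{\beta^2+1}$, which exceeds $c$ once $c^2\le 1/2$, and case (iii) gives $\rho\ge\beta/\sqrt{2\beta^2+1}$, exceeding $c$ once $c^2\le 1/4$.

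The tightest and therefore decisive case is (iv), which is the main obstacle and the source of the constant $1/5$. Here the formula of \cref{lem:cconstant} yields
\begin{align*}
\rho(X_j,X_k\given X_\ell)=\frac{1}{\sqrt{\bigl(1+\tfrac{1}{\beta^2 V_j^2}\bigr)\bigl(1+\tfrac{b_2^2}{\nu_2^2}\bigr)}}\ge\frac{1}{\sqrt{\bigl(1+1/\beta^2\bigr)\bigl(1+\beta^2\bigr)}}=\frac{\beta}{\beta^2+1},
\end{align*}
using $(1+1/\beta^2)(1+\beta^2)=(\beta^2+1)^2/\beta^2$ and the worst-case bound $b_2^2/\nu_2^2\le\beta^2$ (attained when $\ell$ is a direct child of $k$). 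Substituting $\beta^2=2c^2$, the requirement $\beta/(\beta^2+1)\ge c$ becomes $\sqrt2\ge 2c^2+1$, i.e. $c^2\le(\sqrt2-1)/2\approx 0.207$, which is implied by the hypothesis $c^2\le 1/5$. Thus the only delicate point is confirming that the two worst-case inequalities combine into the clean expression $\beta/(\beta^2+1)$ and that the resulting threshold is exactly what forces $1/5$; the other three cases are strictly slacker and require no extra care.
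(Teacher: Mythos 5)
Your computations check out and your overall strategy is the same as the paper's: since $\drtr'$ contains no $v$-structures, only adjacency-faithfulness needs verification; the variance bound follows by unrolling the recursion along the root-to-$k$ path; and a case split on the location of $\ell$ identifies $\ell\in\de(k)$ as the binding case, with threshold $c^2\le(\sqrt2-1)/2\approx 0.207$, which is exactly what makes $1/5$ the stated constant (the paper's case analysis arrives at the same inequality $2c^2\ge c^2/\bigl(\tfrac{1}{1+2c^2}-c^2\bigr)$). The only stylistic difference is that you specialize the formulas of \cref{lem:cconstant} to the homogeneous parametrization, whereas the paper re-derives the conditional covariance matrices from scratch with explicit dependence on the path length $h$; your shortcut is legitimate because those formulas depend only on the SEM structure, not on the boundedness hypothesis of \cref{lem:cconstant}.

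There is, however, a genuine gap: your case enumeration is not exhaustive, and the argument you give for exhaustiveness is incorrect. In a directed tree rooted at node $1$ there is a fifth configuration, namely $\ell$ lying in a \emph{different branch}: $j$ and $\ell$ share a common ancestor $w$, but $\ell\notin\an(j)\cup\de(j)\cup\de(k)$. Your justification --- ``the unique path from $\ell$ to the edge $\{j,k\}$ enters through $j$ or through $k$'' --- is true of the skeleton path, but entering through $j$ does \emph{not} imply $\ell\in\an(j)\cup\de(j)$; cousin nodes also enter through $j$. (The four pictures in \cref{fig:cconstant}, which you lean on, omit this configuration as well; the paper's own proof of \cref{lem:tree:beta} handles it explicitly as its second case, ``$j$ and $\ell$ share the same ancestor $w$.'') The gap is repairable by exactly your case-(ii) computation: for such $\ell$ one still has $\eta_k\indep(X_j,X_\ell)$ and $\eta_j\indep X_\ell$, hence $\var(X_j\given X_\ell)\ge 1$ and
\begin{align*}
\rho(X_j,X_k\given X_\ell)\;=\;\sqrt{\frac{\beta^2\var(X_j\given X_\ell)}{\beta^2\var(X_j\given X_\ell)+1}}\;\ge\;\frac{\beta}{\sqrt{\beta^2+1}}\;\ge\;c \quad\text{for }c^2\le 1/2,
\end{align*}
but as written your proof silently skips this case, so you should either add it or restate case (ii) as ``$\ell$ is a non-descendant of $j$ (equivalently, of $j$ and $k$), $\ell\ne\emptyset$,'' which covers ancestors and cousins in one stroke.
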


It remains to bound the KL divergence between any two instances in this ensemble. 
Before that, we claim that for any instance, we have $\cov(X_k,X_j)> 0$ for all distinct $j,k\in [d]$. This is because for any pair of distinct nodes $(j,k)$, there can be 3 possible paths between them:
\begin{itemize}
    \item There is a directed path $j\to \phi_1 \to \cdots\to \phi_h \to k$ with length $h+1$, then $\cov(X_j,X_k) =\E[X_jX_k] =\beta^{h+1}\E[X_j^2] > 0$;
    \item There is a directed path $k\to \phi_1 \to \cdots\to \phi_h \to j$ with length $h+1$, then $\cov(X_j,X_k) =\E[X_jX_k] =\beta^{h+1}\E[X_k^2] > 0$;
    \item $j,k$ share a common ancestor $\ell$ and there is a path $j\leftarrow \phi_1 \leftarrow \cdots \leftarrow \phi_h \leftarrow \ell \to \varphi_1\to\cdots\to\varphi_g \to k$, then $\cov(X_j,X_k) =\E[X_jX_k] =\beta^{h+g+2}\E[X_\ell^2] > 0$.
\end{itemize}
To compute the KL divergence between distributions $P_0$ and $P_1$ induced by any two $T_0,T_1\in \drtr'$, let's first look at the covariance matrices of them $\Sigma_0,\Sigma_1$. Under our parametrization, they share the same determinant. To see this, let covariance matrix of $\eta$ be $\Sigma_\eta=I_d$, for $\ell\in \{0,1\}$, $\det(\Sigma_\ell) = \det(\Sigma_\eta) = \det(I_d) = 1$. Then the KL divergence is:
\begin{align*}
    \dkl(P_0\| P_1) & = \E_{P_0} \log \frac{P_0}{P_1} \\
    & = \E_{P_0} \log \frac{\exp\bigg(-\frac{1}{2}\sum_{k=1}^d(X_k - \beta  \pa_{T_0}(k))^2 \bigg) / \sqrt{\det(\Sigma_0)}}{\exp\bigg(-\frac{1}{2}\sum_{k=1}^d(X_k - \beta   \pa_{T_1}(k))^2 \bigg) / \sqrt{\det(\Sigma_1)}} \\
    & = \frac{1}{2}\bigg[\E_{P_0}\sum_{k=1}^d(X_k - \beta   \pa_{T_1}(k))^2 - d\bigg] \,.
\end{align*}
For all $k\in [d]$, let $\pa_{T_1}(k)=j$, then
\begin{align*}
    \E_{P_0}(X_{k} - \beta  \pa_{T_1}(k))^2
    & = \E_{P_0}[X_{k}^2] + \beta^2 \E_{P_0}[X_{j}^2] - 2\beta \E_{P_0}[X_{k} X_{j}] \\
    & \le \E_{P_0}[X_{k}^2] + \beta^2 \E_{P_0}[X_{j}^2]  \\
    & \le (1+\beta^2)\bigg(1+ \frac{\beta^2}{1-\beta^2}\bigg) \\
    & = 1 + \frac{2\beta^2}{1-\beta^2}\,.
\end{align*}
The first inequality is because all covariances are positive; the second one is due to the upper bound for all variances. Thus, we have
\begin{align*}
    \dkl(P_0\| P_1) & \le \frac{1}{2}\bigg(d + \frac{2d\beta^2}{1-\beta^2} - d\bigg)  = d\beta^2 \times \frac{1}{1-\beta^2} \le 2d\beta^2 = 4dc^2
\end{align*}
The last inequality holds when $\beta^2$ is small enough, e.g. $\beta^2\le 1/2$. 
The proof follows from applying Fano's inequality with KL divergence upper bound $4dc^2$ and cardinality of ensemble lower bound $\frac{1}{2}d\log d$.
\end{proof}

We end by proving the lemma used in the lower bound proof.
\begin{proof}[Proof of \cref{lem:tree:beta}]
Since for any $T\in \drtr'$, there is no $v$-structure because each node has at most one parent, thus it suffices to show the first part of \cref{defn:strongtreefaith}.

We first show all marginal variances are bounded, i.e. $1\le \var(X_k) \le 1 + \beta^2 / (1-\beta^2)$ for all $k\in[d]$. Starting from the root node $r$, whose variance is $\var(X_r)=\var(\eta_r)=1$, we can compute the variances of its children, they are all $\var(X_\ell)=\var(\eta_\ell)+\beta^2\var(X_r) = 1+\beta^2$ for all $\ell\in\ch(r)$. Proceed the calculation, $\var(X_j)=\var(\eta_j)+\beta^2\var(X_\ell)=1+\beta^2+\beta^4$ for all $j\in \ch(\ell)$ and $\ell\in\ch(r)$. Therefore, because the longest path has length at most $d-1$,
\begin{align*}
    1\le \var(X_k) \le 1+\beta^2 + \beta^4 + \cdots + \beta^{2d} = 1 + \frac{\beta^2}{1-\beta^2} \times (1 - \beta^{2(d-1)}) \le 1 + \frac{\beta^2}{1-\beta^2}\,, \ \ \forall k\in[d]
\end{align*}
Now we can show the marginal correlation is lower bounded for any adjacent nodes $(j,k)$. Without loss of generality, let $j = \pa(k)$, then $X_k=\beta X_j +\eta_k$, and the correlation
\begin{align*}
    \rho(X_j,X_k) = \frac{\E[X_jX_k]}{\sqrt{\var(X_k)\var(X_j)}} = \beta \sqrt{\frac{\E[X_j^2]}{1 + \beta^2\E[X_j^2]}}
\end{align*}
Thus $\rho(X_j,X_k)\ge c \Leftrightarrow \beta^2\E[X_j^2] \ge \frac{c^2}{1-c^2}$. Since $\E(X^2_{j})\ge 1$, then $\beta^2\E(X_{j}^2) \ge \beta^2 = 2c^2 \ge \frac{c^2}{1-c^2}$ when $c^2 \le 1/2$. Now consider any pair of adjacent nodes $(j,k)$, assuming $j=\pa(k)$, and any other node $\ell\in [d]\setminus \{j,k\}$, there are 4 cases on the relation between $\ell$ and $(j,k)$:
\begin{enumerate}
    \item $\ell$ is ancestor of $j$, i.e. a directed path $\phi$: $\ell \to \phi_1 \to\cdots\to \phi_h \to j$;
    \item $j$ and $\ell$ share the same ancestor $w$, i.e. a directed path $\phi$: $w \to \phi_1 \to\cdots\to \phi_h \to j$ and a directed path $\varphi$: $w\to\varphi_1\to\cdots\to \varphi_g\to \ell$;
    \item $\ell$ is a descendant of $k$, i.e. a directed path $\phi$: $j\to k\to \phi_1\to\cdots\to \phi_h\to \ell$;
    \item $\ell$ is a descendant of $j$ but not $k$, i.e. a directed path $\phi$: $j\to \phi_1\to\cdots\to \phi_h\to \ell$ not going through $k$;
\end{enumerate}
where $h\ge 0$ in either case. We deal with them separately:
\begin{itemize}
    \item For the first and second case, because $X_\ell\indep \eta_k$, the conditional correlation is
    \begin{align*}
        \rho(X_k,X_j \given X_{\ell}) & = \frac{\E[X_kX_{j}\given X_{\ell}]}{\sqrt{\E(X_k^2\given X_{\ell})\E(X^2_{j}\given X_{\ell})}} \\
        & = \frac{\beta\E[X_j^2\given X_{\ell}]}{\sqrt{\E(X_j^2\given X_{\ell})(1+\beta^2\E(X^2_{j}\given X_{\ell}))}} \\
        & = \sqrt{\frac{\beta^2\E[X_j^2\given X_{\ell}]}{1+\beta^2\E(X^2_{j}\given X_{\ell})}} 
    \end{align*}
    Thus $\rho(X_k,X_{j} \given X_{\ell}) \ge c \Leftrightarrow \beta^2\E(X^2_{j}\given X_{\ell}) \ge \frac{c^2}{1-c^2}$. Since $X_{\phi_h}\indep \eta_j\given X_\ell$, we have $\E(X^2_{j}\given X_{\ell}) = 1 + \beta^2\E(X_{\phi_h}^2\given X_{\ell}) \ge 1$, then $\beta^2\E(X_{j}^2\given X_{\ell}) \ge \beta^2 = 2c^2 \ge \frac{c^2}{1-c^2}$ when $c^2 \le 1/2$.
    \item For the third case, denote $v=\E[X_j^2]$, let's compute the covariance matrix of $(X_k,X_{k},X_{\ell})$:
    \begin{align*}
        \begin{pmatrix}
         v & \beta v & \beta^{h+2}v\\
         \beta v & \beta^2 v + 1 & \beta^{h+1}(\beta^2 v + 1)\\
         \beta^{h+2}v & \beta^{h+1}(\beta^2 v + 1)& \beta^{2(h+2)} v + \beta^{2(h+1)}+\cdots + \beta^2 + 1
        \end{pmatrix}
    \end{align*}
    Denote $V(v,h)=\beta^{2(h+2)} v + \beta^{2(h+1)}+\cdots + \beta^2 + 1$. The covariance matrix of $(X_j,X_{k})$ given $X_{\ell}$ is
    \begin{align*}
        &\begin{pmatrix}
         v & \beta v \\ \beta v & \beta^2 v + 1
        \end{pmatrix}  - \frac{1}{V(v,h)}\begin{pmatrix}
         \beta^{2(h+2)}v^2 & \beta^{2h+3}v(\beta^2v + 1) \\
         \beta^{2h+3}v(\beta^2v + 1) & \beta^{2(h+1)} (\beta^2 v+1)^2
        \end{pmatrix} \\
        =& \frac{1}{V(v,h)}\bigg[
        \begin{pmatrix}
        \beta^{2(h+2)}v^2 + \beta^{2(h+1)}v + \cdots + \beta^2v+ v 
        & 
        \beta^{2(h+2)+1}v^2 + \beta^{2(h+1)+1}v + \cdots + \beta^3v + \beta v
        \\
        \beta^{2(h+2)+1}v^2 + \beta^{2(h+1)+1}v + \cdots + \beta^3v+\beta v
        & 
        (\beta^{2(h+2)+2}v^2 + \beta^{2(h+1)+2}v + \cdots +\beta^4v +  \beta^2 v \\
        & +\beta^{2(h+2)}v + \beta^{2(h+1)} + \cdots + \beta^2 + 1)
        \end{pmatrix} \\
        & - 
        \begin{pmatrix}
         \beta^{2(h+2)}v^2 & \beta^{2h+3}v(\beta^2v + 1) \\
         \beta^{2h+3}v(\beta^2v + 1) & \beta^{2(h+1)} (\beta^2 v+1)^2
        \end{pmatrix}  
        \bigg] \\
        = & \frac{1}{V(v,h)}\begin{pmatrix}
          (\beta^{2(h+1)}+\beta^{2h} + \cdots +\beta^2+ 1)v 
          & 
          (\beta^{2h}+ \beta^{2(h-1)}\cdots +\beta^2 +1)\beta v 
          \\
          (\beta^{2h}+ \beta^{2(h-1)}\cdots +\beta^2 +1)\beta v
          & 
          (\beta^{2h} + \beta^{2(h-1)} + \cdots + \beta^2 + 1)(\beta^2 v+ 1)
        \end{pmatrix} 
    \end{align*}
    Thus the conditional correlation is
    \begin{align*}
        \rho(X_j,X_{k}\given X_{\ell}) & = \frac{\beta v \times \frac{1-\beta^{2(h+1)}}{1-\beta^2}}{\sqrt{v\times \frac{1-\beta^{2(h+2)}}{1-\beta^2} \times (1+\beta^2 v) \times \frac{1-\beta^{2(h+1)}}{1-\beta^2}}} \\
        & = \sqrt{\frac{\beta^2 v}{1 +\beta^2 v} \times \frac{1-\beta^{2(h+1)}}{1-\beta^{2(h+2)}}}
    \end{align*}
    Denote $f(h) = \frac{1-\beta^{2(h+1)}}{1-\beta^{2(h+2)}} = 1 - \frac{(1 - \beta^2)\beta^{2(h+1)}}{1-\beta^{2(h+2)}}$, which is increasing in $h$ with minimum value being $f(0) = \frac{1}{1+\beta^2}$. Therefore, $\rho(X_j,X_{k}\given X_{\ell}) \ge c  \Leftrightarrow \beta^2 v \ge \frac{c^2}{f(h)-c^2}$. Since $v=\E[X_j^2]\ge 1$ for all $j\in[d]$, then $\beta^2 v \ge \beta^2 = 2c^2 \ge \frac{c^2}{1/(1+2c^2) - c^2} \ge \frac{c^2}{f(h) - c^2}$ when $c^2\le 1/5$, which yields the bound.
    \item For the forth case, analogously, denote $v=\E[X_j^2]$, let's compute the covariance matrix of $(X_k,X_{k},X_{\ell})$:
    \begin{align*}
        \begin{pmatrix}
         v & \beta v & \beta^{h+1}v\\
         \beta v & \beta^2 v + 1 & \beta^{h+2}v \\
         \beta^{h+1}v & \beta^{h+2}v& \beta^{2(h+1)}+\beta^{2h}+\cdots + \beta^2 + 1
        \end{pmatrix}
    \end{align*}
    Denote $W(v,h)=\beta^{2(h+1)}+\beta^{2h} v + +\cdots + \beta^2 + 1$. The covariance matrix of $(X_j,X_{k})$ given $X_{\ell}$ is
    \begin{align*}
        &\begin{pmatrix}
         v & \beta v \\ \beta v & \beta^2 v + 1
        \end{pmatrix}  - \frac{1}{W(v,h)}\begin{pmatrix}
         \beta^{2(h+1)}v^2 & \beta^{2h+3}v^2 \\
         \beta^{2h+3}v^2 & \beta^{2(h+2)}v^2
        \end{pmatrix} \\
        =& \frac{1}{W(v,h)}\bigg[
        \begin{pmatrix}
        \beta^{2(h+1)}v^2 + \beta^{2h}v + \cdots + \beta^2v+ v 
        & 
        \beta^{2(h+1)+1}v^2 + \beta^{2h+1}v + \cdots + \beta^3v + \beta v
        \\
        \beta^{2(h+1)+1}v^2 + \beta^{2h+1}v + \cdots + \beta^3v+\beta v
        & 
        (\beta^{2(h+1)+2}v^2 + \beta^{2h+2}v + \cdots +\beta^4v +  \beta^2 v \\
        & +\beta^{2(h+1)}v + \beta^{2h} + \cdots + \beta^2 + 1)
        \end{pmatrix} \\
        & - 
        \begin{pmatrix}
         \beta^{2(h+1)}v^2 & \beta^{2h+3}v^2 \\
         \beta^{2h+3}v^2 & \beta^{2(h+2)}v^2
        \end{pmatrix} 
        \bigg] \\
        = & \frac{1}{W(v,h)}\begin{pmatrix}
          (\beta^{2h} +\beta^{2(h-1)}+ \cdots +\beta^2+ 1)v 
          & 
          (\beta^{2h}+ \beta^{2(h-1)}\cdots +\beta^2 +1)\beta v 
          \\
          (\beta^{2h}+ \beta^{2(h-1)}\cdots +\beta^2 +1)\beta v
          & 
          (\beta^{2h} + \beta^{2(h-1)} + \cdots + \beta^2 + 1)(\beta^2 v+ 1) + \beta^{2h+2}v
        \end{pmatrix} 
    \end{align*}
    Thus the conditional correlation is
    \begin{align*}
        \rho(X_j,X_{k}\given X_{\ell}) & = \frac{\beta v \times \frac{1-\beta^{2(h+1)}}{1-\beta^2}}{\sqrt{v\times \frac{1-\beta^{2(h+1)}}{1-\beta^2} \times\bigg[ (1+\beta^2 v) \times \frac{1-\beta^{2(h+1)}}{1-\beta^2} + \beta^{2h} \times \beta^2v\bigg]}} \\
        & = \sqrt{\frac{\beta^2 v}{1 +\big(1+\frac{\beta^{2h}}{g(h)} \big) \beta^2v}}
    \end{align*}
    where $ g(h) = \frac{1-\beta^{2(h+1)}}{1-\beta^2}\ge 1$ for $h\ge 0$. Since $\beta^2=2c^2\le 1$, then $1 + \beta^{2h}/g(h) \le 2$. Since $\rho(X_j,X_k\given X_\ell)\ge c \Leftrightarrow \beta^2v \ge \frac{c^2}{1 - \big(1 + \frac{\beta^{2h}}{g(h)}\big)c^2}$, and $v=\E[X_j^2]\ge 1$ for all $j\in[d]$, then $\beta^2 v \ge \beta^2 = 2c^2 \ge \frac{c^2}{1 - 2c^2} \ge \frac{c^2}{1 - \big(1 + \frac{\beta^{2h}}{g(h)}\big)c^2}$ when $c^2\le 1/5$, which completes the proof.
\end{itemize}

\end{proof}

\section{Experiments}\label{app:expt}

\paragraph{Synthetic Data Generation} 
We generate trees using package \texttt{networkx}, then randomly pick a node as root and orient it into a directed tree. 
We consider number of nodes $d \in \{10, 50, 100\}$. To generate the data as in~\eqref{eq:sem}, we uniformly sample $\beta_k$ from the interval $(-0.5, 0.1] \cup [0.1, 0.5)$ as our coefficient weight. For sample size $n = \{1000, 2000, 3000, 4000, 5000\}$, we generate our i.i.d. samples $X \in \R^{n\times d}$ according to~\eqref{eq:sem}, where $\eta \sim \N(\bold{0}, I_{d\times d})$. Besides, we also present experiments on agnostic setting where $\eta \sim \unif(-1, 1)$ is uniform distribution, or $\eta \sim \text{Laplace}(0, 1)$ is Laplace distribution.

\paragraph{Baselines}
We have employed two baseline algorithms: the PC algorithm has been executed using the Python package \texttt{Causal-learn}, while the GES algorithm has been implemented with \texttt{py-tetrad}.

\paragraph{Evaluation}
For each experiment setup, we report the average (over 50 random instantiations) Structural Hamming Distance (SHD) between the ground truth and our estimated graph skeleton, and the Precise Recovery Rate (PRR), which is the frequency of exact recovery of the tree skeleton. Results are reported in \cref{fig:gaussian_10}-\ref{fig:laplace_100}.
All experiments were conduced on an Intel Core i7-12800H 2.40GHz CPU.

\begin{figure*}
\centering 
\subfigure[SHD]{\label{fig:gaussian_10_shd}\includegraphics[width=82mm]{\detokenize{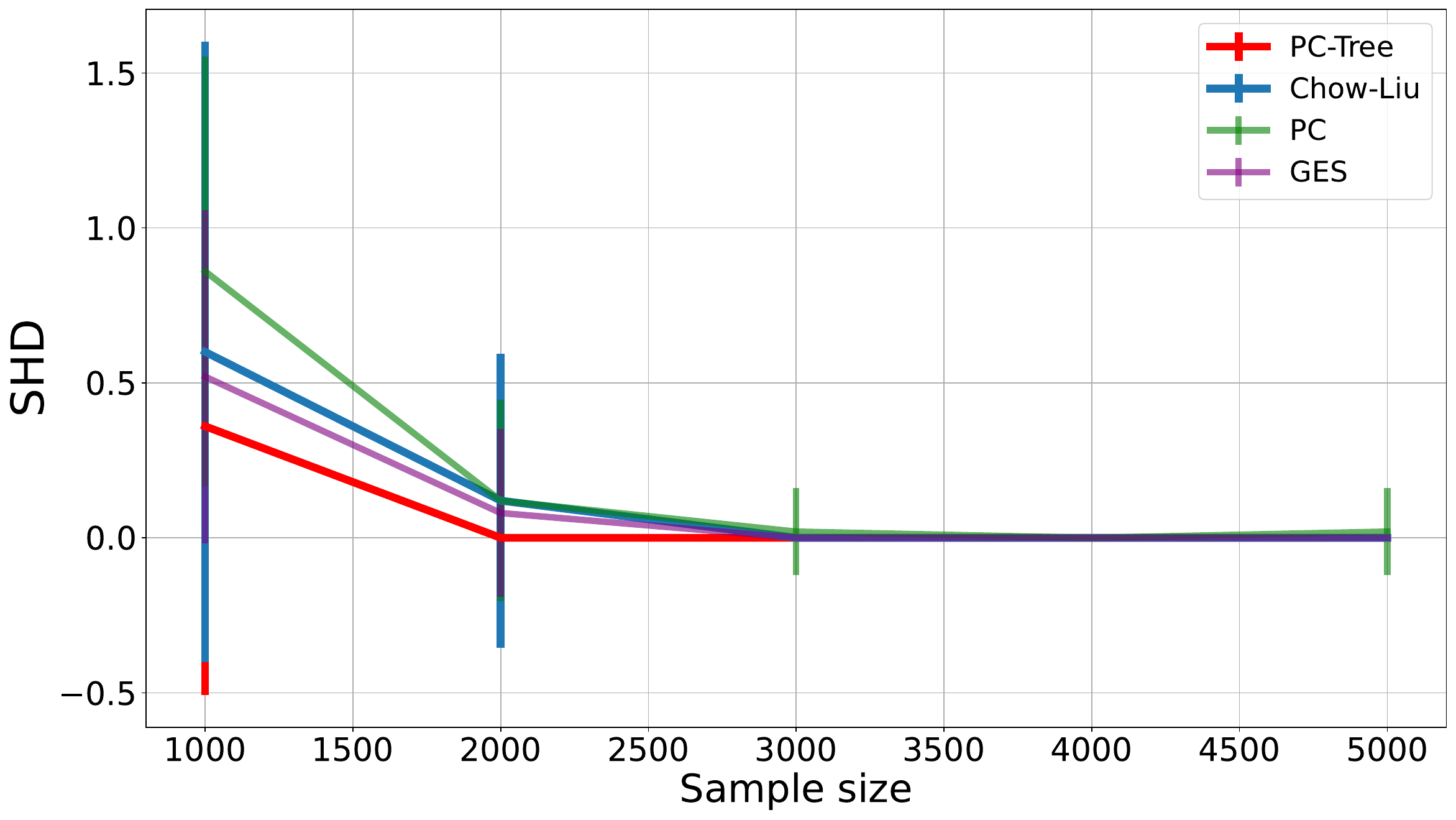}}}
\subfigure[PRR]{\label{fig:gaussian_10_prr}\includegraphics[width=82mm]{\detokenize{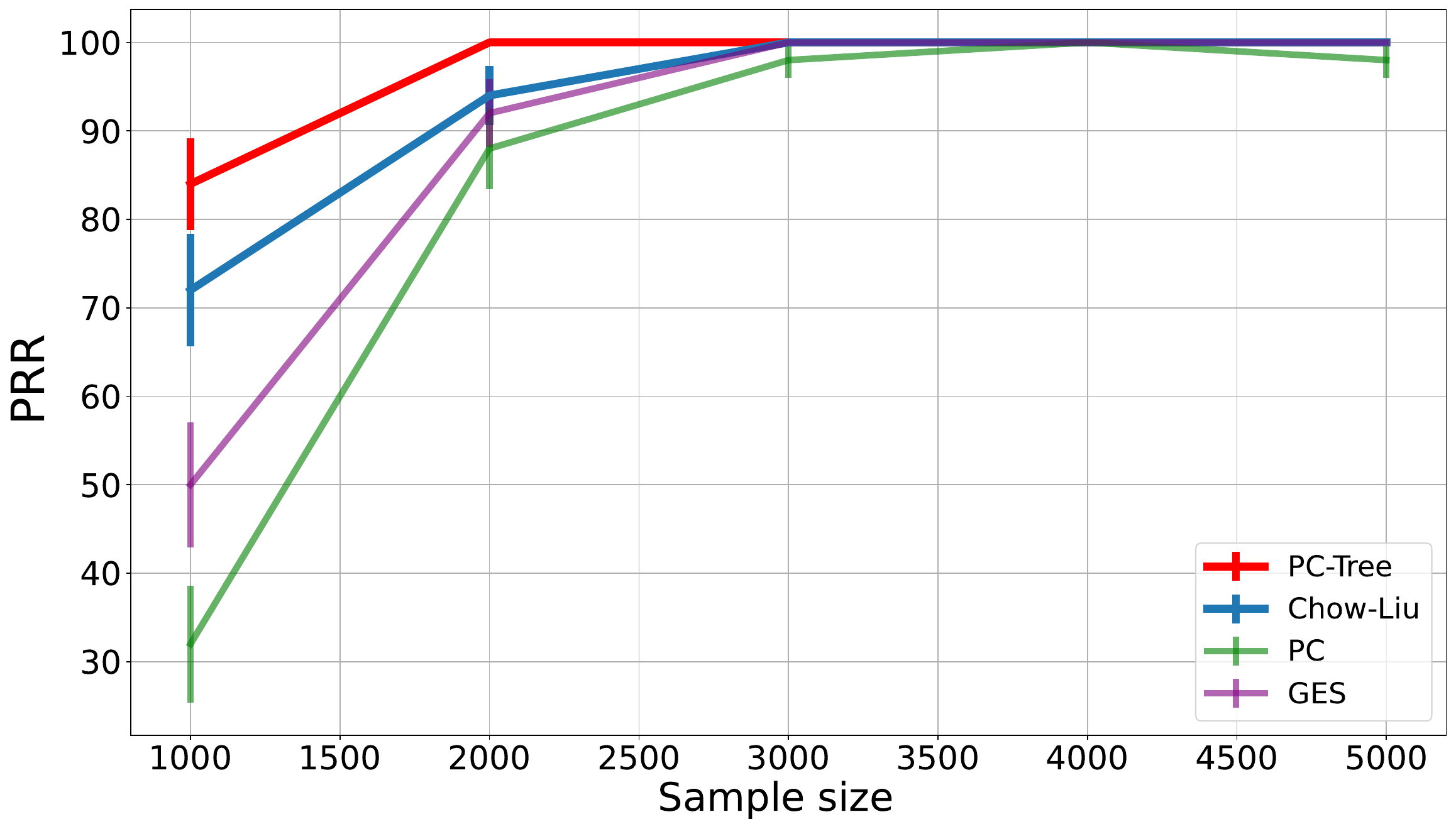}}}
\caption{SHD and PRR for Gaussian $\eta$ and $d=10$.}
\label{fig:gaussian_10}
\end{figure*}

\begin{figure*}
\centering 
\subfigure[SHD]{\label{fig:gaussian_50_shd}\includegraphics[width=82mm]{\detokenize{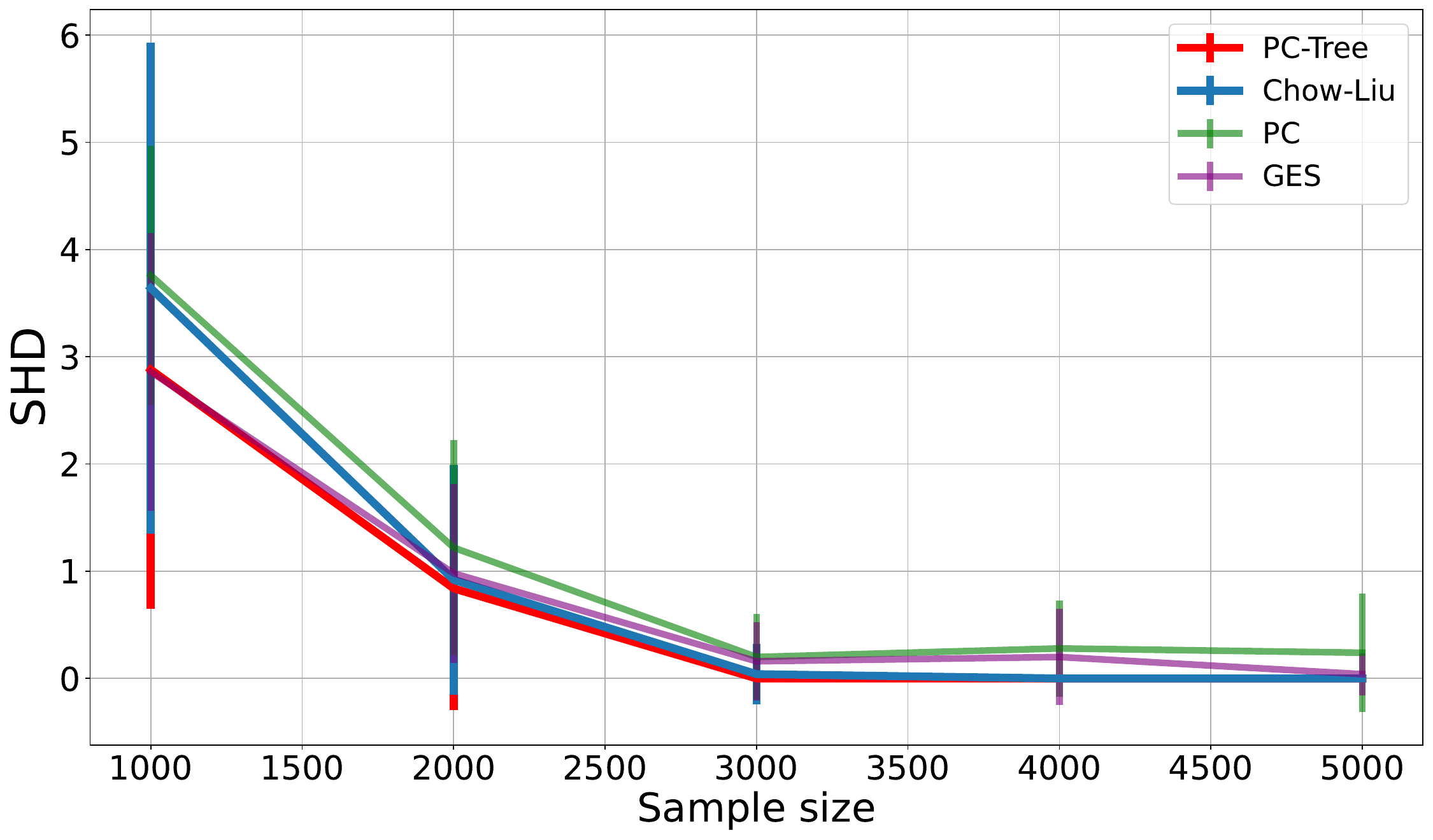}}}
\subfigure[PRR]{\label{fig:gaussian_50_prr}\includegraphics[width=82mm]{\detokenize{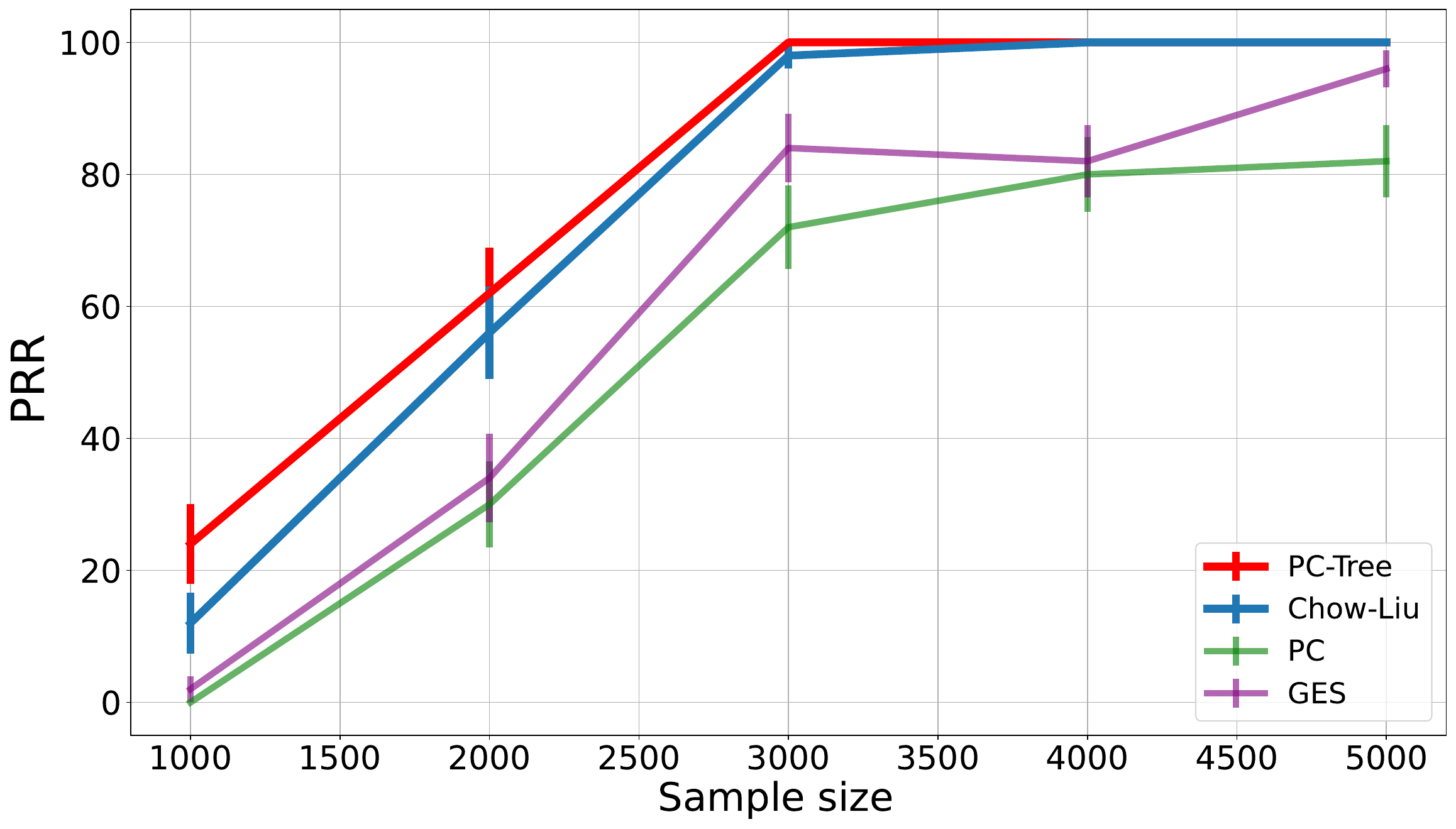}}}
\caption{SHD and PRR for Gaussian $\eta$ and $d=50$.}
\label{fig:gaussian_50}
\end{figure*}

\begin{figure*}
\centering 
\subfigure[SHD]{\label{fig:uniform_10_shd}\includegraphics[width=82mm]{\detokenize{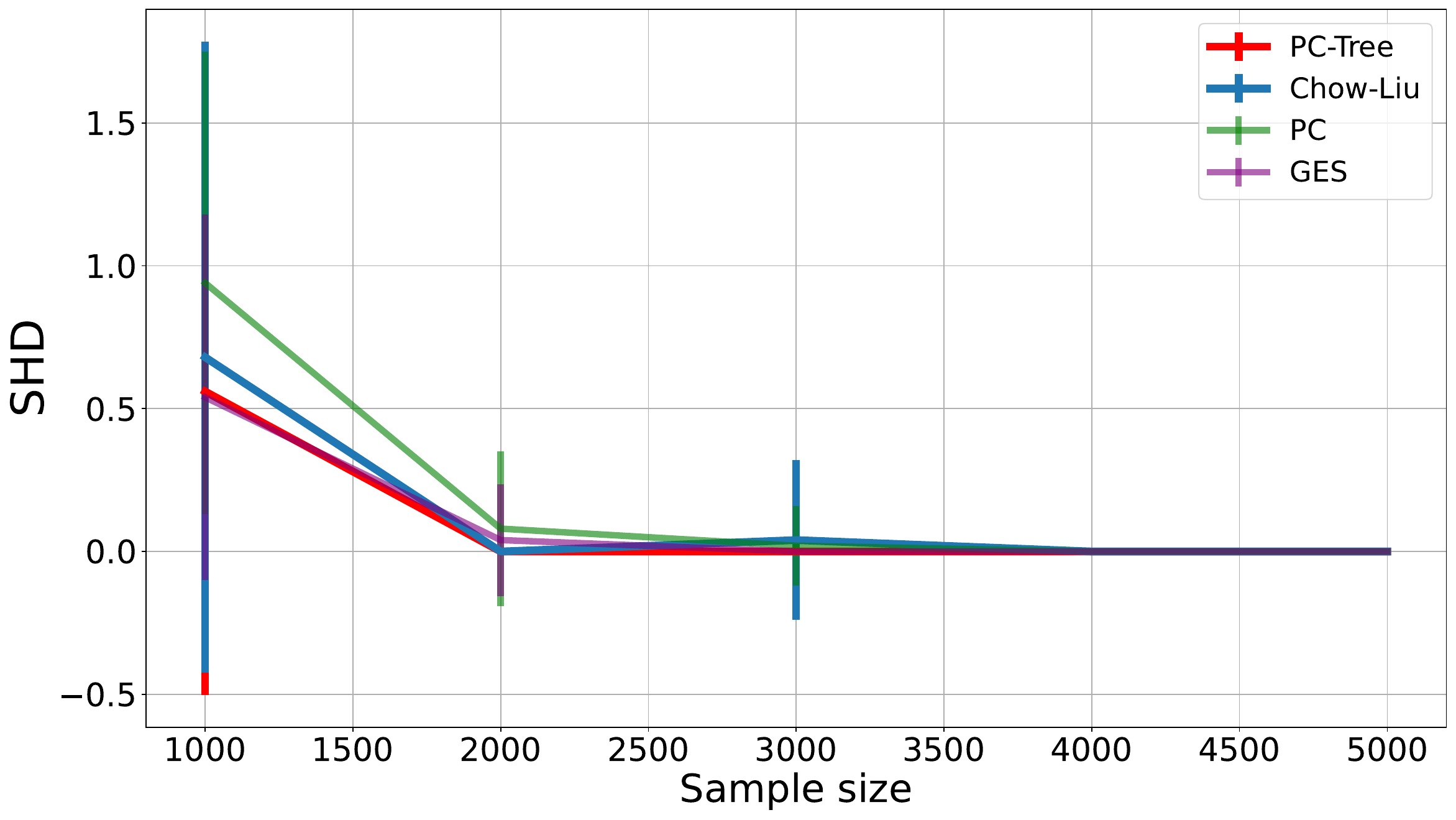}}}
\subfigure[PRR]{\label{fig:uniform_10_prr}\includegraphics[width=82mm]{\detokenize{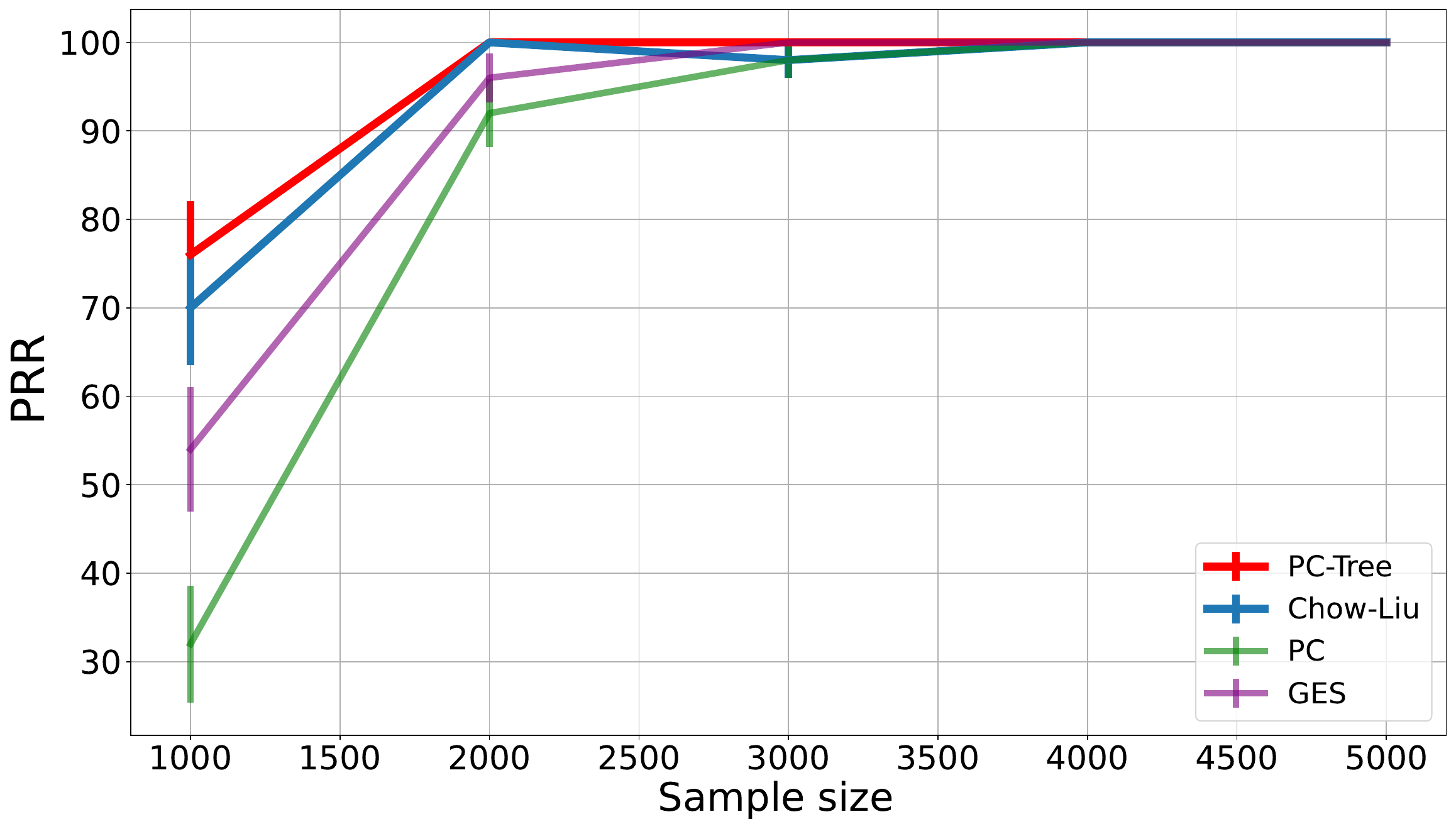}}}
\caption{SHD and PRR for Uniform $\eta$ and $d=10$.}
\label{fig:uniforme_10}
\end{figure*}

\begin{figure*}
\centering 
\subfigure[SHD]{\label{fig:uniform_50_shd}\includegraphics[width=82mm]{\detokenize{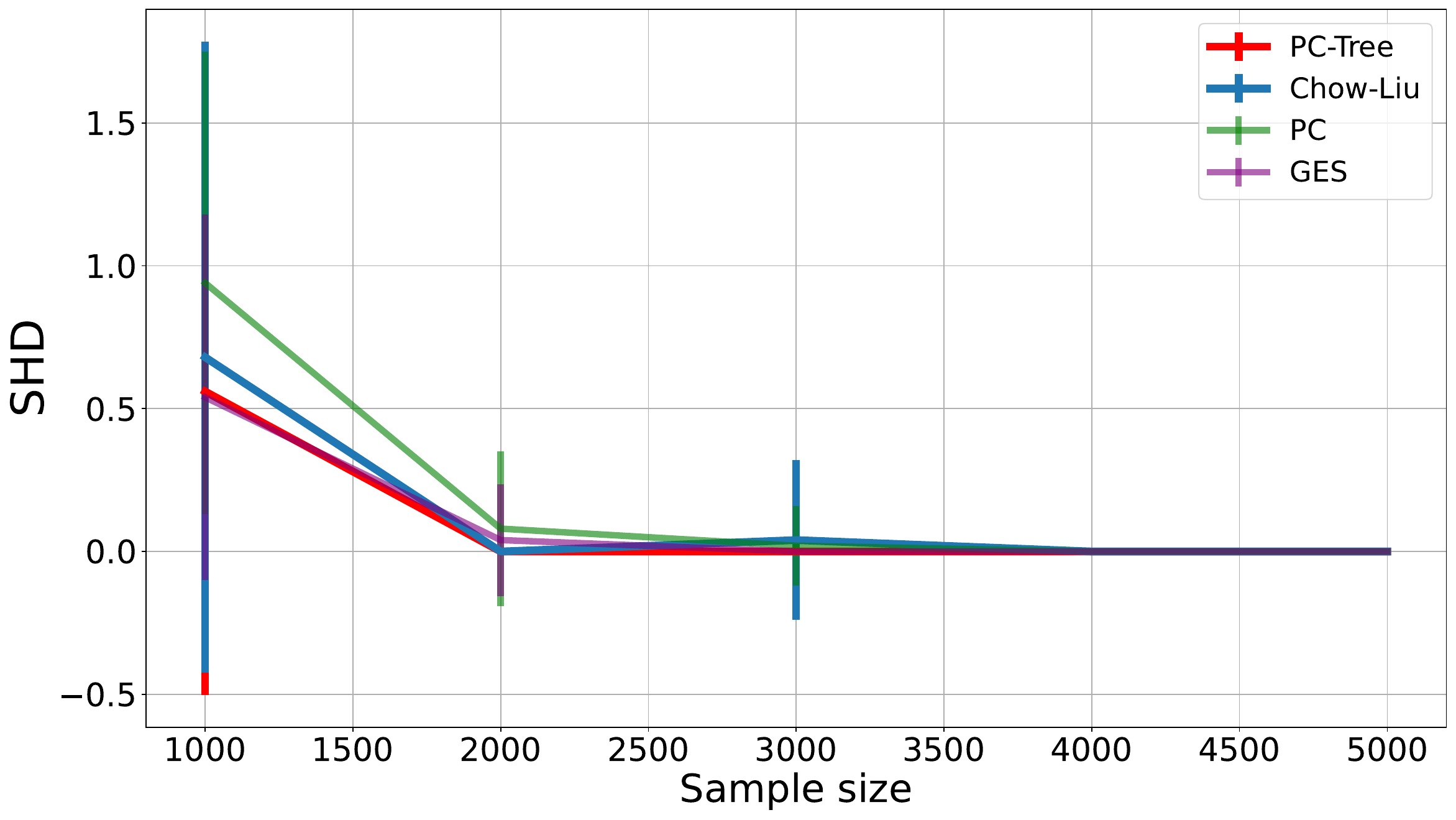}}}
\subfigure[PRR]{\label{fig:uniform_50_prr}\includegraphics[width=82mm]{\detokenize{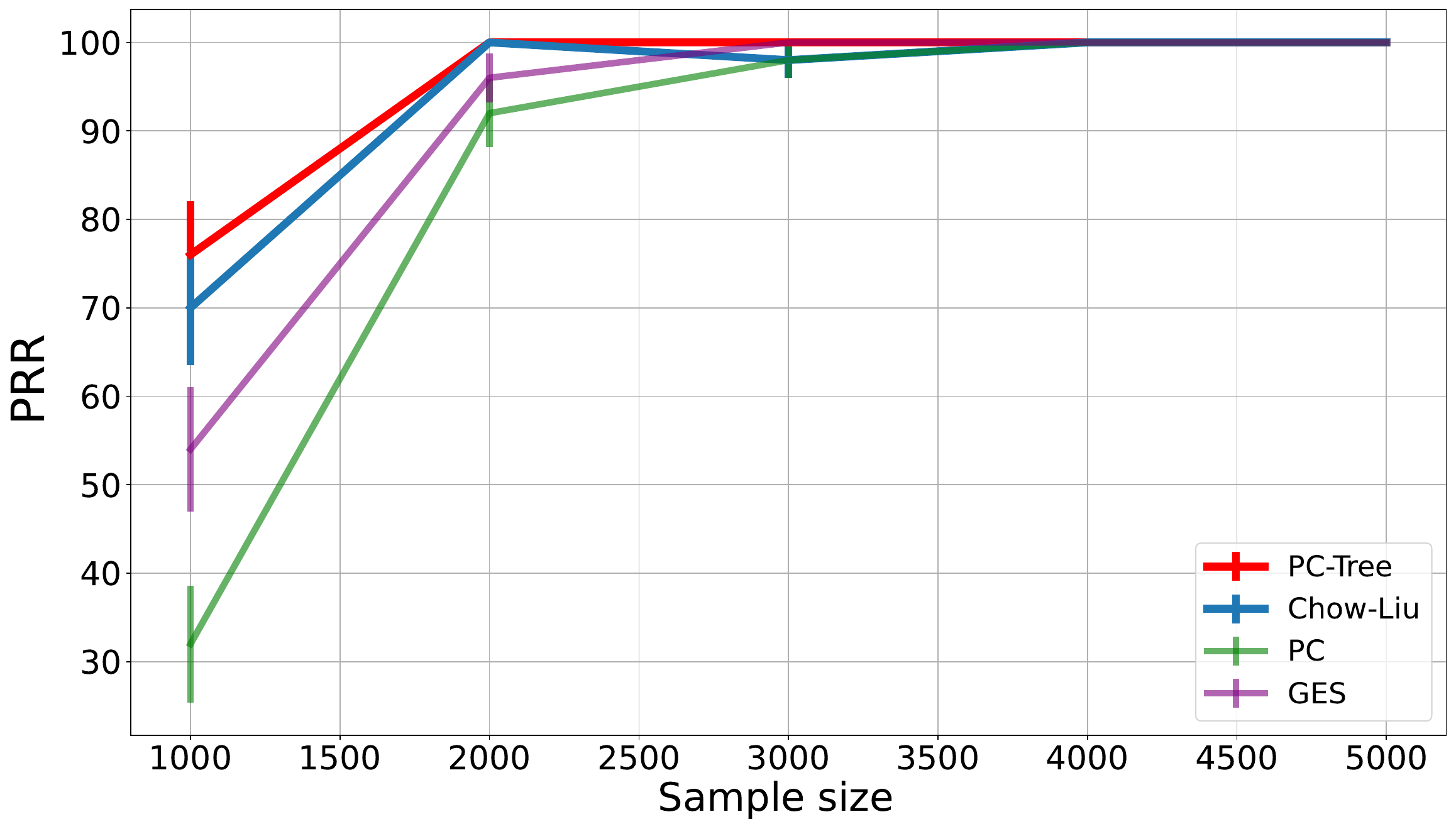}}}
\caption{SHD and PRR for Uniform $\eta$ and $d=50$.}
\label{fig:uniform_50}
\end{figure*}

\begin{figure*}
\centering 
\subfigure[SHD]{\label{fig:uniform_100_shd}\includegraphics[width=82mm]{\detokenize{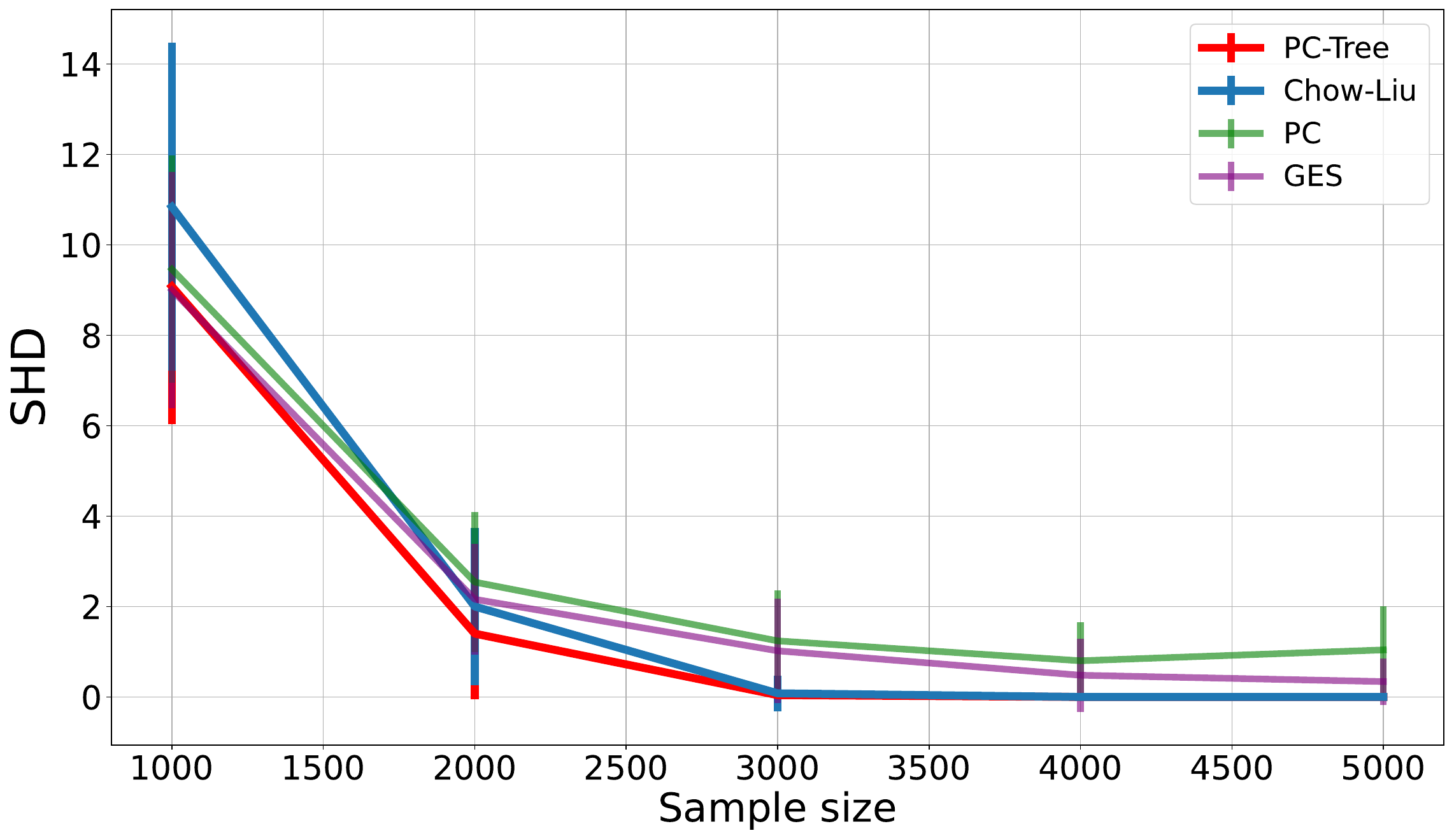}}}
\subfigure[PRR]{\label{fig:uniform_100_prr}\includegraphics[width=82mm]{\detokenize{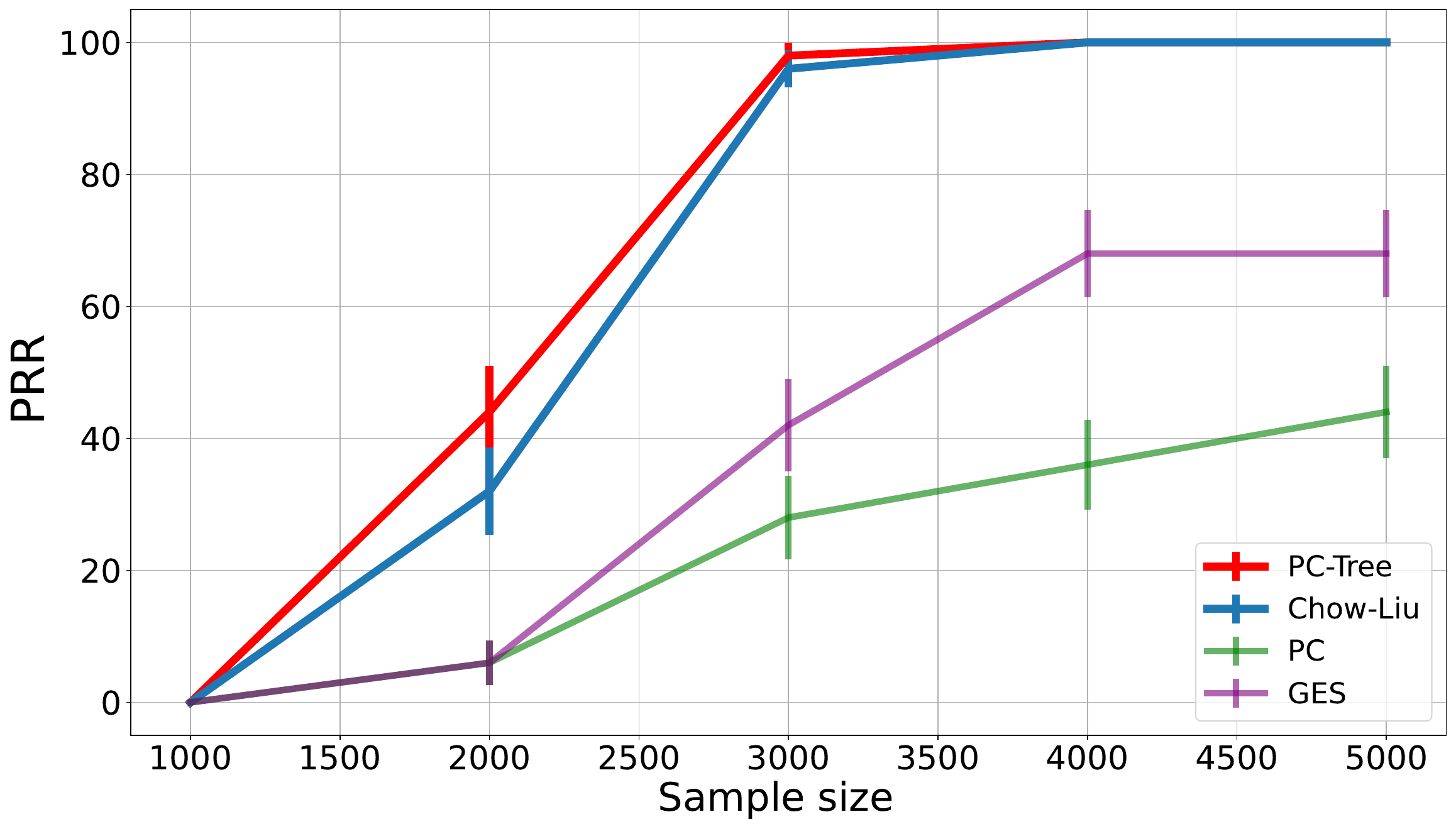}}}
\caption{SHD and PRR for Uniform $\eta$ and $d=100$.}
\label{fig:uniform_100}
\end{figure*}

\begin{figure*}
\centering 
\subfigure[SHD]{\label{fig:laplace_10_shd}\includegraphics[width=82mm]{\detokenize{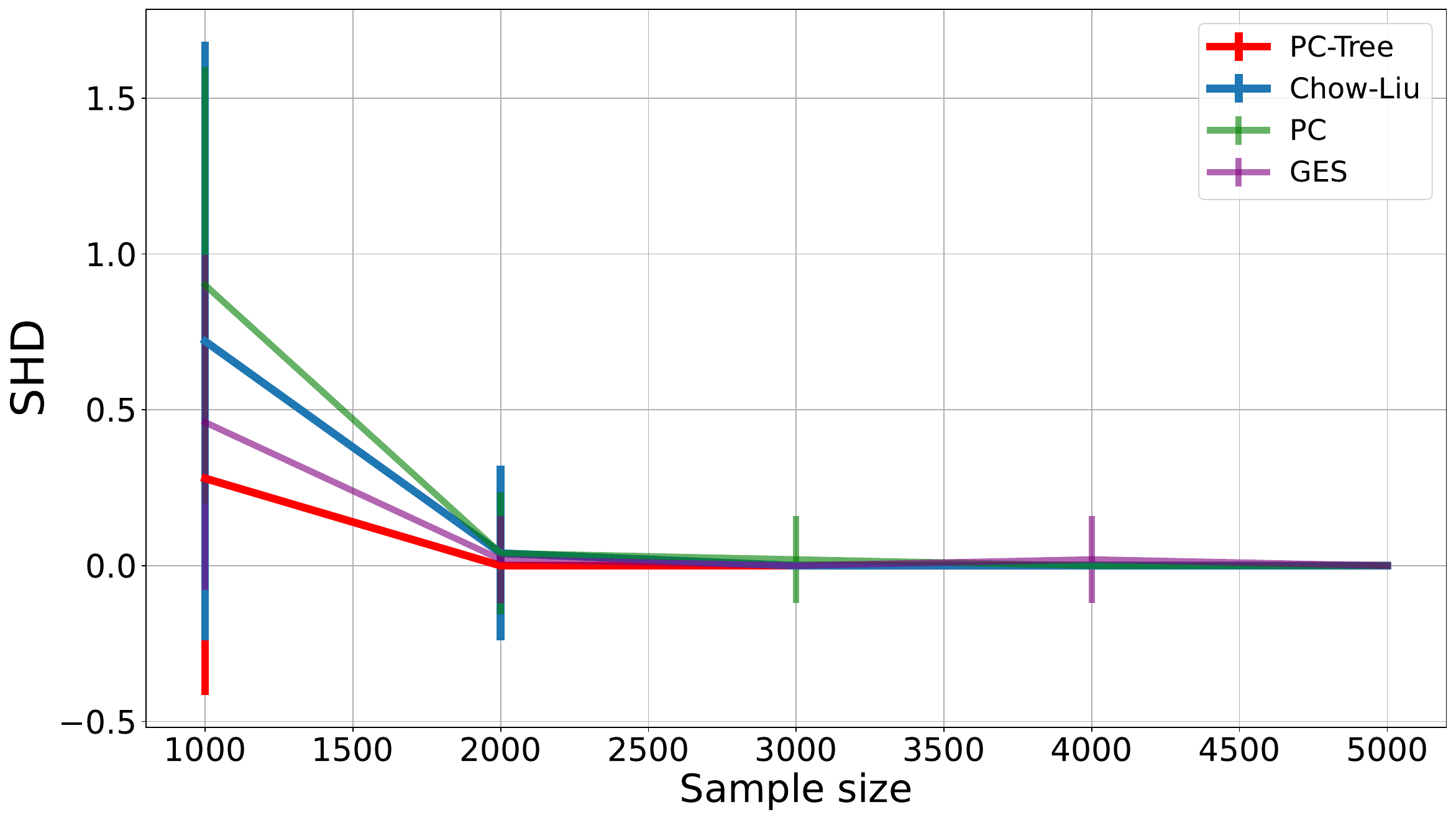}}}
\subfigure[PRR]{\label{fig:laplace_10_prr}\includegraphics[width=82mm]{\detokenize{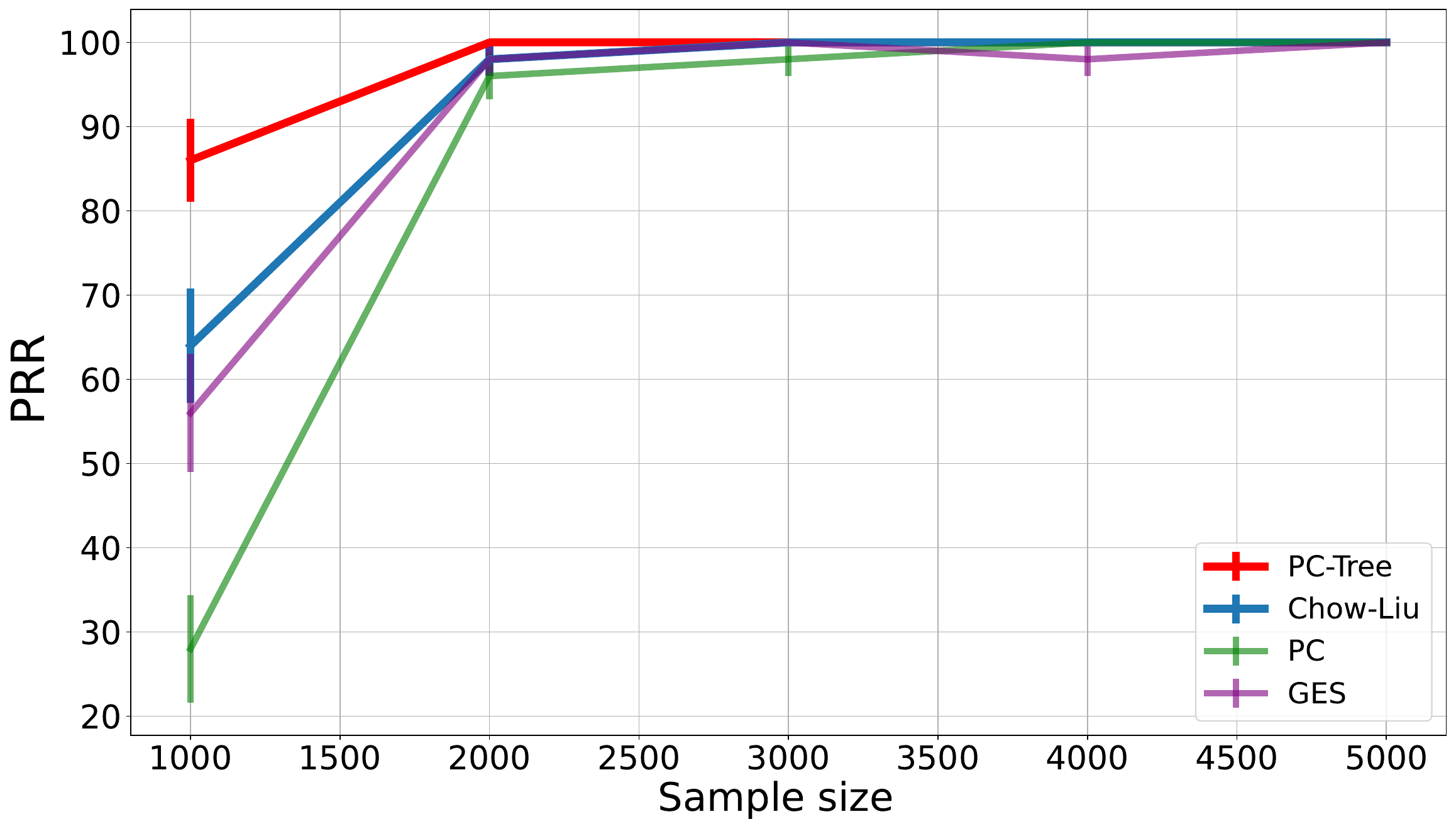}}}
\caption{SHD and PRR for Laplace $\eta$ and $d=10$.}
\label{fig:laplace_10}
\end{figure*}

\begin{figure*}
\centering 
\subfigure[SHD]{\label{fig:laplace_50_shd}\includegraphics[width=82mm]{\detokenize{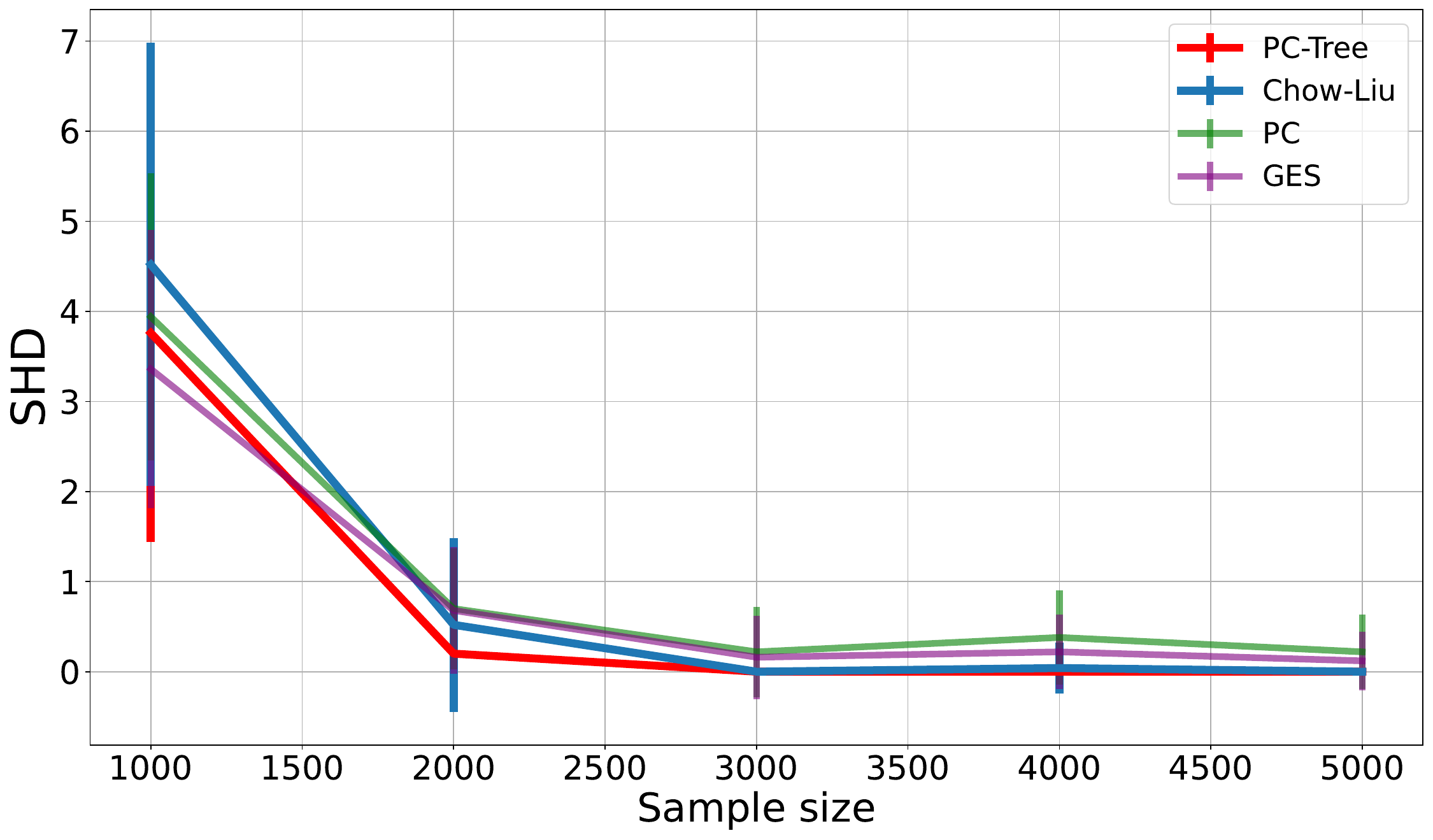}}}
\subfigure[PRR]{\label{fig:laplace_50_prr}\includegraphics[width=82mm]{\detokenize{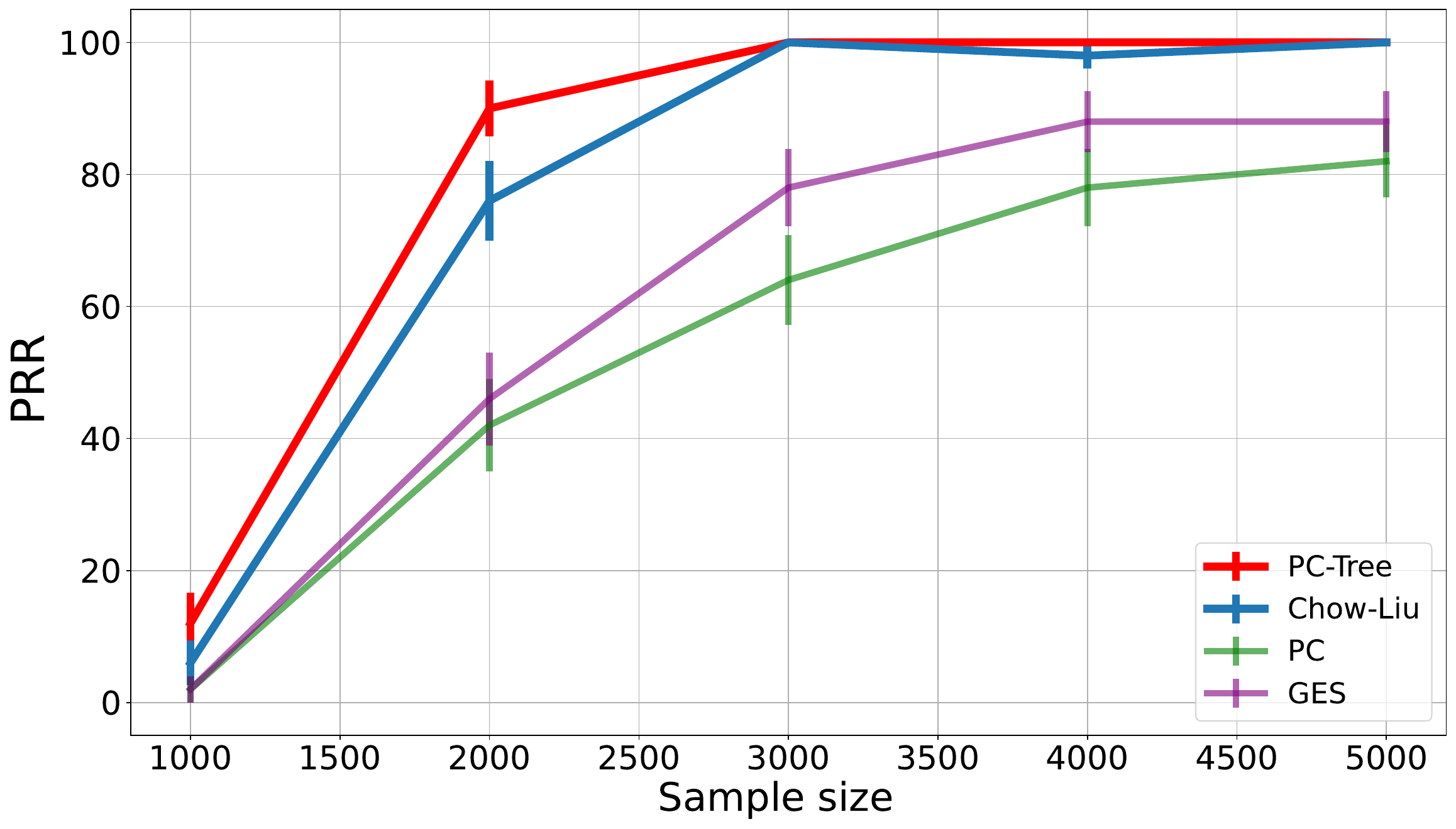}}}
\caption{SHD and PRR for Laplace $\eta$ and $d=50$.}
\label{fig:laplace_50}
\end{figure*}

\begin{figure*}
\centering 
\subfigure[SHD]{\label{fig:laplace_100_shd}\includegraphics[width=82mm]{\detokenize{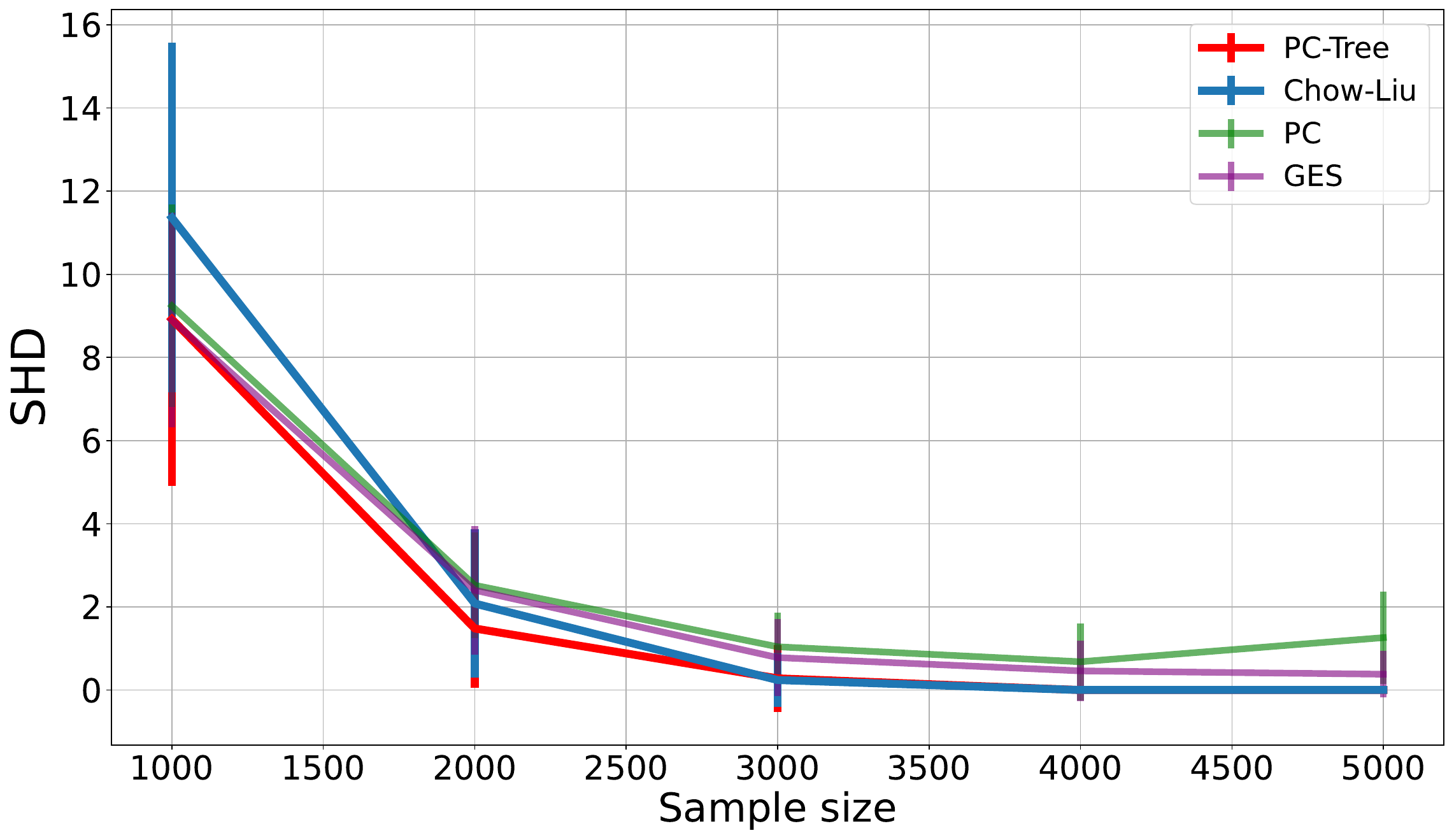}}}
\subfigure[PRR]{\label{fig:laplace_100_prr}\includegraphics[width=82mm]{\detokenize{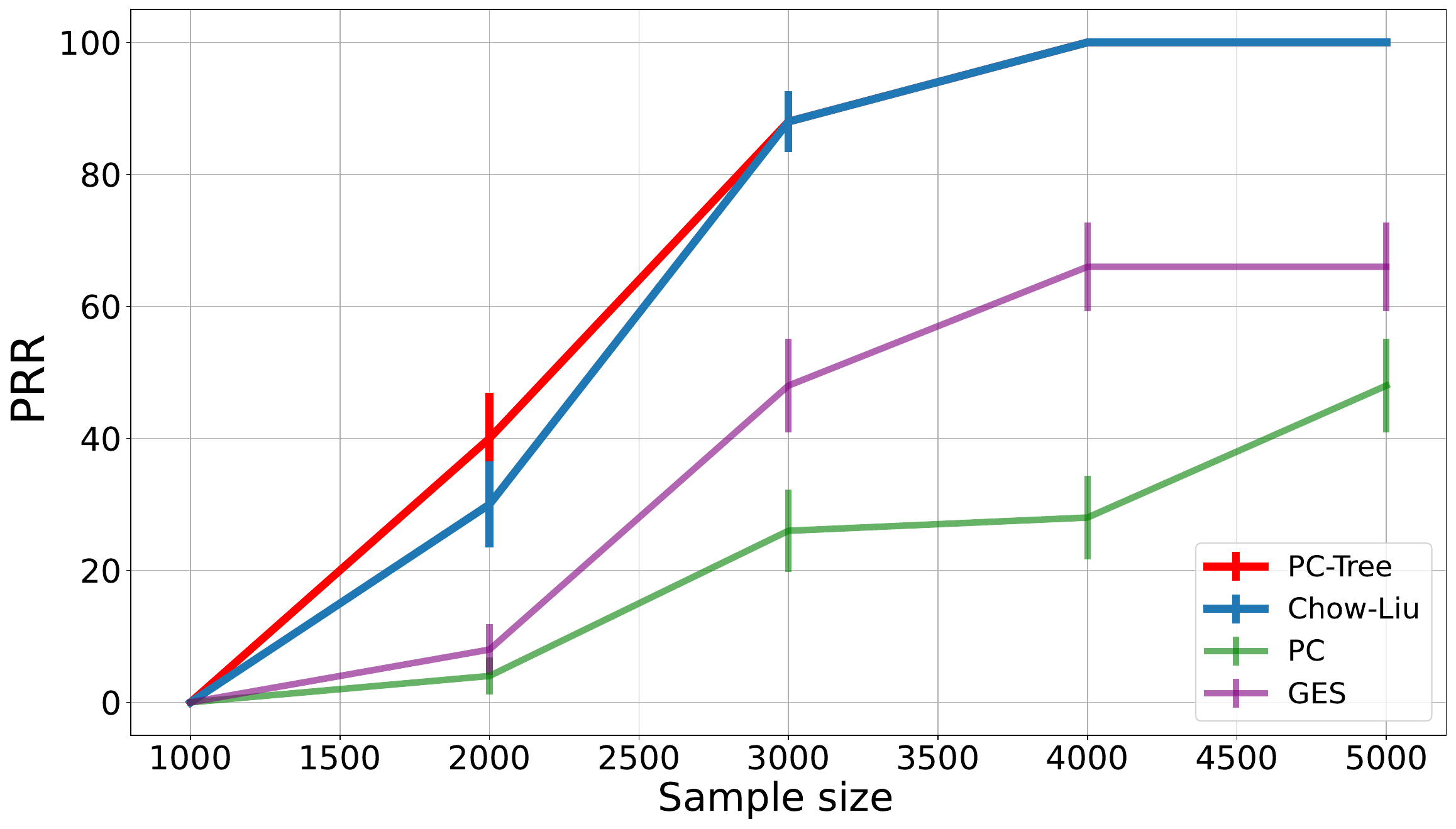}}}
\caption{SHD and PRR for Laplace $\eta$ and $d=100$.}
\label{fig:laplace_100}
\end{figure*}

\paragraph{Agnostic Learning} 
\begin{figure*}[hbt!]
\centering 
\subfigure[SHD comparison (non-i.i.d.)]{\label{fig:gaussian_100_shd_confounder}\includegraphics[width=82mm]{\detokenize{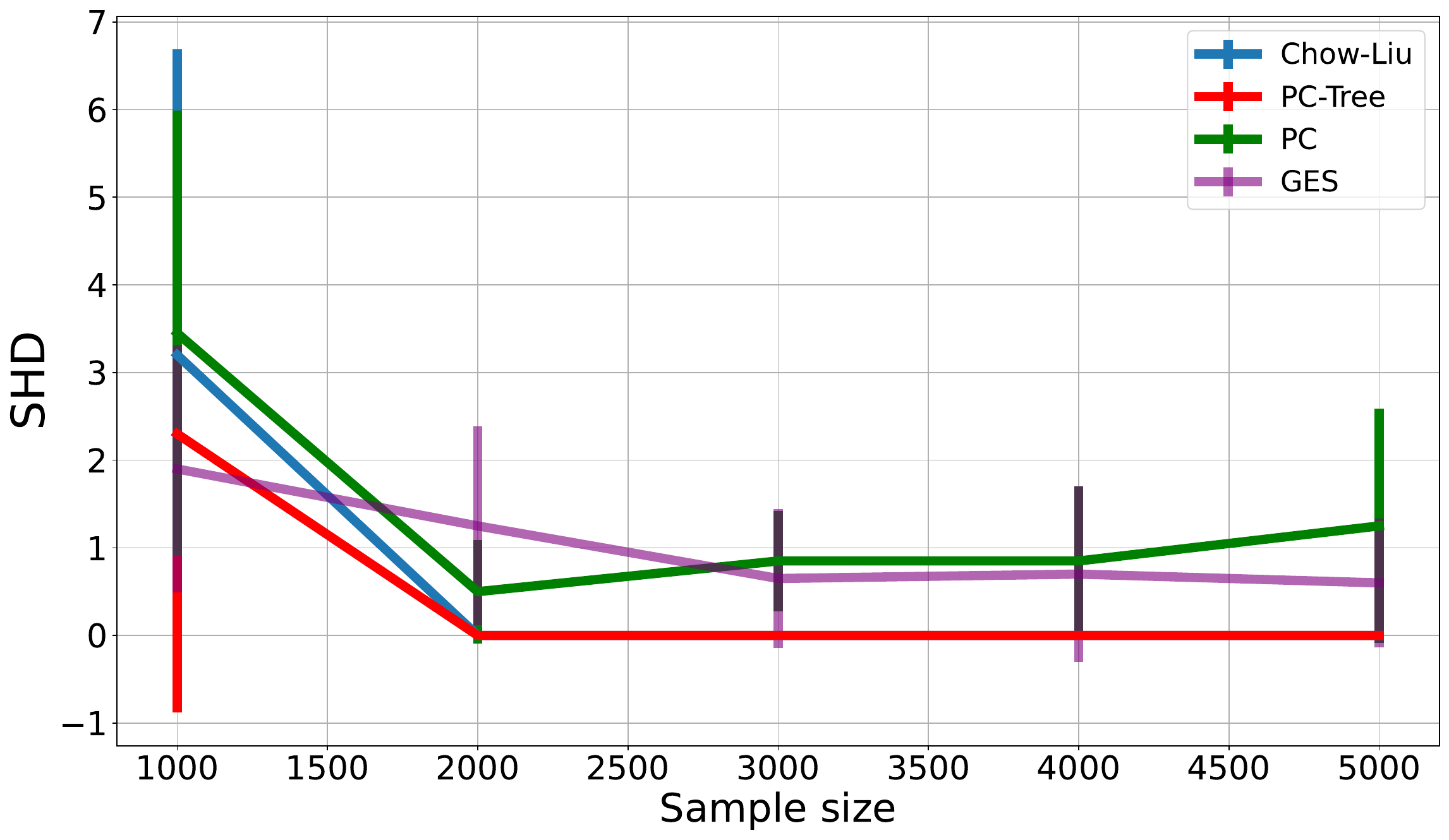}}}
\hspace{-0.1cm}
\subfigure[PRR comparison (non-i.i.d.)]{\label{fig:gaussian_100_prr_confounder}\includegraphics[width=82mm]{\detokenize{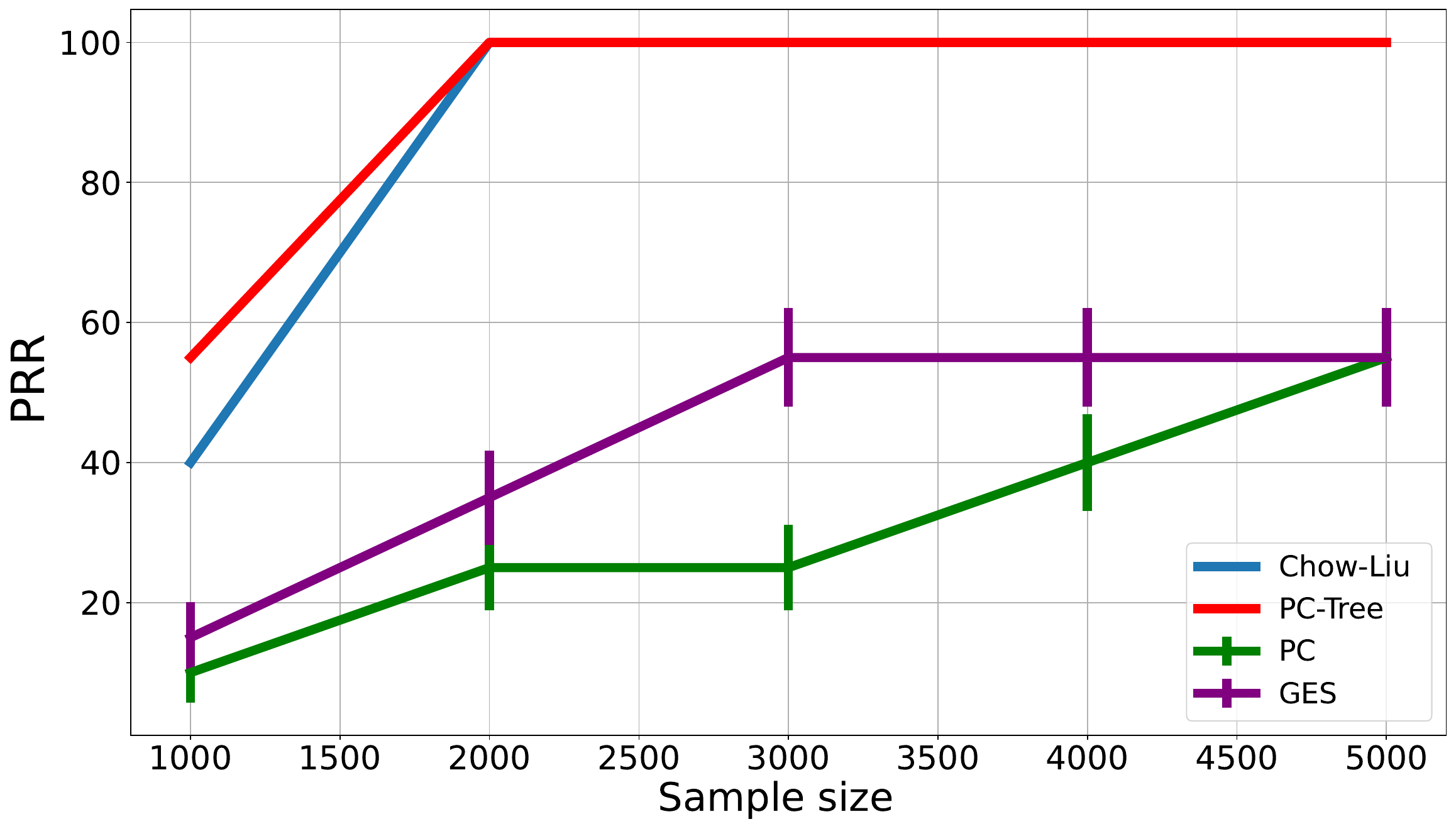}}}
\caption{Performance comparison for PC-Tree, Chow-Liu, PC and GES algorithm evaluated on SHD and PRR in (a) and (b) for non-iid $\beta_k$. The red, blue, green, purple lines are for PC-Tree, Chow-Liu, PC and GES respectively.}
\label{fig:agnoistic_gauss_shd_prr}
\end{figure*}
Additionally, we investigated the algorithm's performance under conditions where the assumption is violated. Specifically, we examined the impact on our algorithm's performance when the coefficients $\beta_k$ in \eqref{eq:sem} are not independently and identically distributed (i.i.d.). To address this question, we conducted agnostic learning experiments and present the corresponding results.

See \cref{fig:agnoistic_gauss_shd_prr} for results with non-iid $\beta_k$. Specifically, $\beta_k = \alpha_k + z$, where we sample $\alpha_k$ iid uniformly and $z$ uniformly, applying the same $z$ to all $\alpha_k$. Here, $z$ introduces dependence among $\beta_k$. When $z=0$, $\beta_k$ is i.i.d., and when $z \neq 0$, $\beta_k$ is non-i.i.d. For brevity, we only report the most relevant setting with $d=100$ nodes and data are Gaussian. 
We simulated random directed trees and synthetic data via equation ~\eqref{eq:sem}. We can see the performance of both PC-tree and Chow-Liu are less affected even when $\beta_k$ are non i.i.d: The Structural Hamming Distance (SHD) becomes 0 in both i.i.d and non i.i.d. setting, and the Precise Recovery Rate (PRR) also outperforms other methods. 
\end{document}